\icmltitlerunning{Infinite-Width Limits of Neural Classifiers}
\newcommand{\MM}{\mathcal{M}}
\newcommand{\BB}{\mathcal{B}}
\newcommand{\PP}{\mathcal{P}}
\newcommand{\LL}{\mathcal{L}}
\newcommand{\TT}{\mathcal{T}}
\newcommand{\WW}{\mathcal{W}}
\newcommand{\XX}{\mathcal{X}}
\newcommand{\EE}{\mathbb{E}\,}
\newcommand{\DD}{\mathcal{D}}
\newcommand{\RR}{\mathbb{R}}
\newcommand{\NN}{\mathcal{N}}
\newcommand{\ww}{\mathbf{w}}
\newcommand{\Lip}{\mathrm{Lip}}
\newcommand{\xx}{\mathbf{x}}
\renewcommand{\aa}{\mathbf{a}}
\newcommand{\Var}{\mathbb{V}\mathrm{ar}\,}
\newcommand{\const}{\mathrm{const}}
\newtheorem{thm}{Theorem}
\newtheorem{lemma}{Lemma}
\newtheorem{corollary}{Corollary}
\begin{document}

\twocolumn[
\icmltitle{Towards a General Theory of Infinite-Width Limits of Neural Classifiers}



\icmlsetsymbol{equal}{*}

\begin{icmlauthorlist}
\icmlauthor{Eugene A. Golikov}{mipt}
\end{icmlauthorlist}

\icmlaffiliation{mipt}{Neural Networks and Deep Learning lab., Moscow Institute of Physics and Technology, Moscow, Russia}

\icmlcorrespondingauthor{Eugene A. Golikov}{golikov.ea@mipt.ru}

\icmlkeywords{mean-field limit, neural tangent kernel}

\vskip 0.3in
]



\printAffiliationsAndNotice{}  

\begin{abstract}

    Obtaining theoretical guarantees for neural networks training appears to be a hard problem in a general case. 
    Recent research has been focused on studying this problem in the limit of infinite width and two different theories have been developed: a mean-field (MF) and a constant kernel (NTK) limit theories. 
    We propose a general framework that provides a link between these seemingly distinct theories. 
    Our framework out of the box gives rise to a discrete-time MF limit which was not previously explored in the literature. 
    We prove a convergence theorem for it, and show that it provides a more reasonable approximation for finite-width nets compared to the NTK limit if learning rates are not very small. 
    Also, our framework suggests a limit model that coincides neither with the MF limit nor with the NTK one.
    We show that for networks with more than two hidden layers RMSProp training has a non-trivial discrete-time MF limit but GD training does not have one. 
    Overall, our framework demonstrates that both MF and NTK limits have considerable limitations in approximating finite-sized neural nets, indicating the need for designing more accurate infinite-width approximations for them.
\end{abstract}

\section{Introduction}
\label{sec:intro}

Despite neural networks' great success in solving a variety of problems, theoretical guarantees for their training are scarce and far from being practical.
It turns out that neural models of finite size are very complex objects to study since they usually induce a non-convex loss landscape. This makes it highly non-trivial to obtain any theoretical guarantees for the gradient descent training.

However theoretical analysis becomes tractable in the limit of infinite width.
In particular, \cite{jacot2018ntk} showed that if weights are parameterized in a certain way then the continuous-time gradient descent on neural network parameters converges to a solution of a kernel method.
The corresponding kernel is called a neural tangent kernel (NTK).

Another line of work studies a mean-field (MF) limit of the training dynamics of neural nets with a single hidden layer \cite{mei2018mf,mei2019mf,rotskoff2019mf,sirignano2018lln,chizat2018mf,yarotsky2018mf}.
In these works a neural net output is scaled differently compared to the work on NTK.

In our work we address several questions arising in this context:
\begin{enumerate}
    \item Which of these two limits appears to be a more reasonable approximation for a finite-width network?
    \item Do the two above-mentioned limits cover all possible limit models for neural networks?
    \item Is it possible to construct a non-trivial mean-field limit for a multi-layer network?
\end{enumerate}

The paper is organized as follows.
In Section~\ref{sec:related} we provide a brief review of the relevant studies.
In Section~\ref{sec:1hid_gd} we consider hyperparameter scalings that lead to non-trivial infinite-width limits for neural nets with a single hidden layer.
Our analysis clearly shows that MF and NTK limits are not the only possible ones. 
Also, our analysis suggests a discrete-time MF limit which appears to be a more reasonable approximation for a finite-sized neural network than the NTK limit if learning rates are not very small.
We stress the difference between this discrete-time MF limit and a continuous-time one described in previous works and prove a convergence theorem for it.
In Section~\ref{sec:Hhid_gd} we show that when a neural net has at least three hidden layers a discrete-time MF limit becomes vanishing.
Nevertheless, training a network with RMSProp instead of a plain gradient descent leads to a non-trivial discrete-time MF limit for any number of layers.


\section{Related work}
\label{sec:related}

\paragraph{NTK limit.}
In their pioneering work \citet{jacot2018ntk} considered a multi-layer feed-forward network parameterized as follows:
\begin{equation}
    f(\xx; W_{1:L}) =
    d_{L-1}^{-1/2} W_L \phi(d_{L-2}^{-1/2} W_{L-1} \ldots \phi(d_0^{-1/2} W_1 \xx)),
    \label{eq:ntk_model}
\end{equation}
where $\xx \in \RR^{d_0}$, $d_i$ is a size of the $i$-th layer and $W_l \in \RR^{d_l \times d_{l-1}}$.
The weights are initialized as $W_{l,ij}^{(0)} \sim \NN(0,1)$.

\citet{jacot2018ntk} have shown that training this model with a continuous-time gradient descent is equivalent to performing a kernel gradient descent for some specific kernel; they called this kernel a neural tangent kernel (NTK).
This kernel is generally stochastic and evolves with time, however, as they prove, it converges to a steady-state deterministic kernel as $d_{1:L-1} \to \infty$.

\citet{lee2019wide} have shown that the training dynamics of the network (\ref{eq:ntk_model}) stays close to the training dynamics of its linearized version in the limit of infinite width; the linearization is performed with respect to weights.
They also show that this statement holds for the discrete-time gradient descent as long as the learning rates are sufficiently small.

\citet{arora2019exact} provide a way to effectively compute the NTK for convolutional neural networks.
They found that a kernel method with the NTK still performs worse than the corresponding finite-width CNN.
At the same time, as was noted by \citet{lee2019wide}, the training dynamics in the NTK limit is effectively linear. 
\citet{bai2019beyond} artificially created a situation where a linearized dynamics was not able to track the training dynamics in the limit of infinite width.
These two works show that the NTK limit is not perfect in the sense that it can be far from a realistic finite-size neural net.

\paragraph{Mean-field limit.}
There is a line of works \cite{mei2018mf,mei2019mf,rotskoff2019mf,sirignano2018lln,chizat2018mf,yarotsky2018mf} that consider a two-layer neural net of width $d$ in a mean-field limit:
\begin{equation}
    f(\xx; \aa, W) =
    d^{-1} \aa^T \phi(W^T \xx) = 
    d^{-1} \sum_{r=1}^d a_r \phi(\ww_r^T \xx),
    \label{eq:mf_model}
\end{equation}
where $\xx \in \RR^{d_0}$; the weights are initialized independently on the width $d$ and $d$ goes to infinity.
Note the difference in scaling the output function between (\ref{eq:mf_model}) and (\ref{eq:ntk_model}) for $L=2$.
In the present case any weight configuration can be expressed as a point measure in $(a,\ww)$-space $\RR^{d_0+1}$:
\begin{equation*}
    \mu[\aa, W] = d^{-1} \sum_{r=1}^d \delta_{a_r} \otimes \delta_{\ww_r}.
\end{equation*}
A neural network is then expressed as an integral over the measure:
\begin{equation}
    f(\xx; \aa, W) =
    \int a \phi(\ww^T \xx) \, \mu[\aa, W](da, d\ww).
    \label{eq:f_measure_int}
\end{equation}

The above-mentioned works show that when learning rates are appropriately scaled width $d$, a gradient descent dynamics turns into a continuous-time dynamics for the measure $\mu$ in $(a,\ww)$-space driven by a certain PDE as $d$ goes to infinity.
This evolution in the weight space also drives the evolution of the model $f$ (see~ (\ref{eq:f_measure_int})).

Note that those works that study a limit behavior of the discrete-time gradient descent \cite{sirignano2018lln,mei2018mf,mei2019mf} require the number of training steps to grow with $d$ since they prove convergence to a continuous-time dynamics.
In contrast, in our work we find a similar mean-field-type limit that converges to a discrete-time limit dynamics.

There are several attempts to extend the mean-field analysis to multi-layer nets \cite{sirignano2019deep_mf,nguyen2019deep_mf,fang2019deep_mf}.
However this appears to be highly non-trivial to formulate a measure evolution PDE similar to a single-hidden-layer case (see the discussion of difficulties in Section 3.3 of \citet{sirignano2019deep_mf}).
In particular, \citet{sirignano2019deep_mf} rigorously constructed an iterated mean-field limit for a two-hidden-layer case.
In contrast, the construction of \citet{nguyen2019deep_mf} applies to any number of layers while not being mathematically rigorous.
\citet{fang2019deep_mf} claim to find a way to represent a deep network as a sequence of integrals over a system of probability measures.
Given this, the loss becomes convex as a function of this system of measures.
However they do not consider any training process.

It also has to be noted that \citet{nguyen2019deep_mf} applied a weight initialization with a non-zero mean for their experiments with scaling multi-layer nets.
As we show in Section~\ref{sec:Hhid_gd}, if the number of hidden layers is more than two and initialization has zero mean (which is common in deep learning), a mean-field limit becomes trivial.



\section{Training a one hidden layer net with gradient descent}
\label{sec:1hid_gd}

Here we consider a simple case of networks with a single hidden layer of width $d$ trained with GD.
Our goal is to deduce how one should scale its training hyperparameters (learning rates and initialization variances) in order to converge to a non-trivial limit model evolution as $d \to \infty$.
We say that a limit model evolution is non-trivial if the model neither vanishes, nor diverges, and varies over the optimization process.
We formalize this notion later in the text.
We investigate certain classes of hyperparameter scalings that lead to non-trivial limit models.
We find both existing MF and NTK scalings, as well as a different class of scalings that lead to a model that does not coincide with either MF or NTK limit.
Finally, we discuss the ability of limit models to approximate finite-width nets.

Consider a one hidden layer net of width $d$:
\begin{equation*}
    f(\xx; \aa, W) =
    \aa^T \phi(W^T \xx) = 
    \sum_{r=1}^d a_r \phi(\ww_r^T \xx),
\end{equation*}
where $\xx \in \RR^{d_0}$, $W = [\ww_1, \ldots, \ww_d] \in \RR^{d_0 \times d}$, and $\aa = [a_1, \ldots, a_d]^T \in \RR^d$.
The nonlinearity $\phi(z) = [z]_+ - \alpha [-z]_+$ for $\alpha > 0$ is considered to be the leaky ReLU and applied element-wise.
We consider a loss function $\ell(y,z)$ that is continuosly-differentiable with respect to the second argument.
We also assume $\partial \ell(y,z) / \partial z$ to be positive continuous and monotonic $\forall y$.
The guiding example is the standard cross-entropy loss.
The data distribution loss is defined as $\LL(\aa, W) = \EE_{\xx,y \in D_{train}} \ell(y, f(\xx; \aa,W))$, where $D_{train}$ is a train dataset sampled from the data distribution $\DD$.

Weights are initialized with isotropic gaussians with zero means: $\ww_r^{(0)} \sim \NN(0, \sigma_w^2 I)$, $a_r^{(0)} \sim \NN(0, \sigma_a^2)$ $\forall r = 1\ldots d$.
The evolution of weights is driven by the gradient descent dynamics:
\begin{equation*}
    \Delta\theta_r^{(k)} = \theta_r^{(k+1)} - \theta_r^{(k)} = - \eta_\theta \frac{\partial \LL(\aa^{(k)}, W^{(k)})}{\partial \theta_r},
\end{equation*}
where $\theta$ is either $a$ or $\ww$.

Initialization variances, $\sigma_a^2$ and $\sigma_w^2$, generally depend on $d$: e.g. $\sigma_a^2 \propto d^{-1}$ for He initialization \cite{he2015init}.
This fact complicates the study of the limit $d\to\infty$.
To work around this, we rescale our hyperparameters:
$$
    \hat a_r^{(k)} = \frac{a_r^{(k)}}{\sigma_a}, \quad
    \hat\ww_r^{(k)} = \frac{\ww_r^{(k)}}{\sigma_w}, \quad
    \hat \eta_a = \frac{\eta_a}{\sigma_a^2}, \quad
    \hat \eta_w = \frac{\eta_w}{\sigma_w^2}.
$$
The GD dynamics preserves its form:
\begin{equation*}
    \Delta\hat\theta_r^{(k)} = -\hat\eta_\theta \frac{\partial \LL(W^{(k)}, \aa^{(k)})}{\partial \hat\theta_r}.
\end{equation*}
At the same time, scaled initial conditions do not depend on $d$ anymore: $\hat a_r^{(0)} \sim \NN(0, 1)$, $\hat\ww_r^{(0)} \sim \NN(0, I)$ $\forall r = 1\ldots d$.

By expanding gradients we get the following: 
\begin{equation}
    \Delta\hat a_r^{(k)} = 
    - \hat\eta_a \sigma_a \sigma_w \EE_{\xx,y} \nabla_f^{(k)} \ell \; \phi(\hat\ww_r^{(k),T} \xx),
\end{equation}
\begin{equation}
    \Delta\hat\ww_r^{(k)} = 
    - \hat\eta_w \sigma_a \sigma_w \EE_{\xx,y} \nabla_f^{(k)} \ell \; \hat a_r^{(k)} \phi'(\hat\ww_r^{(k),T} \xx) \xx,
\end{equation}
\begin{equation}
    \hat a_r^{(0)} \sim \NN(0, 1), \quad \hat\ww_r^{(0)} \sim \NN(0, I) \quad \text{for all $r=1\ldots d$,}
\end{equation}
where we have denoted $f_d^{(k)}(\xx) = \sigma_a \sum_{r=1}^d \hat a^{(k)}_r \phi(\sigma_w \hat\ww_r^{(k),T} \xx)$ and $ \nabla_f^{(k)} \ell = \left. \frac{\partial \ell(y,z)}{\partial z} \right|_{z=f_d^{(k)}(\xx)}$.
We have also used the fact that $\phi(\sigma z) = \sigma \phi(z)$ for $\phi$ being the leaky ReLU.
We shall omit $\xx, y$ in the expectation from now on.

Denote $\sigma = \sigma_a \sigma_w$.
Assume hyperparameters that drive the dynamics are scaled with $d$:
\begin{equation*}
    \sigma \propto d^{q_\sigma}, \quad
    \hat\eta_a \propto d^{\tilde q_a}, \quad
    \hat\eta_w \propto d^{\tilde q_w}.
\end{equation*}
We call a set of exponents $(q_\sigma, \tilde q_a, \tilde q_w)$ "a scaling".
Every scaling define a limit model $f_\infty^{(k)}(\xx) = \lim_{d \to \infty} f_d^{(k)}(\xx)$.
We want this limit to be non-divergent, non-vanishing and not equal to the initialization $f_d^{(0)}$ for any $k \geq 1$.
We call such scalings and corresponding limit models non-trivial.

\subsection{Analyzing non-triviality}

We start with introducing weight increments:
\begin{equation*}
    \delta\hat a_r^{(k)} = \hat a_r^{(k)} - \hat a_r^{(0)}, \quad
    \delta\hat\ww_r^{(k)} = \hat\ww_r^{(k)} - \hat\ww_r^{(0)}.
\end{equation*}
Since our dynamics is symmetric with respect to permutation of indices $r$, we can assume the following:
\begin{equation}
    |\delta\hat a_r^{(k)}| \propto d^{q_a^{(k)}}, \quad
    \|\delta\hat\ww_r^{(k)}\| \propto d^{q_w^{(k)}}.
    \label{eq:increments_power_law}
\end{equation}
Our intuition here is that $|\delta\hat a_r^{(k)}| \sim d^{-1/2}$ for the NTK scaling, while $|\delta\hat a_r^{(k)}| \sim d^{0}$ for the MF scaling. 
We validate the assumption above numerically for some of the scalings in SM~C.
We proceed with decomposing the model:
\begin{multline}
    f_d^{(k)}(\xx) = 
    \sigma \sum_{r=1}^d (\hat a_r^{(0)} + \delta\hat a_r^{(k)}) \phi'(\ldots) (\hat\ww_r^{(0)} + \delta\hat\ww_r^{(k)})^T \xx \\=
    f_{d,\emptyset}^{(k)}(\xx) + f_{d,a}^{(k)}(\xx) + f_{d,w}^{(k)}(\xx) + f_{d,aw}^{(k)}(\xx), 
    \label{eq:f_decomposition_1hid}
\end{multline}
where we define the decomposition terms as:
\begin{equation*}
    f_{d,\emptyset}^{(k)}(\xx) =
    \sigma \sum_{r=1}^d \hat a_r^{(0)} \phi'(\ldots) \hat\ww_r^{(0),T} \xx,
\end{equation*}
\begin{equation*}
    f_{d,a}^{(k)}(\xx) =
    \sigma \sum_{r=1}^d \delta\hat a_r^{(k)} \phi'(\ldots) \hat\ww_r^{(0),T} \xx,
\end{equation*}
\begin{equation*}
    f_{d,w}^{(k)}(\xx) =
    \sigma \sum_{r=1}^d \hat a_r^{(0)} \phi'(\ldots) \delta\hat\ww_r^{(k),T} \xx,
\end{equation*}
\begin{equation*}
    f_{d,aw}^{(k)}(\xx) =
    \sigma \sum_{r=1}^d \delta\hat a_r^{(k)} \phi'(\ldots) \delta\hat\ww_r^{(k),T} \xx.
\end{equation*}
Here $\phi'(\ldots)$ is a shorthand for $\phi'((\hat\ww_r^{(0)} + \delta\hat\ww_r^{(k)})^T \xx)$.

Since all of the terms inside the sums are presumably power-laws of $d$ (or at least do not grow or vanish with $d$), it is natural to assume power-laws for the decomposition terms also (see SM~C for empiricial validation):
\begin{equation*}
    f_{d,\emptyset}^{(k)}(\xx) \propto d^{q_{f,\emptyset}^{(k)}}, \;
    f_{d,a/w}^{(k)}(\xx) \propto d^{q_{f,a/w}^{(k)}}, \;
    f_{d,aw}^{(k)}(\xx) \propto d^{q_{f,aw}^{(k)}}
\end{equation*}
Here and later we will write "$a/w$" meaning "$a$ or $w$".

The introduced assumptions allow us to formulate the non-triviality condition in terms of the power-law exponents:
\begin{equation}
    \max(q_{f,\emptyset}^{(k)}, q_{f,a}^{(k)}, q_{f,w}^{(k)}, q_{f,aw}^{(k)}) = 0 \; \forall k \geq 1;
    \label{eq:non-trivialness_condition1}
\end{equation}
\begin{equation}
    \max(q_{f,a}^{(k)}, q_{f,w}^{(k)}, q_{f,aw}^{(k)}) = 0 \; \text{or} \; q_{f,\emptyset}^{(k)} = 0 \; \text{and} \; q_w^{(k)} \geq 0.
    \label{eq:non-trivialness_condition2}
\end{equation}
The first condition ensures that $\lim_{d \to \infty} f_d^{(k)}$ is finite and not uniformly zero, while the second one ensures that this limit does not coincides with the initialization (hence the learning dynamics does not get stuck as $d \to \infty$).
In particular, the second condition requires either one of $f_{d,a}^{(k)}$, $f_{d,w}^{(k)}$, or $f_{d,aw}^{(k)}$ to contribute substantially to $f_d^{(k)}$ for large $d$, or, if the leading term is $f_{d,\emptyset}$, it requires $\lim_{d\to\infty} f_{d,\emptyset}^{(k)}$ not to coincide with $\lim_{d\to\infty} f_d^{(0)}$ (because $\phi'((\hat\ww^{(0)}+\delta\hat\ww^{(k)})^T \xx) \nrightarrow \phi'(\hat\ww^{(0),T} \xx)$ as $d\to\infty$ if $q_w^{(k)} \geq 0$).

In order to test Conditions~(\ref{eq:non-trivialness_condition1}) and (\ref{eq:non-trivialness_condition2}), we have to relate the introduced $q$-exponents with the scaling $(q_\sigma, \tilde q_a, \tilde q_w)$.
From the definition of decomposition~(\ref{eq:f_decomposition_1hid}) terms we get:
\begin{equation*}
    q_{f,\emptyset}^{(k)} = q_\sigma + \varkappa_\emptyset^{(k)}, \quad
    q_{f,a/w}^{(k)} = q_{a/w}^{(k)} + q_\sigma + \varkappa_{a/w}^{(k)},
\end{equation*}
\begin{equation}
    q_{f,aw}^{(k)} = q_a^{(k)} + q_w^{(k)} + q_\sigma + \varkappa_{aw}^{(k)},
    \label{eq:q_terms_def_1hid}
\end{equation}
where all $\varkappa \in \{1/2,1\}$.
We now use $q_{f,a}^{(k)}$ to illustrate where these equations come from.
We have:
\begin{multline*}
    \EE_{\hat\aa^{(0)},\hat W^{(0)}} f_{d,a}^{(k)}(\xx) =
    \sigma d \EE \delta\hat a^{(k)} \phi'(\ldots) \hat\ww^{(0),T} \xx =\\=
    \sigma d^{1+q_a^{(k)}} \EE \frac{\delta\hat a^{(k)}}{d^{q_a^{(k)}}} \phi'(\ldots) \hat\ww^{(0),T} \xx,
\end{multline*}
since all terms of the sum have the same expectation.
Hence if the last expectation is non-zero in the limit of $d \to \infty$, then we have $\EE f_{d,a}^{(k)}(\xx) \propto \sigma d^{1+q_a^{(k)}}$ and consequently $q_{f,a}^{(k)}(\xx) = q_a^{(k)} + q_\sigma + 1$; so, $\varkappa_a^{(k)} = 1$.
However, if it is zero in the limit of $d \to \infty$, then we have to reason about the variance.
We have $\Var f_{d,a}^{(k)}(\xx) \propto \sigma^2 d^{1+2 q_a^{(k)}}$ if all terms of the sum appear to be independent in the limit of $d \to \infty$, or $\Var f_{d,a}^{(k)}(\xx) \propto \sigma^2 d^{2+2 q_a^{(k)}}$ if they are perfectly correlated.
Hence $q_{f,a}^{(k)}(\xx) = q_a^{(k)} + q_\sigma + \varkappa_a^{(k)}$, where $\varkappa_a^{(k)} \in \{1/2,1\}$.
Generally, all $\varkappa$-terms can be defined if $q_a^{(k)}$ and $q_w^{(k)}$ are known.

We now relate $q_{a/w}^{(k)}$ with the scaling $(q_\sigma, \tilde q_a, \tilde q_w)$.
First, we rewrite the dynamics in terms of weight increments:
\begin{equation*}
    \Delta\delta\hat a_r^{(k)} = 
    - \hat\eta_a \sigma \EE \nabla_f^{(k)} \ell \; \phi((\hat\ww_r^{(0)} + \delta\hat\ww_r^{(k)})^T \xx),
\end{equation*}
\begin{equation}
    \Delta\delta\hat\ww_r^{(k)} = 
    - \hat\eta_w \sigma \EE \nabla_f^{(k)} \ell \; (\hat a_r^{(0)} + \delta\hat a_r^{(k)}) \phi'(\ldots) \xx,
    \label{eq:1hid_dynamics}
\end{equation}
\begin{equation*}
    \delta\hat a_r^{(0)} = 0, \; \delta\hat\ww_r^{(0)} = 0, \;
    \hat a_r^{(0)} \sim \NN(0, 1), \; \hat\ww_r^{(0)} \sim \NN(0, I).
\end{equation*}
Recall that we are looking for scalings that lead to a non-divergent limit model $f_\infty^{(k)}$; hence $f_d^{(k)}$ should not grow with $d$.
Then, since $\nabla_f^{(k)} \ell$ is strictly positive continuous and monotonic $\forall y$, we have $|\nabla_f^{(k)} \ell|$ bounded away from zero as a function of $d$.
Also, since $\hat a_r^{(0)} \propto 1$ and $\|\hat\ww_r^{(0)}\| \propto 1$, from the dynamics equations (\ref{eq:1hid_dynamics}) we get:
\begin{equation}
    q_a^{(1)} = \tilde q_a + q_\sigma, \quad
    q_w^{(1)} = \tilde q_w + q_\sigma,
    \label{eq:q1_1hid}
\end{equation}
\begin{equation*}
    q_a^{(k+1)} = \max(q_a^{(k)}, \tilde q_a + q_\sigma + \max(0, q_w^{(k)})),
\end{equation*}
\begin{equation*}
    q_w^{(k+1)} = \max(q_w^{(k)}, \tilde q_w + q_\sigma + \max(0, q_a^{(k)})).
\end{equation*}
The last two equations can be rewritten as:
\begin{equation}
    q_{a/w}^{(k+1)} = \max(q_{a/w}^{(k)}, q_{a/w}^{(1)} + \max(0, q_{w/a}^{(k)})).
    \label{eq:qk_1hid}
\end{equation}
Here we have used the following heuristic rules:
\begin{equation*}
    u \propto d^{q_u}, \; v \propto d^{q_v} 
    \Rightarrow
    u v \propto d^{q_u+q_v}, \; u+v \propto d^{\max(q_u,q_v)}.
\end{equation*}
Although these rules are not mathematically correct, we empirically validated the exponents predicted by equations (\ref{eq:q1_1hid}) and (\ref{eq:qk_1hid}): see SM~C.

The $\varkappa$-terms together with equations (\ref{eq:non-trivialness_condition1}), (\ref{eq:non-trivialness_condition2}), (\ref{eq:q_terms_def_1hid}), (\ref{eq:q1_1hid}), and (\ref{eq:qk_1hid}) define a set of sufficient conditions for a scaling $(q_\sigma, \tilde q_a, \tilde q_w)$ to define a non-trivial limit model.
In the next section, we derive several solution classes for this system of equations.
These classes contain both MF and NTK scalings, as well as a family of scalings that lead to a limit model that coincides with neither MF, nor NTK limits.

\subsection{Non-trivial limits}

Although deriving $\varkappa$-terms appears to be quite complicated generally, we derive them for several special cases.



Consider the case of $q_a^{(1)} < 0$ and $q_w^{(1)} < 0$.
Equations (\ref{eq:qk_1hid}) imply $q_a^{(k)} = q_a^{(1)}$ and $q_w^{(k)} = q_w^{(1)}$ $\forall k \geq 1$ then.
We also conclude (see SM~D) that $\varkappa_\emptyset^{(k)} = \varkappa_{aw}^{(k)} = 1/2$ and $\varkappa_{a/w}^{(k)} = 1$ in this case.

Conditions (\ref{eq:non-trivialness_condition1}) and (\ref{eq:non-trivialness_condition2}) then imply $q_\sigma \leq -1/2$ and $q_{a/w}^{(1)} \leq -1-q_\sigma$ with an equality for at least one of $q_a^{(1)}$ or $q_w^{(1)}$.
Because of the latter, and since in our case we have to have $\max(q_a^{(1)}, q_w^{(1)}) < 0$, we get a constraint $q_\sigma > -1$.
Note also that in this case $q_{f,aw}^{(k)} < 0$.

Hence by taking $q_\sigma \in (-1,-1/2]$ and $\tilde q_{a/w} = q_{a/w}^{(1)} - q_\sigma \leq -1-2q_\sigma$ with at least one inequality being an equality, we define a non-trivial scaling.
As a particular example of this case consider $q_\sigma = q_a^{(1)} = q_w^{(1)} = -1/2$.
It follows than from (\ref{eq:q1_1hid}) that $\tilde q_a = \tilde q_w = 0$.
If we take $\hat\eta_a = \hat\eta_w = \hat\eta$ and $\sigma = \sigma^* d^{-1/2}$ then we get the following relations:
\begin{equation*}
    f_d^{(k)}(\xx) = \sum_{r=1}^d \sigma^* d^{-1/2} \hat a^{(k)}_r \phi(\hat\ww_r^{(k),T} \xx),
\end{equation*}
\begin{equation*}
    \Delta\hat\theta_r^{(k)} = - \hat\eta \frac{\partial \LL(W^{(k)}, \aa^{(k)})}{\partial \hat\theta_r}, \quad \theta \in \{a,\ww\},
\end{equation*}
\begin{equation*}
    \hat a_r^{(0)} \sim \NN(0, 1), \quad \hat\ww_r^{(0)} \sim \NN(0, I) \quad \text{for all $r=1\ldots d$.}
\end{equation*}
This system exactly corresponds to the one used in the NTK theory \cite{jacot2018ntk} (see also eq.~(\ref{eq:ntk_model})).

Following \cite{jacot2018ntk,lee2019wide}, we call a neural tangent kernel (NTK) the following function:
\begin{eqnarray*}
    \Theta_d^{(k)}(\xx,\xx') 
    =
    \sum_{r=1}^d \Biggl(
        \frac{\partial f^{(k)}(\xx)}{\partial \hat a_r} \frac{\partial f^{(k)}(\xx')}{\partial \hat a_r} +\\+
        \frac{\partial f^{(k)}(\xx)}{\partial \hat\ww_r} \left(\frac{\partial f^{(k)}(\xx')}{\partial \hat\ww_r}\right)^T
    \Biggr)
    =\\=
    \sigma^2 \sum_{r=1}^d \bigl(\phi(\hat\ww_r^{(k),T} \xx) \phi(\hat\ww_r^{(k),T} \xx') +\\+ \phi'(\hat\ww_r^{(k),T} \xx) \phi'(\hat\ww_r^{(k),T} \xx') \hat a_r^{(k),2} \xx^T \xx' \bigr).
\end{eqnarray*}
If we consider training with the continuous-time GD this kernel drives the evolution of the model, see SM~B:
\begin{equation}
    \dot f_d^{(t)}(\xx') =
    -\hat\eta \EE_{\xx,y} \nabla_f^{(t)} \ell(\xx,y) \, \Theta_d^{(t)}(\xx,\xx'),
    \label{eq:f_evolution}
\end{equation}
where we have taken $\hat\eta_a = \hat\eta_w = \hat\eta$.

For a finite $d$ the NTK is a random variable, however when $\sigma \propto d^{-1/2}$, $\Theta_d^{(0)}$ converges to a deterministic non-degenerate limit kernel $\Theta_\infty$ due to the Law of Large Numbers.
Moreover, if $\delta\hat\ww_r^{(k)}$ and $\delta\hat a_r^{(k)}$ vanish with $d$ (i.e. $q_{a/w}^{(k)} < 0$), then $\hat\ww_r^{(k)}$ and $\hat a_r^{(k)}$ converge to $\hat\ww_r^{(0)}$ and $\hat a_r^{(0)}$ respectively.
Hence $\Theta_d^{(k)}$ converges to the same deterministic non-degenerate limit kernel $\Theta_\infty$.

However $\Theta_\infty$ becomes uniformly zero when $q_\sigma < -1/2$.
Given $q_{a/w}^{(k)} < 0$, $\Theta_d^{(k)}$ still converges to $\Theta_\infty \equiv 0$.
Nevertheless, if $\tilde q = \tilde q_a = \tilde q_w = -1 - 2 q_\sigma$, then a new kernel $\tilde \Theta_d^{(k)} = \hat\eta \Theta_d^{(k)}$ converges to a non-vanishing deterministic limit kernel $\tilde\Theta_\infty$.
The dynamics of the limit model is then driven by the above-mentioned limit kernel:
\begin{equation}
    \dot f_\infty^{(t)}(\xx') =
    -\EE_{\xx,y} \nabla_f^{(t)} \ell(\xx,y) \, \tilde\Theta_\infty(\xx,\xx').
    \label{eq:f_inf_evolution}
\end{equation}
Moreover, similar evolution equation holds also for the discrete-time dynamics, see again SM~B:
\begin{equation}
    f_\infty^{(k+1)}(\xx') - f_\infty^{(k)}(\xx') =
    -\EE_{\xx,y} \nabla_f^{(k)} \ell(\xx,y) \, \tilde\Theta_\infty(\xx,\xx').
    \label{eq:f_inf_evolution_discrete}
\end{equation}

Note also that if $q_\sigma < -1/2$ then the limit model vanishes at the initialization due to the Central Limit Theorem: 
\begin{equation*}
    f_d^{(0)} =
    \sigma \sum_{r=1}^d \hat a_r \phi(\hat\ww_r^T \xx) \sim
    d^{q_\sigma + 1/2}
    \; \text{for $d\to\infty$.}
\end{equation*}
We shall refer scalings for which $q_\sigma \in (-1,-1/2)$ as "intermediate".
Since $f_\infty^{(0)}$ is zero for the intermediate scalings while it is not for the NTK scaling, limits induced by intermediate scalings do not coincide with the NTK limit.
As we show in the next section, intermediate limits do not coincide with the MF limit either.
Nevertheless, despite the altered initialization, the limit dynamics for intermediate scalings is still driven by the kernel similar to the NTK case: see eq. (\ref{eq:f_inf_evolution_discrete}).
Note that this "intermediate" limit dynamics is the same for any $q_\sigma \in (-1,-1/2)$.
\citet{chizat2019lazy} have already noted that taking $q_\sigma \in (-1,-1/2]$ leads to the so-called "lazy-training" regime that in our terminology reads simply as $q_{a/w}^{(k)} < 0$.

\subsubsection{Mean-field limit}
\label{sec:mf_limit}

If we take $q_a^{(1)} = q_w^{(1)} = 0$, then again, equations (\ref{eq:qk_1hid}) imply $q_a^{(k)} = q_a^{(1)}$ and $q_w^{(k)} = q_w^{(1)}$ $\forall k \geq 1$.
In this case we conclude that $\varkappa_\emptyset^{(k)} = \varkappa_{aw}^{(k)} = \varkappa_{a/w}^{(k)} = 1$ (see SM~D).

Conditions (\ref{eq:non-trivialness_condition1}) and (\ref{eq:non-trivialness_condition2}) than imply $q_\sigma = -1$.
It follows than from (\ref{eq:q1_1hid}) that $\tilde q_a = \tilde q_w = 1$.
Taking $\sigma = \sigma^* d^{-1}$ and $\hat\eta_{a/w} = \hat\eta^* d$ allows us to write the gradient descent step as a measure evolution equation.

Indeed, consider a weight-space measure:
$\mu_d^{(k)} = \frac{1}{d} \sum_{r=1}^d \delta_{\hat a_r^{(k)}} \otimes \delta_{\hat\ww_k^{(k)}}$.
Given this, a neural network output can be represented as
$f_d^{(k)}(\xx) = \sigma^* \int \hat a \phi(\hat\ww^{T} \xx) \, \mu_d^{(k)}(d\hat a, d\hat\ww)$ while the gradient descent step can be represented as 
$\mu_d^{(k+1)} = \TT(\mu_d^{(k)}; \; \eta^*, \sigma^*)$.

$\mu_d^{(0)}$ converges to $\mu_\infty^{(0)} = \NN_{1+d_0}(0, I)$ in the limit of infinite width.
Since $\eta^*$ and $\sigma^*$ are constants, the evolution of this limit measure is still driven by the same transition operator $\TT$: $\mu_\infty^{(k+1)} = \TT(\mu_\infty^{(k)}; \; \eta^*, \sigma^*)$.
In SM~F we prove that than $\mu_d^{(k)}$ converges to $\mu_\infty^{(k)} = \TT^k(\mu_d^{(0)})$ and $f_d^{(k)}$ converges to a finite $f_\infty^{(k)} = \sigma^* \int \hat a \phi(\hat\ww^T \xx) \, \mu_\infty^{(k)}(d\hat a, d\hat\ww)$:
\begin{thm}[Informal version of Corollary~1 in SM~F]
    If $\sigma \propto d^{-1}$, $\hat\eta_{a/w} \propto d$, and $\ell$, $\phi$, and the data distribution are sufficiently regular, then there exist limits in probability as $d\to\infty$ for $\mu_d^{(k)}$ and for $f_d^{(k)}(\xx)$ $\forall \xx$ $\forall k \geq 0$.
    \label{thm:informal}
\end{thm}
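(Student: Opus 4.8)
The plan is to read the gradient step as a single deterministic map on probability measures and to push an initial law-of-large-numbers estimate through finitely many iterates of this map. Write a particle as $\theta_r=(\hat a_r,\hat\ww_r)\in\RR^{1+d_0}$ and set $f_\mu(\xx)=\sigma^*\int\hat a\,\phi(\hat\ww^T\xx)\,\mu(d\hat a,d\hat\ww)$. Under the mean-field scaling $\sigma=\sigma^* d^{-1}$, $\hat\eta_{a/w}=\hat\eta^* d$ the product $\hat\eta_{a/w}\sigma=\hat\eta^*\sigma^*$ is constant, so each particle feels the others only through the empirical measure $\mu_d^{(k)}$ via $\nabla_f^{(k)}\ell=\partial_z\ell(y,f_{\mu_d^{(k)}}(\xx))$. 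Hence the update is the pushforward $\mu_d^{(k+1)}=\TT(\mu_d^{(k)}):=(T_{\mu_d^{(k)}})_\#\mu_d^{(k)}$, where $T_\mu(\theta)=\theta-\hat\eta^*\sigma^*\EE_{\xx,y}[\partial_z\ell(y,f_\mu(\xx))\,g_\theta(\xx)]$ and $g_\theta(\xx)=(\phi(\hat\ww^T\xx),\,\hat a\,\phi'(\hat\ww^T\xx)\,\xx)$. Since $\TT$ is the very same deterministic map applied both to the empirical measure and to the limit $\mu_\infty^{(k)}=\TT^k(\mu_\infty^{(0)})$, it suffices to (i) control moments along the trajectory, (ii) show $\TT$ is Lipschitz in a Wasserstein distance on the relevant class of measures, and (iii) combine a law of large numbers at initialization with an induction on $k$.

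First I would prove uniform moment bounds. Using that $\partial_z\ell$ is bounded (as for the cross-entropy), that the data are compactly supported, and that leaky ReLU obeys $|\phi(z)|\le c|z|$ and $|\phi'(z)|\le c$, the update satisfies the linear recursion $|\hat a_r^{(k+1)}|\le|\hat a_r^{(k)}|+C\|\hat\ww_r^{(k)}\|$ and $\|\hat\ww_r^{(k+1)}\|\le\|\hat\ww_r^{(k)}\|+C|\hat a_r^{(k)}|$ with $C$ independent of $d$ and $r$. Iterating gives $\|\theta_r^{(k)}\|\le C_k(1+\|\theta_r^{(0)}\|)$ surely, so for each fixed $k$ the empirical second moment $\frac1d\sum_r\|\theta_r^{(k)}\|^2$ is bounded, uniformly in $d$, by the corresponding moment of $\mu_\infty^{(0)}=\NN(0,I)$ up to vanishing fluctuations. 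This also gives the Cauchy--Schwarz bound $|f_{\mu_d^{(k)}}(\xx)|\le C_k\|\xx\|$, which shows the limit model is finite and, crucially, breaks the apparent circularity between boundedness of $f$ and boundedness of $\partial_z\ell$.

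Next I would establish that, restricted to measures whose second moment is at most $R$, the map $\TT$ is Lipschitz in the Wasserstein distance $\WW_2$. The governing estimate is $|f_\mu(\xx)-f_\nu(\xx)|\le C_R\|\xx\|\,\WW_2(\mu,\nu)$: although $\hat a\,\phi(\hat\ww^T\xx)$ is not globally Lipschitz, on the moment ball it is Lipschitz with a constant growing like $\|\hat\ww\|+|\hat a|$, and one application of the Cauchy--Schwarz inequality against the second-moment bound turns this into the stated Wasserstein estimate. The single-particle map $T_\mu$ must also be Lipschitz in $\theta$; here the jump of $\phi'$ at the origin is exactly what forces the hypothesis that the data distribution be sufficiently regular, since averaging $\phi'(\hat\ww^T\xx)\,\xx$ against a bounded density makes $\hat\ww\mapsto\EE_\xx[\partial_z\ell\,\phi'(\hat\ww^T\xx)\,\xx]$ Lipschitz despite the non-smoothness of leaky ReLU. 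Combining these I would take an optimal coupling $\gamma$ of $(\mu,\nu)$ and bound $\|T_\mu(\theta)-T_\nu(\theta')\|\le\|T_\mu(\theta)-T_\mu(\theta')\|+\|T_\mu(\theta')-T_\nu(\theta')\|$, the first term by the local Lipschitz constant of $T_\mu$ in $\theta$ and the second by $\sup_\xx|f_\mu(\xx)-f_\nu(\xx)|$; integrating against $\gamma$ and using the moment bounds yields $\WW_2(\TT(\mu),\TT(\nu))\le L_R\,\WW_2(\mu,\nu)$.

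Finally I would close the induction. The base case $\WW_2(\mu_d^{(0)},\mu_\infty^{(0)})\to0$ in probability is the law of large numbers for empirical measures of i.i.d. Gaussian samples. Assuming $\WW_2(\mu_d^{(k)},\mu_\infty^{(k)})\to0$ in probability, the event that both measures lie in a common moment ball has probability tending to one by the first step, and on that event the Lipschitz bound gives $\WW_2(\mu_d^{(k+1)},\mu_\infty^{(k+1)})\le L\,\WW_2(\mu_d^{(k)},\mu_\infty^{(k)})\to0$ in probability; hence $\mu_d^{(k)}\to\mu_\infty^{(k)}$ for every $k$, and continuity of $\mu\mapsto f_\mu(\xx)$ transfers this to $f_d^{(k)}(\xx)\to f_\infty^{(k)}(\xx)$ in probability for every $\xx$ and $k$. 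I expect the main obstacle to be exactly the failure of global Lipschitzness of the drift, caused by the products $\hat a\,\phi(\hat\ww^T\xx)$ and $\hat a\,\phi'(\hat\ww^T\xx)\,\xx$ and by the jump of $\phi'$ at the origin: the whole argument rests on coupling the $d$-uniform moment control with the Wasserstein continuity of $\TT$ (the latter relying on data regularity), so that $\TT$ behaves as a genuine Lipschitz map on the precise class of measures the trajectory actually visits.
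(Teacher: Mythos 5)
Your proposal follows the same overall strategy as the paper's SM~F: represent the GD step as a deterministic transition operator $\TT$ on weight-space measures, prove its continuity with respect to $\WW_2$ on a restricted class of measures (the paper's Lemma~1), establish the base case by the law of large numbers for the empirical measure at initialization, run an induction over the finitely many steps $k$, and transfer the result to $f_d^{(k)}(\xx)$ via continuity of $\mu \mapsto f[\mu;\xx]$, upgrading to convergence in probability because the limit is deterministic. The implementation differs in two respects. First, the paper sidesteps your moment-propagation step entirely by assuming the initialization law $\PP$ has \emph{compact support} (a truncated Gaussian), and then propagating an almost-sure support ball $\BB_{R_k}$ through the dynamics using boundedness of $\phi$, $\phi'$, $\partial\ell/\partial z$ and of the data; your version keeps the Gaussian initialization and replaces truncation with uniform-in-$d$ second-moment bounds via the pathwise recursion $|\hat a^{(k+1)}| \leq |\hat a^{(k)}| + C\|\hat\ww^{(k)}\|$, which is slightly more general but forces you to work on moment balls rather than support balls in the continuity lemma. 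Second, the paper only needs \emph{continuity} of $\TT$ (it tests weak convergence against $h \in C_b(\MM(\RR^{1+d_0}))$ and passes the limit through $h \circ \TT$), whereas you aim for a quantitative Lipschitz bound $\WW_2(\TT\mu,\TT\nu) \leq L_R \WW_2(\mu,\nu)$ and induct on $\WW_2$ directly; the paper's Lemma~1 in fact derives an estimate of the same Lipschitz type, so these are equivalent in substance.

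One step of your argument would fail as literally stated: the claim that averaging against a bounded data density makes $\hat\ww \mapsto \EE_\xx\bigl[\partial_z\ell\,\phi'(\hat\ww^T\xx)\,\xx\bigr]$ \emph{Lipschitz}. The disagreement set $\{\xx : \mathrm{sign}(\hat\ww^T\xx) \neq \mathrm{sign}(\hat\ww'^T\xx)\}$ is a wedge whose measure is controlled by the \emph{angle} between $\hat\ww$ and $\hat\ww'$, roughly $\|\hat\ww-\hat\ww'\\|/\|\hat\ww\|$, so the local Lipschitz constant blows up as $\hat\ww \to 0$ and no uniform constant $L_R$ exists on a moment ball containing the origin (which the Gaussian initialization charges). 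What survives under a data density is continuity of this map (and hence of $\TT$), which is exactly what the paper's induction needs; alternatively one can keep your quantitative scheme at the price of a modulus of continuity weaker than Lipschitz. It is worth noting that the paper's own Lemma~1 glosses this same point: it bounds the $a$-component carefully using Lipschitzness of $\phi$ and of $\partial\ell/\partial z$, then asserts that "the same logic" applies to the $\ww$-component containing $\phi'$, which is not literally true since $\phi'$ for leaky ReLU is a step function. So you have correctly identified the genuine technical crux — and correctly tied it to the "sufficiently regular data distribution" hypothesis in the informal statement — but the repair is continuity of the averaged drift, not global Lipschitzness.
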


This theorem states the convergence of the discrete-time dynamics of a finite-width model to a discrete-time dynamics of a limit model.
We call the corresponding limit model \emph{a discrete-time mean-field limit.}

This limit differs from those considered in prior works.
Indeed, previous studies on the mean-field theory resulted in a continuous-time dynamics for the limit model.
For example, \cite{sirignano2018lln} assume $\hat\eta \propto 1$.
They prove that in this setup $\mu_d^{t d}$ converges to a continuous-time measure-valued process $\nu^t$ for $t \in \RR$.
The limit process $\nu_t$ is driven by a certain integro-differentiable equation.
In contrast, in our case $\mu_\infty^{(k)}$ is driven by a discrete-time process.
Other works (e.g. \citet{mei2018mf,mei2019mf}) assume $\hat\eta = o(d)$ and also consider a continuous-time evolution for a limit model.

At the same time \citet{rotskoff2019mf} and \citet{chizat2018mf} assume a learning rate scaling similar to ours but they consider a continuous-time gradient descent dynamics for the finite-width net.

Note that if $q_{a/w}^{(k)} < 0$ (as for NTK and intermediate scalings), then $\delta\hat a^{(k)}$ and $\delta\hat\ww^{(k)}$ vanish as $d \to \infty$, hence $\mu_d^{(k)}$ converges to $\mu_\infty^{(0)} = \NN_{1+d_0}(0,I)$.
This means that in this case we cannot represent the dynamics of the limit model $f_\infty^{(k)}$ in terms of the dynamics of the limit measure, hence this case is out of the scope of the MF theory.

On the other hand, if $q_a^{(k)} = q_w^{(k)} = 0$, then a deterministic limit $\lim_{d\to\infty} \tilde\Theta_d^{(k)}(\xx,\xx')$ still exists due to the Law of Large Numbers, however this limit depends on step $k$ since $\phi'(\hat\ww^{(k),T} \xx) \nrightarrow \phi'(\hat\ww^{(0),T} \xx)$.
Hence the dynamics of a limit model $f_\infty^{(k)}$ in the mean-field limit cannot be described in terms of a constant deterministic kernel.


So far we have considered two cases: $q_{a/w}^{(1)} < 0$ and $q_{a/w}^{(1)} = 0$.
We elaborate other possible cases in SM~E.

\subsection{Infinite-width limits as approximations for finite-width nets}

In the previous section we have introduced a family of scalings leading to different limit models.
Limit models can be easier to study mathematically: for example, in the NTK limit the training process converges to a kernel method.
If we show that a limit model approximates the original one well, we can substitute the latter with the former in our theoretical considerations.


Notice that conditions (\ref{eq:non-trivialness_condition1}) and (\ref{eq:non-trivialness_condition2}) allow some of (but not all of) $q_{f,\emptyset}^{(k)}$, $q_{f,a}^{(k)}$, $q_{f,w}^{(k)}$ and $q_{f,aw}^{(k)}$ to be less than zero.
This means that corresponding terms of decomposition (\ref{eq:f_decomposition_1hid}) vanish as $d \to \infty$.
However for $d = d^*$, where $d^* < \infty$ is the width of a "reference" model, all of these terms are present.
If we assume that indeed all of these terms obey power-laws with respect to $d$ (which is a reasonable assumption for large $d$), then we can conclude that the fewer terms vanish as $d \to \infty$, the better the corresponding limit approximates the original finite-width net.
We validate this assumption for the above-mentioned scalings in SM~C.

One can see that for the NTK limit we have $q_{f,\emptyset}^{(k)} = q_{f,a}^{(k)} = q_{f,a}^{(k)} = 0$, hence the first three terms of decomposition (\ref{eq:f_decomposition_1hid}) are preserved as $d \to \infty$, however $q_{f,aw}^{(k)} = -1$.
In Figure~\ref{fig:mf_ntk_1hid_cifar2_sgd_test_loss_and_var_f} (center) we empirically check that this is indeed the case.
One can notice however that the last term, which is not preserved, vanishes as $\hat\eta \to 0$ faster than $f_{d,a}^{(k)}$ and $f_{d,w}^{(k)}$.
This reflects the fact that originally the NTK limit was derived for the continuous-time gradient descent for which the learning rate is effectively zero.

Note also that if $q_{a/w}^{(1)} < 0$ (for which the NTK scaling is a special case), then $q_{f,aw}^{(k)} < 0$ (see above), hence the last term of decomposition (\ref{eq:f_decomposition_1hid}) always vanishes in this case.
Hence the NTK scaling should provide the most reasonable approximation for finite-width nets among all scalings in this class.
For comparison, we also consider the intermediate scaling $q_\sigma = -3/4$, $\tilde q_{a/w} = 1/2$ for which $q_{f,\emptyset}^{(k)} = -1/4$, $q_{f,a}^{(k)} = q_{f,w}^{(k)} = 0$, and $q_{f,aw}^{(k)} = -1/2$ for $k \geq 1$.

In contrast, for the MF limit we have $q_{f,\emptyset}^{(k)} = q_{f,a}^{(k)} = q_{f,w}^{(k)} = q_{f,aw}^{(k)} = 0$ for $k \geq 1$.
Hence we expect all the terms of decomposition (\ref{eq:f_decomposition_1hid}) to be preserved as $d \to \infty$.
We check this claim empirically in Figure~\ref{fig:mf_ntk_1hid_cifar2_sgd_test_loss_and_var_f}, center.

We also found it interesting to plot the case of the "default" scaling: $\sigma \propto d^{-1/2}$ and $\eta_{a/w} \propto 1$ (black curves).
It corresponds to the situation when we make our network wider while keeping learning rates in the original parameterization constant.
In this case $\hat\eta_a \propto d$, $\hat\eta_w \propto 1$, hence $\tilde q_a = 1$ and $\tilde q_w = 0$.

We compare final test losses for the above-mentioned scalings in Figure~\ref{fig:mf_ntk_1hid_cifar2_sgd_test_loss_and_var_f}, left.
As we see, all scalings except the default one result in finite limits for the loss while the default one diverges.
As we see in Figure~\ref{fig:mf_ntk_1hid_cifar2_sgd_test_loss_and_var_f} (right), the mean-field limit tracks the learning dynamics of the reference network better than the other limits considered.
It is interesting to note also that as the learning dynamics shows, MF and intermediate limits are deterministic while the NTK limit, as well as the reference model, are not.
This is because the model at the initialization converges to zero for the first two cases.
Also, this is the reason why the NTK limit becomes a better approximation for a finite-width net if learning rates are small enough (see Figure~3 in SM~H).
In this case the term $f_{d,aw}^{(k)}$, which is not preserved in the NTK limit, becomes negligible already for the reference network.

\begin{figure*}[t]
    \centering
    \includegraphics[width=0.32\textwidth]{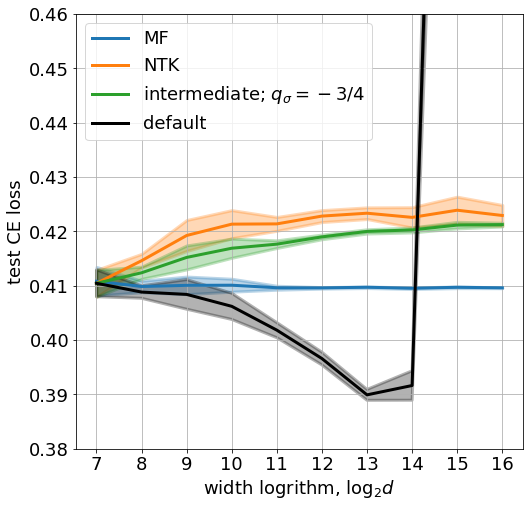}
    \includegraphics[width=0.32\textwidth]{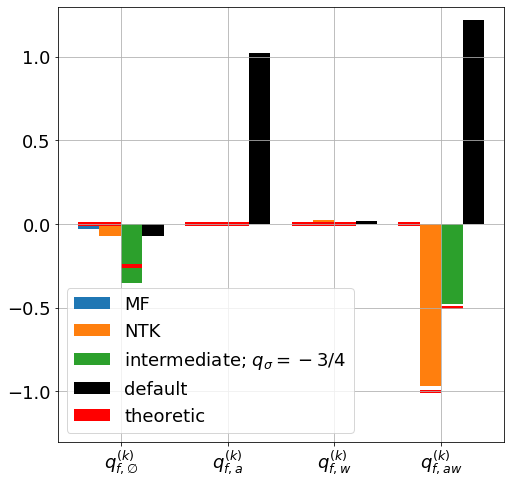}
    \includegraphics[width=0.32\textwidth]{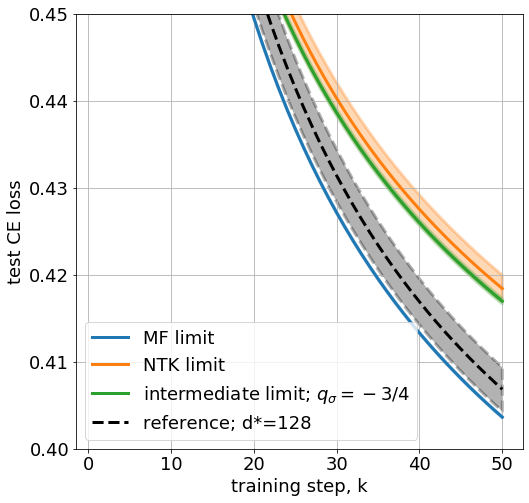}
    \caption{\textbf{MF, NTK and intermediate scalings result in non-trivial limit models for a single layer neural net. A limit model induced by the intermediate scaling differs from both MF and NTK limits.} \textit{Left:} a final test cross entropy (CE) loss as a function of the width $d$. MF, NTK and intermediate scalings converge but the default scaling does not. The MF limit approximates the reference finite-width network better than all other limits. \textit{Center:} numerical estimates for the exponents of the decomposition (\ref{eq:f_decomposition_1hid}) terms as well as their theoretical values (denoted by red ticks). We see that for the default scaling some of the exponents are positive, hence corresponding decomposition terms diverge. For the MF limit all of the exponents are zeros, meaning all of the decomposition terms are preserved. Also, we see that our numerical experiments match the theory well. \textit{Right:} the test CE loss as a function of training step $k$ for the reference net and its limits. We see that 1) the MF limit best matches the reference, 2) the NTK limit is not deterministic while the intermediate limit is. This is because the model at the initialization converges to zero for the intermediate scaling. \textit{Setup:} We train a 1-hidden layer net on a subset of CIFAR2 (a dataset of the first two classes of CIFAR10) of size 1000 with gradient descent. We take a reference net of width $d^* = 2^7 = 128$ trained with unscaled reference learning rates $\eta_a^* = \eta_w^* = 0.02$ and scale its hyperparameters according to MF (blue curves), NTK (orange curves), and intermediate scaling with $q_\sigma=-3/4$ (green curves, see text). We also make a plot for the case when we do not scale our learning rates (black curves) and scale standard deviations at the initialization as the initialization scheme of \citet{he2015init} suggests. See SM~A for further details.}
    \label{fig:mf_ntk_1hid_cifar2_sgd_test_loss_and_var_f}
\end{figure*}

\section{Training a multi-layer net}
\label{sec:Hhid_gd}

While reasoning about non-trivial limits for multi-layered nets is more technically involved, some qualitative results are still possible.
For instance, we show that a (discrete-time) mean-field limit is vanishing for networks with more than three hidden layers.
However, such a limit seems to exist if the network is trained with RMSProp.

Consider a multi-layered network with all hidden layers having width $d$:
\begin{equation}
    f(\xx; \aa, V^{1:H}, W) =
    \sum_{r_H=1}^d a_{r_H} \phi(f^H_{r_H}(\xx; V^{1:H}, W)),
    \label{eq:f_Hhid}
\end{equation}
where
\begin{equation*}
    f^{h+1}_{r_{h+1}}(\xx; V^{1:h+1}, W) =
    \sum_{r_h=1}^d v^{h+1}_{r_{h+1} r_h} \phi(f^h_{r_h}(\xx; V^{1:h}, W)),
\end{equation*}
\begin{equation*}
    f^0_{r_0}(\xx, W) =
    \ww_{r_0}^T \xx.
\end{equation*}
Here again, all quantities are initialized with zero-mean gaussians: $a_{r_H}^{(0)} \sim \NN(0, \sigma_a^2)$, $\ww_{r_0}^{(0)} \sim \NN(0, \sigma_w^2 I)$, and $v^{h,(0)}_{r_h r_{h-1}} \sim \NN(0, \sigma_{v^h}^2)$.

We perform a gradient descent step for the parameters $\aa$, $V^{1:H}$, $W$ with learning rates $\eta_a$, $\eta_{v^{1:H}}$, and $\eta_w$ respectively.
We introduce scaled quantities in the similar manner as for the single hidden layer case:
\begin{equation*}
    \hat a_{r_H}^{(k)} = \frac{a_{r_H}^{(k)}}{\sigma_a}, \quad
    \hat v_{r_h r_{h-1}}^{h,(k)} = \frac{v_{r_h r_{h-1}}^{h,(k)}}{\sigma_{v^h}}, \quad
    \hat\ww_{r_0}^{(k)} = \frac{\ww_{r_0}^{(k)}}{\sigma_w},    
\end{equation*}
\begin{equation*}
    \hat \eta_a = \frac{\eta_a}{\sigma_a^2}, \quad
    \hat \eta_{v^h} = \frac{\eta_{v^h}}{\sigma_{v^h}^2}, \quad
    \hat \eta_w = \frac{\eta_w}{\sigma_w^2}.    
\end{equation*}

Given this, the gradient descent step on the scaled quantities writes as follows:
\begin{equation*}
    \Delta\hat a_{r_H}^{(k)} =
    -\hat\eta_a \sigma^{H+1} \EE \nabla_f^{(k)} \ell(\xx,y) \; \phi(\hat f^{H,(k)}_{r_H}(\xx)),
\end{equation*}
\begin{equation*}
    \Delta\hat v^{H,(k)}_{r_H r_{H-1}} =
    -\hat\eta_{v^H} \sigma^{H+1} \EE \nabla_f^{(k)} \ell(\xx,y) \; \hat a_{r_H}^{(k)} \phi(\hat f^{H-1,(k)}_{r_{H-1}}(\xx)),
\end{equation*}
$$\ldots$$
\begin{multline*}
    \Delta\hat \ww_{r_0}^{(k)} =
    -\hat \eta_w \sigma^{H+1} \EE_{\xx,y} \nabla_f^{(k)} \ell(\xx,y)
    \times\\\times
    \sum_{r_H=1}^d \hat a^{(k)}_{r_H} \phi'(\hat f^{H,(k)}_{r_H}(\xx)) 
    \times\\\times
    \sum_{r_{H-1}=1}^d \hat v^{H,(k)}_{r_H r_{H-1}} \phi'(\hat f^{H-1,(k)}_{r_{H-1}}(\xx)) 
    \times\ldots\\\ldots\times
    \sum_{r_{1}=1}^d \hat v^{2,(k)}_{r_2 r_{1}} \phi'(\hat f^{1,(k)}_{r_1}(\xx)) 
    \hat v^{1,(k)}_{r_1 r_0} \phi'(\hat \ww_{r_0}^{(k),T} \xx) \xx.
\end{multline*}
\begin{equation}
    \hat a^{(0)}_{r_H} \sim \NN(0, 1), \;
    \hat v^{h,(0)}_{r_h r_{h-1}} \sim \NN(0, 1), \;
    \hat\ww^{(0)}_{r_0} \sim \NN(0, I),
    \label{eq:gd_dynamics_Hhid}
\end{equation}
where we have denoted $\hat f^{h,(k)}_{r_h}(\xx) = f^h_{r_h}(\xx; \hat V^{(k),1:h}, \hat W^{(k)})$ and $\sigma = (\sigma_a \sigma_{v^H} \ldots \sigma_{v^1} \sigma_w)^{1/(H+1)}$.

\subsection{MF scaling leads to a trivial discrete-time MF limit}

As we have noted in Section~\ref{sec:mf_limit}, the mean-field theory describes a state of a neural network with a measure in the weight space $\mu$; similarly, it describes a networks' learning dynamics as an evolution of this measure.
In particular, this means that weight updates cannot depend explicitly on the width $d$.
Indeed, if they grow with $d$, then for some measure $\mu_\infty$ with infinite number of atoms this measure will diverge after a single gradient step.
Similarly, if they vanish with $d$, then for a measure with an infinite number of atoms this measure will not evolve with gradient steps.
Since we consider a polynomial dependence on $d$ for our hyperparameters, our dynamics should not depend on $d$ explicitly.

It is not obvious how to properly define a weight-space measure in the case of multiple hidden layers; the discussion in Section 3.3 of \citet{sirignano2019deep_mf}; see also \citet{fang2019deep_mf}.
However, if we manage to define it properly, then each sum in the dynamics equation (\ref{eq:gd_dynamics_Hhid}) will be substituted with an integral over the measure.
Each such integral will contribute a $d$ factor to the corresponding equation.
Hence in order to have a learning dynamics independent on $d$ we should have:
\begin{equation*}
    \hat\eta_{a/w} \sigma^{H+1} d^{H} \propto 1, \quad
    \hat\eta_{v^h} \sigma^{H+1} d^{H-1} \propto 1,
\end{equation*}
because there are $H$ sums in the dynamics equation for $\hat a$ and $\hat\ww$, and $H-1$ sums for $\hat v^{1:H}$.
Similarly, since the network output should not depend on $d$, we should also have:
\begin{equation*}
    \sigma^{H+1} d^{H+1} \propto 1.
\end{equation*}
From this follows that $\sigma \propto d^{-1}$, $\hat\eta_{a/w} \propto d$, and $\hat\eta_{v^h} \propto d^2$.

As we show in SM~G, for $H \geq 2$ this scaling leads to a vanishing limit: $f_d^{(k)}(\xx) \to 0$ as $d\to\infty$ $\forall \xx$ $\forall k \geq 0$.
The intuition behind this result is simple: if $H \geq 2$ and $\phi(z) \sim z$ for $z \to 0$, then given the scaling above all of the weight increments vanish as $d\to\infty$ for $k=0$.
This means that the learning process cannot start in the limit of large $d$.
We validate this claim empirically for $H = 2$ in Figure~\ref{fig:mf_ntk_3hid_cifar2_sgd_test_loss_and_q_f}, center.
In contrast, for the NTK scaling, which corresponds to $\sigma \propto d^{-1/2}$ and $\hat\eta_{a/v^h/w} \propto 1$, not all of the terms vanish.
Nevertheless, if $H = 1$, a non-trivial mean-field limit seems to exist as our experiments demonstrate: see Figure~\ref{fig:mf_ntk_3hid_cifar2_sgd_test_loss_and_q_f}, left.

Note that this result does not drive away the possibility of constructing a meaningful continuous-time MF limit, for which the limit dynamics is driven by an ODE. 
Also, we expect a meaningful MF limit to be possible for non-linearities that do not vanish near zero (e.g. for sigmoid).

\subsection{Training a multi-layer net with RMSProp}
\label{sec:Hhid_rmsprop}

Up to this point we have considered a GD training.
Consider now training with RMSProp which updates the weights with normalized gradients.
We show that in this case a mean-field limit exists and it is not trivial for any $H \geq 0$.

For the RMSProp training gradient updates look as follows:
\begin{equation}
    \Delta\theta^{(k)} = 
    \theta^{(k+1)} - \theta^{(k)} = 
    -\eta_\theta \frac{\nabla_\theta^{(k)}}{\mathrm{RMS}_\theta^{(k)}},
    \label{eq:weight_update_rmsprop}
\end{equation}
where $\theta \in \{a_{r_H}, v^H_{r_H r_{H-1}}, \ldots, v^1_{r_1 r_0}, \ww_{r_0}\}$. 
Here we have used following shorthands:
\begin{equation*}
    \nabla_\theta^{(k)} = \nabla_\theta \LL(\Theta^{(k)}),
\end{equation*}
where $\Theta^{(k)} = \{a_{r_H}^{(k)}, v^{H,(k)}_{r_H r_{H-1}}, \ldots, v^{1,(k)}_{r_1 r_0}, \ww_{r_0}^{(k)}\}$, and
\begin{equation*}
    \mathrm{RMS}_\theta^{(k)} = \sqrt{\sum_{k'=0}^k \beta^{k-k'} \nabla_\theta^{(k')} \odot \nabla_\theta^{(k')}} \quad \text{for $\beta \in (0,1)$.}
\end{equation*}

Similarly to the GD case, we divide equation (\ref{eq:weight_update_rmsprop}) by the standard deviation $\sigma_\theta$ of the initialization of the weight $\theta$:
\begin{equation*}
    \Delta\hat\theta^{(k)} = -\frac{\eta_\theta}{\sigma_\theta} \frac{\nabla_{\hat\theta}^{(k)}}{\mathrm{RMS}_{\hat\theta}^{(k)}},
\end{equation*}
where $\nabla_{\hat\theta}^{(k)}$ and $\mathrm{RMS}_{\hat\theta}^{(k)}$ are defined similarly as above.

In this case we define scaled learning rates differently compared to the GD case: $\hat\eta_\theta = {\eta_\theta}/{\sigma_\theta}$.



As noted above, the mean-field analysis requires weight updates not to depend on $d$ explicitly.
Since our weight update rule uses normalized gradients, this condition reads simply as $\hat\eta_\theta \propto 1$ for all weights $\theta$ and $\sigma \propto d^{-1}$ since the model output $f[\mu_d; \xx]$ should not depend on $d$ explicitly.

Using similar reasoning as before (namely, weight increments should decay as $d^{-1/2}$) we can also define the NTK scaling: $\hat\eta_\theta \propto 1$ for all $\theta$ and $\sigma \propto d^{-1/2}$.
We compare these two limits in Figure~\ref{fig:mf_ntk_3hid_cifar2_sgd_test_loss_and_q_f}, right.
Notice that similar to the single hidden layer case, the NTK limit preserves terms with low-order dependencies on learning rates (i.e. $f_{d,\emptyset}^{(k)}$, $f_{d,a/v^h/w}^{(k)}$), while the MF limit, being now non-vanishing, preserves terms with higher-order dependencies on them.

\begin{figure*}[t]
    \centering
    \includegraphics[width=0.32\textwidth]{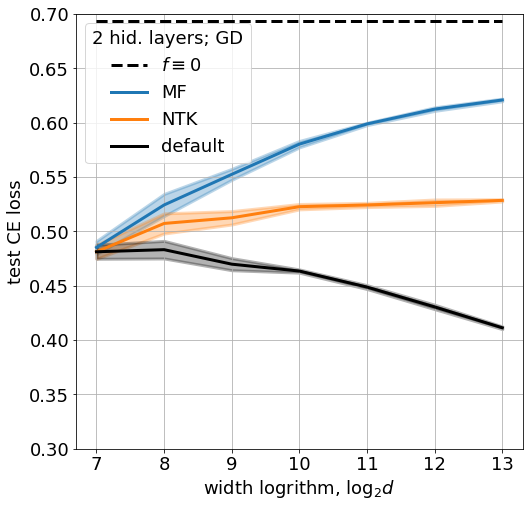}
    \includegraphics[width=0.32\textwidth]{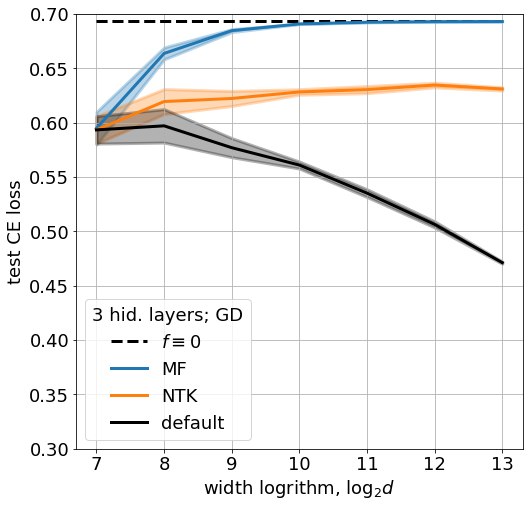}
    \includegraphics[width=0.32\textwidth]{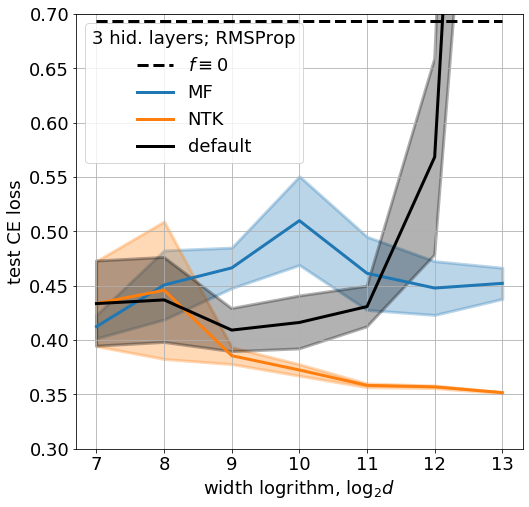}
    \\
    \includegraphics[width=0.32\textwidth]{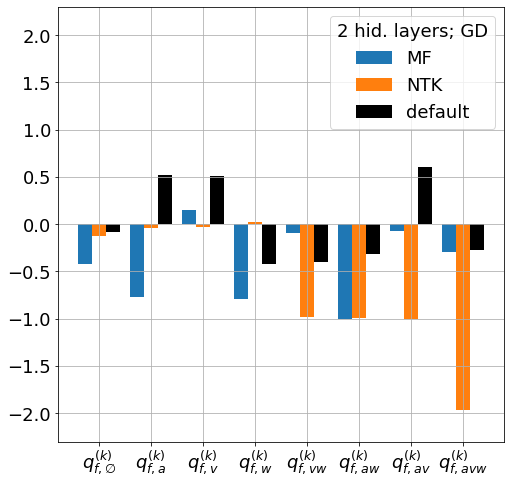}
    \includegraphics[width=0.32\textwidth]{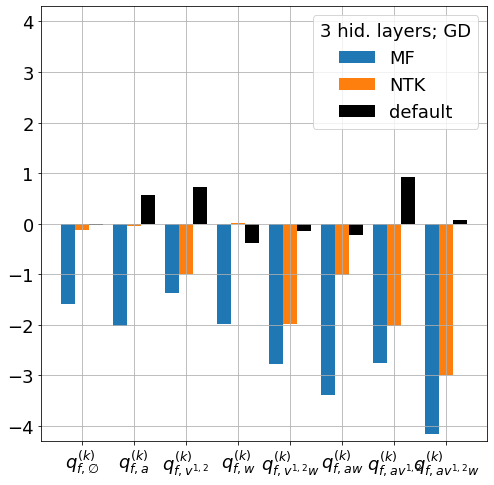}
    \includegraphics[width=0.32\textwidth]{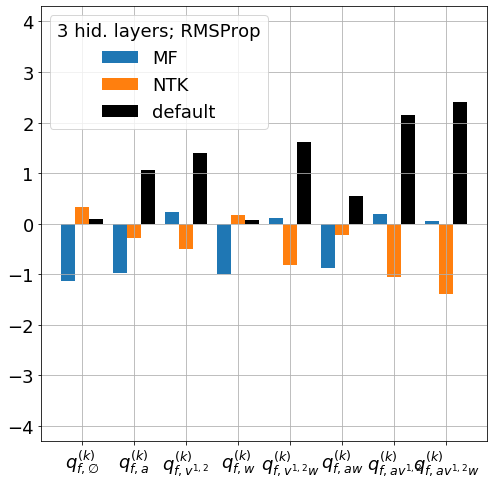}
    \caption{\textbf{MF and NTK limits for multilayer networks.}
    \textit{Top row:} the final test cross-entropy (CE) loss as a function of width $d$. \textit{Bottom row:} numerical estimates for exponents of terms of the decomposition of $f^{(k)}$, similar to eq.(\ref{eq:f_decomposition_1hid}). All of these terms vanish for a network with (at least) three hidden layers in the MF limit, however, this is not the case when the number of hidden layers is two. Nevertheless, if we consider training with RMSProp, the MF limit becomes non-vanishing. For the NTK scaling, not all of the decomposition terms vanish in any case, however, some of them do, indicating possible discrepancies between the reference net and its NTK limit.
    \textit{Setup:} We train a multi-layer net on a subset of CIFAR2 (a dataset of first two classes of CIFAR10) of size 1000 with either a plain gradient descent or RMSProp. We take a reference net of width $d^* = 2^7 = 128$ trained with (unscaled) reference learning rates $\eta_a^* = \eta_w^* = 0.02$ for GD and $\eta_a^* = \eta_w^* = 0.0002$ for RMSProp, and scale its hyperparameters according to MF (blue curves) and NTK (orange curves) scalings. We also make a plot for the case when we do not scale our learning rates (black curves) while scaling standard deviations at the initialization as the initialization scheme of \citet{he2015init} suggests. See SM~A for further details.}
    \label{fig:mf_ntk_3hid_cifar2_sgd_test_loss_and_q_f}
\end{figure*}

\section{Conclusions}
\label{sec:conclusions}

There are two different theories that study neural nets in the limit of infinite width: a mean-field theory and a kernel theory.
These theories imply that if certain conditions are fulfilled, corresponding infinite-width limits are non-trivial, i.e. the resulting function neither explodes nor vanishes and the learning process does not get stuck as the width goes to infinity.

In our study we derive a set of sufficient conditions on the scaling of hyperparameters (weight initialization variances and learning rates) with width to ensure that we reach a non-trivial limit when the width goes to infinity.
Solutions under these conditions include scalings that correspond to mean-field and NTK limits, as well as a family of scalings that lead to a limit model different from these two.

We propose a decomposition of our model and show that some of its terms may vanish for large width.
We argue that a limit provides a more reasonable approximation for a finite-width net if as few of these terms vanish as possible.

Our analysis out of the box suggests a discrete-time MF limit which, to the best of our knowledge, has not been covered by the existing literature yet.
We prove a convergence theorem for it and show that it provides a more reasonable approximation for finite-width nets than the NTK limit as long as learning rates are not too small.


As we show afterwards, a discrete-time mean-field limit appears to be trivial for a network with more than two hidden layers.
Nevertheless, if we train our network with RMSProp instead of GD, the above-mentioned limit becomes non-trivial for any number of hidden layers.


\paragraph*{Acknowledgments}
This work  was  supported  by  National  Technology  Initiative  and  PAO  Sberbank  project  ID0000000007417F630002. We thank Mikhail Burtsev and Ivan Skorokhodov for valuable discussions and suggestions, as well as for help in improving the final version of the text.

\bibliography{main}

\begin{thebibliography}{16}
\providecommand{\natexlab}[1]{#1}
\providecommand{\url}[1]{\texttt{#1}}
\expandafter\ifx\csname urlstyle\endcsname\relax
  \providecommand{\doi}[1]{doi: #1}\else
  \providecommand{\doi}{doi: \begingroup \urlstyle{rm}\Url}\fi

\bibitem[Arora et~al.(2019)Arora, Du, Hu, Li, Salakhutdinov, and
  Wang]{arora2019exact}
Arora, S., Du, S.~S., Hu, W., Li, Z., Salakhutdinov, R.~R., and Wang, R.
\newblock On exact computation with an infinitely wide neural net.
\newblock In \emph{Advances in Neural Information Processing Systems}, pp.\
  8139--8148, 2019.

\bibitem[Bai \& Lee(2019)Bai and Lee]{bai2019beyond}
Bai, Y. and Lee, J.~D.
\newblock Beyond linearization: On quadratic and higher-order approximation of
  wide neural networks.
\newblock \emph{arXiv preprint arXiv:1910.01619}, 2019.

\bibitem[Chizat \& Bach(2018)Chizat and Bach]{chizat2018mf}
Chizat, L. and Bach, F.
\newblock On the global convergence of gradient descent for over-parameterized
  models using optimal transport.
\newblock In \emph{Advances in neural information processing systems}, pp.\
  3036--3046, 2018.

\bibitem[Chizat et~al.(2019)Chizat, Oyallon, and Bach]{chizat2019lazy}
Chizat, L., Oyallon, E., and Bach, F.
\newblock On lazy training in differentiable programming.
\newblock In \emph{Advances in Neural Information Processing Systems}, pp.\
  2933--2943, 2019.

\bibitem[Fang et~al.(2019)Fang, Gu, Zhang, and Zhang]{fang2019deep_mf}
Fang, C., Gu, Y., Zhang, W., and Zhang, T.
\newblock Convex formulation of overparameterized deep neural networks.
\newblock \emph{arXiv preprint arXiv:1911.07626}, 2019.

\bibitem[He et~al.(2015)He, Zhang, Ren, and Sun]{he2015init}
He, K., Zhang, X., Ren, S., and Sun, J.
\newblock Delving deep into rectifiers: Surpassing human-level performance on
  imagenet classification.
\newblock In \emph{Proceedings of the IEEE international conference on computer
  vision}, pp.\  1026--1034, 2015.

\bibitem[Jacot et~al.(2018)Jacot, Gabriel, and Hongler]{jacot2018ntk}
Jacot, A., Gabriel, F., and Hongler, C.
\newblock Neural tangent kernel: Convergence and generalization in neural
  networks.
\newblock In \emph{Advances in neural information processing systems}, pp.\
  8571--8580, 2018.

\bibitem[Lee et~al.(2019)Lee, Xiao, Schoenholz, Bahri, Novak, Sohl-Dickstein,
  and Pennington]{lee2019wide}
Lee, J., Xiao, L., Schoenholz, S., Bahri, Y., Novak, R., Sohl-Dickstein, J.,
  and Pennington, J.
\newblock Wide neural networks of any depth evolve as linear models under
  gradient descent.
\newblock In \emph{Advances in neural information processing systems}, pp.\
  8570--8581, 2019.

\bibitem[Mei et~al.(2018)Mei, Montanari, and Nguyen]{mei2018mf}
Mei, S., Montanari, A., and Nguyen, P.-M.
\newblock A mean field view of the landscape of two-layer neural networks.
\newblock \emph{Proceedings of the National Academy of Sciences}, 115\penalty0
  (33):\penalty0 E7665--E7671, 2018.

\bibitem[Mei et~al.(2019)Mei, Misiakiewicz, and Montanari]{mei2019mf}
Mei, S., Misiakiewicz, T., and Montanari, A.
\newblock Mean-field theory of two-layers neural networks: dimension-free
  bounds and kernel limit.
\newblock \emph{arXiv preprint arXiv:1902.06015}, 2019.

\bibitem[Nguyen(2019)]{nguyen2019deep_mf}
Nguyen, P.-M.
\newblock Mean field limit of the learning dynamics of multilayer neural
  networks.
\newblock \emph{arXiv preprint arXiv:1902.02880}, 2019.

\bibitem[Paszke et~al.(2017)Paszke, Gross, Chintala, Chanan, Yang, DeVito, Lin,
  Desmaison, Antiga, and Lerer]{paszke2017automatic}
Paszke, A., Gross, S., Chintala, S., Chanan, G., Yang, E., DeVito, Z., Lin, Z.,
  Desmaison, A., Antiga, L., and Lerer, A.
\newblock Automatic differentiation in pytorch.
\newblock 2017.

\bibitem[Rotskoff \& Vanden-Eijnden(2019)Rotskoff and
  Vanden-Eijnden]{rotskoff2019mf}
Rotskoff, G.~M. and Vanden-Eijnden, E.
\newblock Trainability and accuracy of neural networks: an interacting particle
  system approach.
\newblock \emph{stat}, 1050:\penalty0 30, 2019.

\bibitem[Sirignano \& Spiliopoulos(2019)Sirignano and
  Spiliopoulos]{sirignano2019deep_mf}
Sirignano, J. and Spiliopoulos, K.
\newblock Mean field analysis of deep neural networks.
\newblock \emph{arXiv preprint arXiv:1903.04440}, 2019.

\bibitem[Sirignano \& Spiliopoulos(2020)Sirignano and
  Spiliopoulos]{sirignano2018lln}
Sirignano, J. and Spiliopoulos, K.
\newblock Mean field analysis of neural networks: A law of large numbers.
\newblock \emph{SIAM Journal on Applied Mathematics}, 80\penalty0 (2):\penalty0
  725--752, 2020.

\bibitem[Yarotsky(2018)]{yarotsky2018mf}
Yarotsky, D.
\newblock Collective evolution of weights in wide neural networks.
\newblock \emph{arXiv preprint arXiv:1810.03974}, 2018.

\end{thebibliography}
\bibliographystyle{icml2020}

\appendix

\section{Experimental details}
\label{sec:appendix_experiments}

We perform our experiments on a feed-forward net with $H+1$ hidden layers with no biases.
We learn our network as a binary classifier on a subset of CIFAR2 dataset (which is a dataset of first two classes of CIFAR10) of size 1000.
We train our network for 50 epochs to minimize the binary cross-entropy loss and report the final cross-entropy loss on a full test set (of size 2000).
We repeat our experiments for 5 random seeds and report means and standard deviations on our plots.
We experiment with other setups (i.e. using a mini-batch gradient estimation instead of the exact one, using a larger train dataset, using more training steps, learning a multi-class classification problem) in SM~\ref{sec:appendix_other_setups}.
All experiments were conducted on a single NVIDIA GeForce GTX 1080 Ti GPU using pytorch framework \cite{paszke2017automatic}.
Our code is available online: \url{https://github.com/deepmipt/research/tree/master/Infinite_Width_Limits_of_Neural_Classifiers}.

Although our analysis assumes initializing variables with samples from a gaussian, nothing changes if we sample $\sigma \xi$ for $\xi$ being any symmetric random variable with a distribution independent on hyperparameters.

In our experiments we took a network of width $d^* = 2^7 = 128$ and apply a Kaiming uniform initialization scheme \cite{he2015init} to its layers; we call this network a reference network.
Consider a network with a single hidden layer first.
According to the Kaiming initialization scheme, initial weights should have zero mean and standard deviations $\sigma_a^* \propto (d^*)^{-1/2}$ and $\sigma_w^* \propto d_0^{-1/2}$, where $d_0$ is the input dimension which we do not modify.
For this network we take (unscaled!) learning rates $\eta_a^* = \eta_w^* = 0.02$ for the gradient descent training and $\eta_a^* = \eta_w^* = 0.0002$ and $\beta = 0.99$ for the RMSProp training.
After that, we scale the initial weights and the learning rates with width $d$ according to a specific scaling:
\begin{equation*}
    \sigma = 
    \sigma^* \left(\frac{d}{d^*}\right)^{q_\sigma},
    \quad
    \hat\eta_{a/w} = 
    \hat\eta_{a/w}^* \left(\frac{d}{d^*}\right)^{\tilde q_{a/w}}.
\end{equation*}
Since $\sigma = \sigma_a \sigma_w$ and since we apply the (leaky) ReLU non-linearity, we can take
\begin{equation*}
    \sigma_a = 
    \sigma_a^* \left(\frac{d}{d^*}\right)^{q_\sigma},
    \quad
    \sigma_w = 
    \sigma_w^*.
\end{equation*}
Since for GD we have $\hat\eta_{a/w} = \eta_{a/w} / \sigma_{a/w}^2$, then
\begin{equation*}
    \eta_a = 
    \eta_a^* \left(\frac{\sigma_a}{\sigma_a^*}\right)^2 \left(\frac{d}{d^*}\right)^{\tilde q_a} =
    \eta_a^* \left(\frac{d}{d^*}\right)^{\tilde q_a+2q_\sigma},
\end{equation*}
\begin{equation*}
    \eta_w = 
    \eta_w^* \left(\frac{\sigma_w}{\sigma_w^*}\right)^2 \left(\frac{d}{d^*}\right)^{\tilde q_w} =
    \eta_w^* \left(\frac{d}{d^*}\right)^{\tilde q_w}.
\end{equation*}
Similar holds for the multi-layer case.
In this case since $\sigma = (\sigma_a \sigma_{v^H} \ldots \sigma_{v^1} \sigma_w)^{1/(1+H)}$, we can take
\begin{equation*}
    \sigma_{a/v^1/\ldots/v^H} = 
    \sigma_{a/v^1/\ldots/v^H}^* \left(\frac{d}{d^*}\right)^{q_\sigma},
    \quad
    \sigma_w = 
    \sigma_w^*.
\end{equation*}

\section{Dynamics of the limit model for the NTK scaling}
\label{sec:appendix_limiting_dynamics}

First consider a continuous-time gradient descent for a one-hidden layer network in a general form:
\begin{equation*}
    \dot\theta_d^{(t)} = -\hat\eta \EE_{\xx,y} \left.\frac{\partial\ell(y,z)}{\partial z}\right|_{z=f(\xx; \, \theta_d^{(t)})} \frac{\partial f(\xx; \, \theta_d^{(t)})}{\partial\theta_d},
\end{equation*}
where $\theta_d^{(t)} = \{(\hat a_r^{(t)},\hat\ww_r^{(t)})\}_{r=1}^d$ is a sequence of $d$ weights $(\hat a, \hat\ww)$ associated with each neuron at a time-step $t$.
\begin{eqnarray*}
    \dot f(\xx'; \, \theta_d^{(t)}) 
    =
    \left(\frac{\partial f(\xx'; \, \theta_d^{(t)})}{\partial\theta_d}\right)^T \dot\theta_d^{(t)} 
    =\\=
    -\hat\eta \EE \left.\frac{\partial\ell(y,z)}{\partial z}\right|_{z=\langle\ldots\rangle} \left(\frac{\partial f(\xx'; \, \theta_d^{(t)})}{\partial\theta_d}\right)^T \frac{\partial f(\xx; \, \theta_d^{(t)})}{\partial\theta_d}
    =\\=
    -\hat\eta \EE_{\xx,y} \left.\frac{\partial\ell(y,z)}{\partial z}\right|_{z=f(\xx; \, \theta_d^{(t)})} \Theta_d(\xx',\xx; \, \theta_d^{(t)}).
\end{eqnarray*}

Assume the model is scaled as $d^{-1/2}$:
\begin{equation*}
    f(\xx; \, \theta_d^{(t)}) =
    d^{-1/2} \sum_{r=1}^d \hat a_r^{(t)} \phi(\hat\ww_r^{(t),T} \xx).
\end{equation*}
Then a neural tangent kernel is written as follows:
\begin{eqnarray*}
    \Theta_d(\xx',\xx; \, \theta_d^{(t)}) =
    d^{-1} \sum_{r=1}^d \Bigl( \phi(\hat\ww_r^{(t),T} \xx) \phi(\hat\ww_r^{(t),T} \xx') 
    +\\+ 
    \hat a_r^2 \phi'(\hat\ww_r^{(t),T} \xx) \phi'(\hat\ww_r^{(t),T} \xx') \xx^T \xx' \Bigr).
\end{eqnarray*}
If moreover $\hat\eta = \const$, then for a fixed $t$ independent of $d$ $\hat a^{(t)} \to \hat a^{(0)}$ and $\hat\ww^{(t)} \to \hat\ww^{(0)}$.
Hence due to the Law of Large Numbers $\Theta_d(\xx',\xx; \, \theta_d^{(t)}) \to \Theta_\infty(\xx',\xx)$, where
\begin{eqnarray*}
    \Theta_\infty(\xx',\xx) =
    \EE_{(\hat a, \hat\ww) \sim \NN(0, I_{1+d_0})} \Bigl(\phi(\hat\ww^T \xx) \phi(\hat\ww^T \xx') 
    +\\+ 
    \hat a^2 \phi'(\hat\ww^T \xx) \phi'(\hat\ww^T \xx') \xx^T \xx' \Bigr).
\end{eqnarray*}

In the case of the discrete-time dynamics we have similarly:
\begin{equation*}
    \theta_d^{(k+1)} = 
    \theta_d^{(k)} - \hat\eta \EE_{\xx,y} \left.\frac{\partial\ell(y,z)}{\partial z}\right|_{z=f(\xx; \, \theta_d^{(k)})} \frac{\partial f(\xx; \, \theta_d^{(k)})}{\partial\theta_d}.
\end{equation*}
A classical result of calculus states that there exists a $\xi_d^{(k)} \in [0,1]^{(d_0+1) d}$ such that following holds:
\begin{eqnarray*}
    f(\xx'; \, \theta_d^{(k+1)}) - f(\xx'; \, \theta_d^{(k)})
    =\\=
    \left(\frac{\partial f(\xx'; \, \tilde\theta_d^{(k)})}{\partial\theta_d}\right)^T (\theta_d^{(k+1)} - \theta_d^{(k)})
    =\\=
    -\hat\eta \EE \left.\frac{\partial\ell(y,z)}{\partial z}\right|_{z=\langle\ldots\rangle} \left(\frac{\partial f(\xx'; \, \tilde\theta_d^{(k)})}{\partial\theta_d}\right)^T \frac{\partial f(\xx; \, \theta_d^{(k)})}{\partial\theta_d}
    =\\=
    -\hat\eta \EE_{\xx,y} \left.\frac{\partial\ell(y,z)}{\partial z}\right|_{z=f(\xx; \, \theta_d^{(t)})} \Theta_d(\xx',\xx; \, \theta_d^{(k)}, \tilde\theta_d^{(k)}).
\end{eqnarray*}
where $\tilde\theta_d^{(k)} = \theta_d^{(k+1)} \odot \xi_d^{(k)} + \theta_d^{(k)} \odot (1-\xi_d^{(k)})$, and we have abused notation by redefining $\Theta_d$.
Nevertheless, in this case $\Theta_d(\xx',\xx; \, \theta_d^{(k)}, \tilde\theta_d^{(k)})$ still converges to $\Theta_\infty(\xx',\xx)$ defined above for the same reasons as above.

\section{Validation of the power-law asumptions}
\label{sec:appendix_sanity_checks}

In Section~3 we have introduced power-law assumptions for weight increments and for terms of the model decomposition:
\begin{equation}
    |\delta\hat a_r^{(k)}| \propto d^{q_a^{(k)}}, \quad
    \|\delta\hat\ww_r^{(k)}\| \propto d^{q_w^{(k)}};
    \label{eq:increments_power_law_appendix}
\end{equation}
\begin{equation}
    f_{d,\emptyset}^{(k)}(\xx) \propto d^{q_{f,\emptyset}^{(k)}}, \;
    f_{d,a/w}^{(k)}(\xx) \propto d^{q_{f,a/w}^{(k)}}, \;
    f_{d,aw}^{(k)}(\xx) \propto d^{q_{f,aw}^{(k)}}.
    \label{eq:f_terms_power_law_appendix}
\end{equation}
After that, we have derived corresponding exponents for two cases: $q_{a/w}^{(1)} = q_\sigma + \tilde q_{a/w} < 0$ and $q_{a/w}^{(1)} = q_\sigma + \tilde q_{a/w} = 0$, where $q_\sigma$ is an exponent for $\sigma$ and $\tilde q_{a/w}$ are exponents for learning rates:
\begin{equation*}
    \sigma \propto d^{q_\sigma}, \quad
    \hat\eta_{a/w} \propto d^{\tilde q_{a/w}}.
\end{equation*}

In order to have a non-vanishing non-diverging limit model $f_\infty^{(k)}$ that does not coincide with its initialization $f_\infty^{(0)}$, we have derived a set of conditions: see Section~3.
For the first case these conditions were the following:
\begin{equation*}
    q_\sigma \in (-1,-1/2],
\end{equation*}
\begin{equation*}
    q_{a/w}^{(1)} \leq -1 - q_\sigma, \quad
    \max(q_a^{(1)}, q_w^{(1)}) = -1 - q_\sigma,
\end{equation*}
while for the second case they were:
\begin{equation*}
    q_\sigma = -1, \quad 
    q_{a/w}^{(1)} = 0.
\end{equation*}
The MF scaling is exactly the second case, while the NTK scaling corresponds to the first case: $q_\sigma = q_a^{(1)} = q_w^{(1)} = -1/2$.
We have refered a family of scalings $q_\sigma \in (-1,-1/2)$ and $q_a^{(1)} = q_w^{(1)} = -1 - q_\sigma$ as "intermediate".

As we have also derived in Section~3, for both cases $q_{a/w}^{(k)} = q_{a/w}^{(1)}$ $\forall k \geq 1$.

Here we validate power-law assumptions (\ref{eq:increments_power_law_appendix}) as well as derived values for corresponding exponents for the three special cases noted above: MF, NTK and intermediate scalings, see Figure \ref{fig:1hid_cifar2_sgd_da_dw_q}.
We train a one hidden layer network with the gradient descent for 50 epochs; see SM~\ref{sec:appendix_experiments} for further details.
We take norms of final learned weight increments and average them over hidden neurons:
\begin{equation*}
    \mathrm{av.} \, |\delta\hat a^{(k)}| = \frac{1}{d} \sum_{r=1}^d |\delta\hat a_r^{(k)}|,
\end{equation*}
\begin{equation*}
    \mathrm{av.} \, \|\delta\hat\ww^{(k)}\|_2 = \frac{1}{d} \sum_{r=1}^d \|\delta\hat\ww_r^{(k)}\|_2.
\end{equation*}
We then plot these values as functions of width $d$.

As one can see on left and center plots, weight increments as functions of width are very well fitted with power-laws for both input and output layers.
Right plot matches numerical estimates for corresponding exponents $q_a^{(k)}$ and $q_w^{(k)}$ with their theoretical values (denoted by red ticks).
Here we notice a reasonable coincidence between them.

In order to validate a power-law assumption for model decomposition terms (\ref{eq:f_terms_power_law_appendix}), we compute the variance with respect to the data distribution for each decomposition term.
The reason to consider variances instead of decomposition terms themselves is that these terms are functions of $\xx$.
If we just fix a random $\xx$, then the numerical estimate for, say, $f_{d,a}^{(k)}(\xx)$ can be noisy.
Hence it is better to plot some statistics of these terms with respect to the data, hoping that this statistics will be more robust, which is true e.g. for expectation.
However, since we consider a binary classification problem with balanced classes, we are likely to have $\EE_\xx f_d^{(k)}(\xx) \approx 0$.
Because of this, we are afraid to have all of the decomposition terms to be approximately zeros in expectation.
For this reason, we consider a variance instead of the expectation.
Note that $f_d^{(k)} \propto d^{q_f^{(k)}}$ implies $\Var_x f_d^{(k)} \propto d^{2 q_f^{(k)}}$.

As we see in Figure \ref{fig:1hid_cifar2_sgd_var_f_decomp}, variances of all of the model decomposition terms are fitted with power-laws well.
The only exception is $\Var_x f_{d,\emptyset}^{(k)}(\xx)$ for the mean-field scaling: see the solid curve on the left plot.
Nevertheless, this term converges to a constant for large $d$, which indicates that our analysis becomes valid at least in the limit of large $d$.
Note that we have also matched numerical estimates of corresponding exponents with their theoretical values in Figure~1 of the main text.

\begin{figure*}[t]
    \centering
    \includegraphics[width=0.32\textwidth]{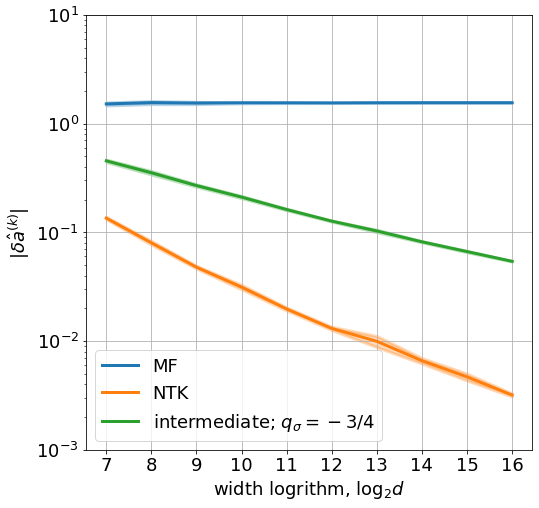}
    \includegraphics[width=0.32\textwidth]{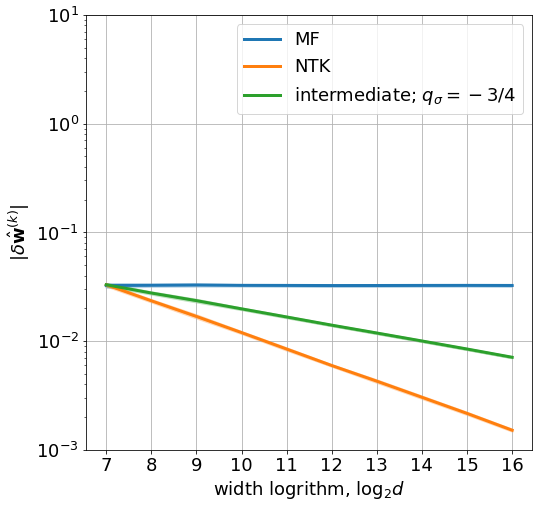}
    \includegraphics[width=0.32\textwidth]{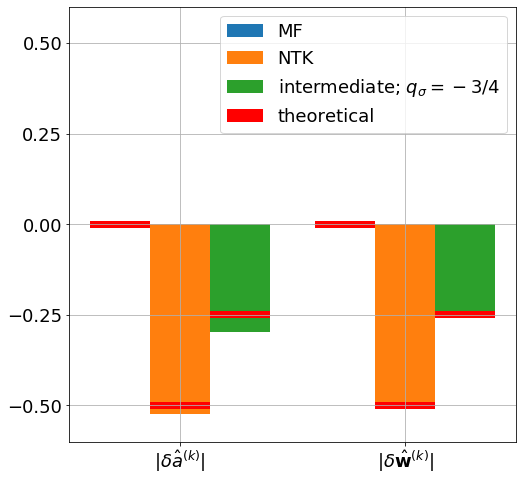}
    \caption{\textbf{Weight increments obey power-law dependencies with respect to the width.} \textit{Left:} absolute output weight increments averaged over hidden neurons as functions of width $d$. \textit{Center:} same for input weight increments. As one can see, weight increments are very well fitted with power-laws. \textit{Right:} numerical estimates for exponents of corresponding power-laws, as well as their theoretical values (denoted by red ticks). As one can see, theoretical values match numerical estimates very well. \textit{Setup:} We train a 1-hidden layer net on a subset of CIFAR2 (a dataset of first two classes of CIFAR10) of size 1000 with gradient descent. We take a reference net of width $d^* = 2^7 = 128$ trained with unscaled reference learning rates $\eta_a^* = \eta_w^* = 0.02$ and scale its hyperparameters according to MF (blue curves), NTK (orange curves) and intermediate scalings with $q_\sigma=-3/4$ (green curves, see main text). See SM~\ref{sec:appendix_experiments} for further details.}
    \label{fig:1hid_cifar2_sgd_da_dw_q}
\end{figure*}

\begin{figure*}[t]
    \centering
    \includegraphics[width=0.32\textwidth]{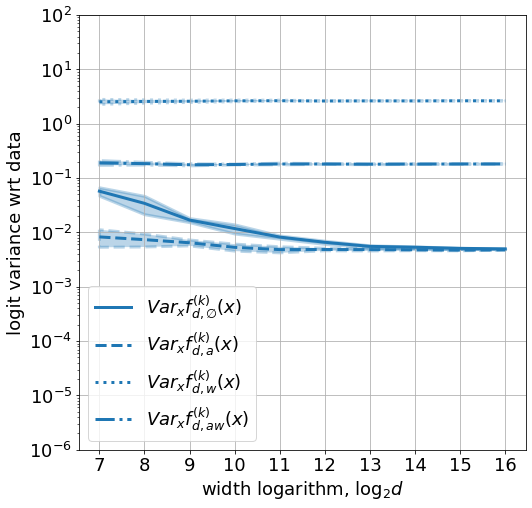}
    \includegraphics[width=0.32\textwidth]{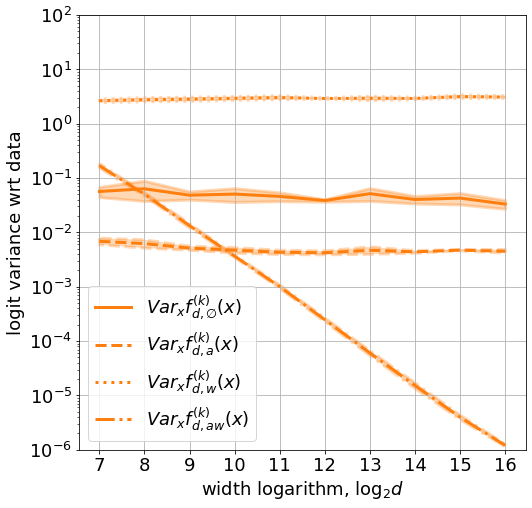}
    \includegraphics[width=0.32\textwidth]{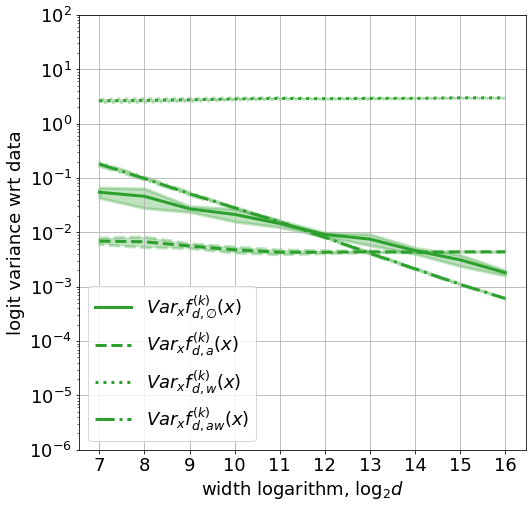}
    \caption{\textbf{Model decomposition terms obey power-law dependencies with respect to the width.} \textit{Left:} the variance with respect to the data distribution for terms of model decomposition (\ref{eq:f_decomposition_1hid_appendix}) as a function of width $d$ for the mean-field scaling. \textit{Center:} same for the NTK scaling. \textit{Right:} same for the intermediate scaling with $q_\sigma=-3/4$. As one can see, the data distribution variance of model decomposition terms are well-fitted with power-laws. \textit{Setup:} We train a 1-hidden layer net on a subset of CIFAR2 (a dataset of first two classes of CIFAR10) of size 1000 with a gradient descent. We take a reference net of width $d^* = 2^7 = 128$ trained with unscaled reference learning rates $\eta_a^* = \eta_w^* = 0.02$ and scale its hyperparameters according to MF (blue curves), NTK (orange curves) and intermediate scalings with $q_\sigma=-3/4$ (green curves, see text). See SM~\ref{sec:appendix_experiments} for further details.}
    \label{fig:1hid_cifar2_sgd_var_f_decomp}
\end{figure*}

\section{Derivation of $\varkappa$-terms in a one hidden layer case}
\label{sec:appendix_kappa_1hid}

For the sake of completeness, we copy all necessary definitions from Section~3 here.
A gradient descent step is defined as follows:
\begin{equation*}
    \Delta\delta\hat a_r^{(k)} = 
    - \hat\eta_a \sigma \EE \nabla_f^{(k)} \ell \; \phi((\hat\ww_r^{(0)} + \delta\hat\ww_r^{(k)})^T \xx),
\end{equation*}
\begin{equation}
    \Delta\delta\hat\ww_r^{(k)} = 
    - \hat\eta_w \sigma \EE \nabla_f^{(k)} \ell \; (\hat a_r^{(0)} + \delta\hat a_r^{(k)}) \phi'(\ldots) \xx,
    \label{eq:1hid_dynamics_appendix}
\end{equation}
\begin{equation*}
    \delta\hat a_r^{(0)} = 0, \; \delta\hat\ww_r^{(0)} = 0, \;
    \hat a_r^{(0)} \sim \NN(0, 1), \; \hat\ww_r^{(0)} \sim \NN(0, I);
\end{equation*}
\begin{equation*}
    f_d^{(k)}(\xx) = \sigma \sum_{r=1}^d (\hat a_r^{(0)} + \delta\hat a_r^{(k)}) \phi((\hat\ww_r^{(0)} + \delta\hat\ww_r^{(k)})^T \xx).
\end{equation*}
We assume:
\begin{equation*}
    \sigma \propto d^{q_\sigma}, \quad
    \hat\eta_{a/w} \propto d^{\tilde q_{a/w}}.
\end{equation*}
\begin{equation}
    |\delta\hat a_r^{(k)}| \propto d^{q_a^{(k)}}, \quad
    \|\delta\hat\ww_r^{(k)}\| \propto d^{q_w^{(k)}}.
\end{equation}
Assuming our model $f_d^{(k)}$ does not diverge with $d$, gradient step equations (\ref{eq:1hid_dynamics_appendix}) imply:
\begin{equation*}
    q_{a/w}^{(1)} = q_\sigma + \tilde q_{a/w},
\end{equation*}
\begin{equation}
    q_{a/w}^{(k+1)} = \max(q_{a/w}^{(k)}, q_{a/w}^{(1)} + \max(0, q_{w/a}^{(k)})).
    \label{eq:qk_1hid_appendix}
\end{equation}
We decompose our $f$ as:
\begin{equation}
    f_d^{(k)}(\xx) =
    f_{d,\emptyset}^{(k)}(\xx) + f_{d,a}^{(k)}(\xx) + f_{d,w}^{(k)}(\xx) + f_{d,aw}^{(k)}(\xx),
    \label{eq:f_decomposition_1hid_appendix}
\end{equation}
\begin{equation*}
    f_{d,\emptyset}^{(k)}(\xx) =
    \sigma \sum_{r=1}^d \hat a_r^{(0)} \phi'(\ldots) \hat\ww_r^{(0),T} \xx,
\end{equation*}
\begin{equation*}
    f_{d,a}^{(k)}(\xx) =
    \sigma \sum_{r=1}^d \delta\hat a_r^{(k)} \phi'(\ldots) \hat\ww_r^{(0),T} \xx,
\end{equation*}
\begin{equation*}
    f_{d,w}^{(k)}(\xx) =
    \sigma \sum_{r=1}^d \hat a_r^{(0)} \phi'(\ldots) \delta\hat\ww_r^{(k),T} \xx,
\end{equation*}
\begin{equation*}
    f_{d,aw}^{(k)}(\xx) =
    \sigma \sum_{r=1}^d \delta\hat a_r^{(k)} \phi'(\ldots) \delta\hat\ww_r^{(k),T} \xx,
\end{equation*}
where $\phi'(\ldots)$ is a shorthand for $\phi'((\hat\ww_r^{(0)} + \delta\hat\ww_r^{(k)})^T \xx)$ here.
We assume $f_d^{(k)}(\xx) \propto d^{q_f^{(k)}}$, $f_{d,\emptyset}^{(k)}(\xx) \propto d^{q_{f,\emptyset}^{(k)}}$, and so on.

By definition of decomposition~(\ref{eq:f_decomposition_1hid_appendix}) terms, we have:
\begin{equation*}
    q_{f,\emptyset}^{(k)} = q_\sigma + \varkappa_\emptyset^{(k)}, \quad
    q_{f,a/w}^{(k)} = q_{a/w}^{(k)} + q_\sigma + \varkappa_{a/w}^{(k)},
\end{equation*}
\begin{equation}
    q_{f,aw}^{(k)} = q_a^{(k)} + q_w^{(k)} + q_\sigma + \varkappa_{aw}^{(k)},
    \label{eq:q_terms_def_1hid_appendix}
\end{equation}
where all $\varkappa \in [1/2,1]$.

Our goal now is to compute $\varkappa$-terms for different values of $q_\sigma$ and $\tilde q_{a/w}$.
However it is more convenient to consider different cases for $q_a^{(1)}$ and $q_w^{(1)}$ instead.

\subsection{$q_a^{(1)}$ and $q_w^{(1)}$ are both negative}

In this case equations (\ref{eq:qk_1hid_appendix}) imply $q_{a/w}^{(k)} = q_{a/w}^{(1)} < 0$ $\forall k \geq 1$.
Hence $\phi'((\hat\ww_r^{(0)} + \delta\hat\ww_r^{(k)})^T \xx) \sim \phi'(\hat\ww_r^{(0),T} \xx)$ as $d \to \infty$.
Hence by the Central Limit Theorem, $\sum_{r=1}^d \hat a_r^{(0)} \phi'(\ldots) \hat\ww_r^{(0),T} \xx \propto d^{1/2}$.
This means that $\varkappa_\emptyset^{(k)} = 1/2$.

At the same time, using the definition of the gradient step for $\delta\hat\ww_r^{(k)}$,
\begin{eqnarray*}
    f_{d,w}^{(k)}(\xx)
    =
    \sigma \sum_{r=1}^d \hat a_r^{(0)} \phi'((\hat\ww_r^{(0)} + \delta\hat\ww_r^{(k)})^T \xx) \delta\hat\ww_r^{(k),T} \xx 
    \propto\\\propto
    \hat\eta_w \sigma^2 \sum_{r=1}^d \hat a_r^{(0)} \phi'((\hat\ww_r^{(0)} + \delta\hat\ww_r^{(k)})^T \xx) (\hat a_r^{(0)} + \delta\hat a_r^{(k-1)}) 
    \times\\\times 
    \phi'((\hat\ww_r^{(0)} + \delta\hat\ww_r^{(k-1)})^T \xx) \xx^T \xx 
    \sim\\\sim
    \hat\eta_w \sigma^2 \sum_{r=1}^d (\hat a_r^{(0)})^2 (\phi'(\hat\ww_r^{(0),T} \xx))^2 \xx^T \xx.
\end{eqnarray*}
We see that expression inside the sum has non-zero expectation, hence the sum scales as $d$, not as $d^{1/2}$.
Hence $\varkappa_w^{(k)} = 1$.
Using the similar reasoning we conclude that $\varkappa_a^{(k)} = 1$.
For $f_{d,aw}^{(k)}$ we have:
\begin{eqnarray*}
    f_{d,aw}^{(k)}(\xx)
    =
    \sigma \sum_{r=1}^d \delta\hat a_r^{(k)} \phi'((\hat\ww_r^{(0)} + \delta\hat\ww_r^{(k)})^T \xx) \delta\hat\ww_r^{(k),T} \xx 
    \propto\\\propto
    \hat\eta_a \hat\eta_w \sigma^3 \sum_{r=1}^d (\hat\ww_r^{(0)} + \delta\hat\ww_r^{(k-1)})^T \xx 
    \times\\\times 
    (\phi'((\hat\ww_r^{(0)} + \delta\hat\ww_r^{(k-1)})^T \xx))^2 
    \times\\\times 
    \phi'((\hat\ww_r^{(0)} + \delta\hat\ww_r^{(k)})^T \xx) (\hat a_r^{(0)} + \delta\hat a_r^{(k-1)}) \xx^T \xx
    \sim\\\sim
    \hat\eta_a \hat\eta_w \sigma^3 \sum_{r=1}^d \hat a_r^{(0)} \xx^T \xx (\phi'(\hat\ww_r^{(0),T} \xx))^3 \hat\ww_r^{(0),T} \xx.
\end{eqnarray*}
Here all random terms of the sum has zero expectation and $\hat a_r^{(0)}$ is independent from $(\phi'(\hat\ww_r^{(0),T} \xx))^3 \hat\ww_r^{(0)}$; hence the sum scales as $d^{1/2}$ and consequently $\varkappa_{aw}^{(k)} = 1/2$.

Summing up, if $q_{a/w}^{(1)} < 0$, then $\varkappa_\emptyset^{(k)} = \varkappa_{aw}^{(k)} = 1/2$ and $\varkappa_{a/w}^{(k)} = 1$ $\forall k \geq 1$.
Note that the NTK scaling is a subcase of this case.

\subsection{$q_a^{(1)}$ and $q_w^{(1)}$ are both zeros}

In this case equations (\ref{eq:qk_1hid_appendix}) imply $q_{a/w}^{(k)} = q_{a/w}^{(1)} = 0$ $\forall k \geq 1$.
Hence, generally, both $\delta\hat a^{(k)}$ and $\delta\hat\ww^{(k)}$ depend on both $\hat a^{(0)}$ and $\hat\ww^{(0)}$.
This implies that sums in definitions of $f_{d,a}^{(k)}$, $f_{d,w}^{(k)}$ and $f_{d,aw}^{(k)}$ scale as $d$; hence $\varkappa_a^{(k)} = \varkappa_w^{(k)} = \varkappa_{aw}^{(k)} = 1$ $\forall k > 1$.
Moreover, this implies that the sum
\begin{equation*}
    f_{d,\emptyset}^{(k)} =
    \sigma \sum_{r=1}^d \hat a_r^{(0)} \phi'((\hat\ww_r^{(0)} + \delta\hat\ww_r^{(k)})^T \xx) \hat\ww_r^{(0),T} \xx
\end{equation*}
also scales as $d$.
Hence $\varkappa_\emptyset^{(k)} = 1$ $\forall k \geq 1$.
Note that this is the case of the MF scaling.

\section{Other meaningful scalings}
\label{sec:appendix_other_scalings}

In the main text we have considered two solution classes for a system of equations and inequlaities that defines a meaningful scaling.
One class corresponds to the case of both $q_a^{(1)}$ and $q_w^{(1)}$ being less than zero, while the other one corresponds to the case of both of them being zeros.
In this section we consider all other possible cases.

\subsection{$q_a^{(1)} = 0$, while $q_w^{(1)} < 0$}
\label{sec:qa0_qw-}

In this case equations (\ref{eq:qk_1hid_appendix}) imply $q_a^{(k)} = q_a^{(1)} = 0$ and $q_w^{(k)} = q_w^{(1)} < 0$ $\forall k \geq 1$.
Since $\hat\ww^{(k)} \to \hat\ww^{(0)}$, $\delta\hat a^{(k)}$ does not become independent on $\hat\ww^{(0)}$ as $d\to\infty$; hence $\varkappa_a^{(k)} = 1$.
Also, since $q_w^{(k)} < 0$, $\phi'(\hat\ww^{(k),T} \xx) \to \phi'(\hat\ww^{(0),T} \xx)$; hence $\varkappa_\emptyset^{(k)} = 1/2$.

A condition $q_{f,a}^{(k)} = q_\sigma + q_a^{(1)} + \varkappa_a^{(k)} \leq 0$ then implies that $q_\sigma \leq -1$.
Hence $q_{f,\emptyset}^{(k)} = q_\sigma + \varkappa_\emptyset^{(k)} \leq -1/2 < 0$. 
Moreover, $q_{f,w}^{(k)} = q_\sigma + q_w^{(k)} + \varkappa_w^{(k)} < 0$, since $\varkappa_w^{(k)} \leq 1$, and similarly, $q_{f,aw}^{(k)} = q_\sigma + q_a^{(k)} + q_w^{(k)} + \varkappa_{aw}^{(k)} < 0$ since $\varkappa_{aw}^{(k)} \leq 1$.

Hence in order to have a non-vanishing limit model we have to have $q_{f,a}^{(k)} = 0$ which implies $q_\sigma = -1$.
Note that $q_a^{(1)} = q_\sigma + \tilde q_a = 0$; since then $\tilde q_a = 1$.
Since $f_{d,a}^{(k)}$ is the only term of the model decomposition that remains finite as $d \to \infty$, we essentialy learn the output layer only in the limit of $d \to \infty$.
Hence we can describe the dynamics of the limit model both in terms of the evolution of the limit measure and in terms of a constant deterministic limit kernel.

Indeed, suppose $\sigma = \sigma^* d^{-1}$ and $\hat\eta_a = \hat\eta_a^* d$. 
The limit measure evolution writes as follows:
\begin{equation*}
    f_\infty^{(k)}(\xx) = \sigma^* \int \hat a \phi(\hat\ww^{T} \xx) \, \mu_\infty^{(k)}(d\hat a, d\hat\ww);
\end{equation*}
\begin{equation*}
    \mu_\infty^{(k+1)} = \TT_a(\mu_\infty^{(k)}; \; \hat\eta_a^* \sigma^*, \sigma^*),
    \qquad
    \mu_\infty^{(0)} = \NN_{1+d_0}(0, I),
\end{equation*}
where a gradient descent step operator $\TT_a$ is defined on probabilistic measures $\mu$ supported on a finite set of atoms $d$ as follows:
\begin{equation*}
    \TT_a(\mu_d; \; \hat\eta_a^* \sigma^*, \sigma^*) =
    \frac{1}{d} \sum_{r=1}^d \delta_{\hat a'_r} \otimes \delta_{\hat\ww_r},
\end{equation*}
where
\begin{equation*}
    \hat a'_r =
    \hat a_r - \hat\eta_a^* \sigma^* \EE_{\xx,y} \left. \frac{\partial \ell(y,z)}{\partial z} \right|_{z=f_d(\xx; \sigma^*)} \phi(\hat\ww_r^T \xx),
\end{equation*}
and $f_d(\xx; \sigma^*) = \sigma^* \int \hat a \phi(\hat\ww^{T} \xx) \, \mu_d(d\hat a, d\hat\ww)$ for $(\hat a_r, \hat\ww_r)$, $r \in [d]$, being atoms of measure $\mu_d$.

Consider now a kernel $\tilde\Theta_{a,\infty}$ defined as follows:
\begin{equation*}
    \tilde\Theta_{a,\infty}(\xx,\xx') =
    \hat\eta_a^* \sigma^{*,2} \EE_{\hat\ww \sim \NN(0,I_{d_0})} \phi(\hat\ww^T \xx) \phi(\hat\ww^T \xx').
\end{equation*}
Using the same argument as in SM~\ref{sec:appendix_limiting_dynamics}, we can write a continuous-time evolution of the limit model in terms of this kernel:
\begin{equation*}
    \dot f_\infty^{(t)}(\xx') = 
    -\EE_{\xx,y} \left. \frac{\partial \ell(y,z)}{\partial z} \right|_{z=f_\infty^{(t)}(\xx)} \tilde\Theta_{a,\infty}(\xx,\xx'),
\end{equation*}
\begin{equation*}
    f_\infty^{(0)}(\xx) =
    \EE_{(\hat a, \hat\ww) \sim \NN(0,I_{1+d_0})} \hat a \phi(\hat\ww^{T} \xx) =
    0.
\end{equation*}
Moreover, for the same argument as in SM~\ref{sec:appendix_limiting_dynamics}, the similar evolution equation holds also for the discrete-time evolution:
\begin{equation*}
    \Delta f_\infty^{(k)}(\xx') = 
    -\EE_{\xx,y} \left. \frac{\partial \ell(y,z)}{\partial z} \right|_{z=f_\infty^{(k)}(\xx)} \tilde\Theta_{a,\infty}(\xx,\xx').
\end{equation*}

\subsection{$q_w^{(1)} = 0$, while $q_a^{(1)} < 0$}
\label{sec:qa-_qw0}

This case is almost analogous to the previous one.
Equations (\ref{eq:qk_1hid_appendix}) imply $q_w^{(k)} = q_w^{(1)} = 0$ and $q_a^{(k)} = q_a^{(1)} < 0$ $\forall k \geq 1$, and $\delta\hat w^{(k)}$ does not become independent on $\hat a^{(0)}$ as $d\to\infty$; hence $\varkappa_w^{(k)} = 1$.
Note that in contrast to the previous case, since $q_w^{(k)} = 0$, $\phi'(\hat\ww^{(k),T} \xx) \nrightarrow \phi'(\hat\ww^{(0),T} \xx)$; hence $\varkappa_\emptyset^{(k)} = 1$.

A condition $q_{f,w}^{(k)} = q_\sigma + q_w^{(1)} + \varkappa_w^{(k)} \leq 0$ (or, equivalently, a condition $q_{f,\emptyset}^{(k)} = q_\sigma + \varkappa_\emptyset^{(k)} \leq 0$) then implies that $q_\sigma \leq -1$.
Hence $q_{f,a}^{(k)} = q_\sigma + q_a^{(k)} + \varkappa_a^{(k)} < 0$, since $\varkappa_a^{(k)} \leq 1$, and similarly, $q_{f,aw}^{(k)} = q_\sigma + q_a^{(k)} + q_w^{(k)} + \varkappa_{aw}^{(k)} < 0$, since $\varkappa_{aw}^{(k)} \leq 1$.

Hence in order to have a non-vanishing limit model we have to have $q_{f,\emptyset}^{(k)} = q_{f,w}^{(k)} = 0$, which implies $q_\sigma = -1$.
Note that $q_w^{(1)} = q_\sigma + \tilde q_w = 0$; since then $\tilde q_w = 1$.
In this case we again essentialy learn only a single layer in the limit of $d \to \infty$.
However a kernel which drives the dynamics evolves with $k$ since $w^{(k)} \nrightarrow w^{(0)}$:
\begin{multline*}
    \tilde\Theta^{(k)}_{w,\infty}(\xx,\xx') =
    \hat\eta_w^* \sigma^{*,2} \lim_{d\to\infty} \frac{1}{d} \sum_{d=1}^\infty \EE_{\hat a \sim \NN(0,1)} | \hat a |^2 \times\\\times \phi'(\hat\ww^{(k),T} \xx) \phi'(\hat\ww^{(k),T} \xx') \xx^T \xx'.
\end{multline*}
Nevertheless, the dynamics can be described in terms of the measure evolution similar to the previous case.

\subsection{$q_a^{(1)} > 0$, while $q_a^{(1)} + q_w^{(1)} \leq 0$}
\label{sec:qa+_qw-}

In this case equations (\ref{eq:qk_1hid_appendix}) imply $q_a^{(k)} = q_a^{(1)} > 0$, while $q_w^{(k)} = q_a^{(1)} + q_w^{(1)} \leq 0$ $\forall k > 1$.
Similar to the case of SM~\ref{sec:qa0_qw-}, $\delta\hat a^{(k)}$ does not become independent on $\hat\ww^{(0)}$ as $d\to\infty$; hence $\varkappa_a^{(k)} = 1$.

Consider $k > 1$.
A condition $q_{f,a}^{(k)} = q_\sigma + q_a^{(1)} + \varkappa_a^{(k)} \leq 0$ then implies $q_\sigma \leq -1 - q_a^{(1)} < -1$.
Hence $q_{f,\emptyset}^{(k)} = q_\sigma + \varkappa_\emptyset^{(k)} < 0$ since $\varkappa_\emptyset^{(k)} \leq 1$.
Moreover, $q_{f,w}^{(k)} = q_\sigma + q_w^{(k)} + \varkappa_w^{(k)} < 0$ since $\varkappa_w^{(k)} \leq 1$ and $q_w^{(k)} = q_a^{(1)} + q_w^{(1)} \leq 0$, and similarly, $q_{f,aw}^{(k)} = q_\sigma + q_a^{(k)} + q_w^{(k)} + \varkappa_{aw}^{(k)} \leq q_{f,a}^{(k)} \leq 0$ since $\varkappa_{aw}^{(k)} \leq 1$.

Hence in order to have a non-vanishing limit model we have to have $q_{f,a}^{(k)} = 0$, which implies $q_a^{(1)} = q_\sigma + \tilde q_a = -1-q_\sigma$.
Since then $\tilde q_a = -1-2q_\sigma$, while $q_\sigma < -1$.
Suppose $q_w^{(k)} = q_a^{(1)} + q_w^{(1)} < 0$.
In this case $q_{f,aw}^{(k)} < 0$, hence $f_{d,a}^{(k)}$ is the only term of the model decomposition that remains finite as $d \to \infty$, and we learn the output layer only in the limit of $d \to \infty$, as was the case of SM~\ref{sec:qa0_qw-}.
In this case we are again able to describe the dynamics of the limit model both in terms of the evolution of the limit measure and in terms of a constant deterministic limiting kernel.

While the kernel description does not change at all compared to the case described in SM~\ref{sec:qa0_qw-}, measure evolution equations require slight modifications.
Indeed, suppose $\sigma = \sigma^* d^{q_\sigma}$ and $\hat\eta_a = \hat\eta_a^* d^{-1-2q_\sigma}$. 
The limit measure evolution writes as follows:
\begin{equation*}
    f_\infty^{(k)}(\xx) = \sigma^* \int \hat a \phi(\hat\ww^{T} \xx) \, \mu_\infty^{(k)}(d\hat a, d\hat\ww);
\end{equation*}
\begin{equation*}
    \mu_\infty^{(k+1)} = \TT_a(\mu_\infty^{(k)}; \; \hat\eta_a^* \sigma^*, \sigma^*),
    \qquad
    \mu_\infty^{(0)} = \delta \otimes \NN(0, I_{d_0}),
\end{equation*}
where a gradient descent step operator $\TT_a$ is defined on probabilistic measures $\mu$ supported on a finite set of atoms $d$ as follows:
\begin{equation*}
    \TT_a(\mu_d; \; \hat\eta_a^* \sigma^*, \sigma^*) =
    \frac{1}{d} \sum_{r=1}^d \delta_{\hat a'_r} \otimes \delta_{\hat\ww_r},
\end{equation*}
where
\begin{equation*}
    \hat a'_r =
    \hat a_r - \hat\eta_a^* \sigma^* \EE_{\xx,y} \left. \frac{\partial \ell(y,z)}{\partial z} \right|_{z=f_d(\xx; \sigma^*)} \phi(\hat\ww_r^T \xx),
\end{equation*}
and $f_d(\xx; \sigma^*) = \sigma^* \int \hat a \phi(\hat\ww^{T} \xx) \, \mu_d(d\hat a, d\hat\ww)$ for $(\hat a_r, \hat\ww_r)$, $r \in [d]$, being atoms of measure $\mu_d$.

The only thing changed here is that in the limit output weights $\hat a$ are initialized with zeros.
The reason for this is that the increment at the first step $\delta\hat a^{(0)} = -\hat\eta_a \sigma \EE \nabla_f^{(0)} \ell \, \phi(\hat\ww^{(0),T} \xx)$ grows as $d^{-1-q_\sigma}$ as $d \to \infty$.
Hence $\hat a^{(k)} \to \delta\hat a^{(k)}$ as $d\to\infty$ for $k \geq 1$.

Suppose now $q_w^{(k)} = q_a^{(1)} + q_w^{(1)} = 0$.
In this case $\delta\hat a^{(k)}$ and $\delta\hat\ww^{(k)}$ do not become independent, since $\hat\ww^{(k)} \nrightarrow \hat\ww^{(0)}$; hence $\varkappa_{aw}^{(k)} = 1$. 
This implies that $q_{f,aw}^{(k)} = q_\sigma + q_a^{(k)} + q_w^{(k)} + \varkappa_{aw}^{(k)} = 0$ for $k > 1$, hence two terms of the model decomposition remain finite as $d \to \infty$: $f_{d,a}^{(k)}$ and $f_{d,aw}^{(k)}$.
Note that $q_a^{(1)} + q_w^{(1)} = 0$ implies $\tilde q_w = -\tilde q_a - 2q_\sigma = 1$.

Suppose $\hat\eta_w = \hat\eta_w^* d$.
In this case we are again able to describe the dynamics of the limit model in terms of the evolution of the limit measure:
\begin{equation*}
    f_\infty^{(k)}(\xx) = \sigma^* \int \hat a \phi(\hat\ww^{T} \xx) \, \mu_\infty^{(k)}(d\hat a, d\hat\ww);
\end{equation*}
\begin{equation*}
    \mu_\infty^{(k+1)} = \TT_a(\mu_\infty^{(k)}; \; \hat\eta_a^* \sigma^*, \sigma^*),
    \qquad
    \mu_\infty^{(0)} = \delta \otimes \NN(0, I_{d_0}),
\end{equation*}
where a gradient descent step operator $\TT_a$ is defined on probabilistic measures $\mu$ supported on a finite set of atoms $d$ as follows:
\begin{equation*}
    \TT(\mu_d; \; \hat\eta_a^* \sigma^*, \hat\eta_w^* \sigma^*, \sigma^*) =
    \frac{1}{d} \sum_{r=1}^d \delta_{\hat a'_r} \otimes \delta_{\hat\ww_r},
\end{equation*}
where
\begin{equation*}
    \hat a'_r =
    \hat a_r - \hat\eta_a^* \sigma^* \EE_{\xx,y} \left. \frac{\partial \ell(y,z)}{\partial z} \right|_{z=f_d(\xx; \sigma^*)} \phi(\hat\ww_r^T \xx),
\end{equation*}
\begin{equation*}
    \hat\ww'_r =
    \hat\ww_r - \hat\eta_w^* \sigma^* \EE_{\xx,y} \left. \frac{\partial \ell(y,z)}{\partial z} \right|_{z=f_d(\xx; \sigma^*)} \hat a_r \phi'(\hat\ww_r^T \xx),
\end{equation*}
and $f_d(\xx; \sigma^*) = \sigma^* \int \hat a \phi(\hat\ww^{T} \xx) \, \mu_d(d\hat a, d\hat\ww)$ for $(\hat a_r, \hat\ww_r)$, $r \in [d]$, being atoms of measure $\mu_d$.

We have a zero initialization for the output weights for the same reason as for the case of $q_w^{(k)} < 0$.
Note that in contrast to the above-mentioned case, the case of $q_w^{(k)} = 0$ cannot be described in terms of a constant limit kernel.
Indeed, we have a stochastic time-dependent kernel for finite width $d$ associated with output weights learning:
\begin{equation*}
    \tilde\Theta_{a,\infty}^{(k)}(\xx,\xx') =
    \hat\eta_a^* \sigma^{*,2} \frac{1}{d} \sum_{r=1}^d \phi(\hat\ww_r^{(k),T} \xx) \phi(\hat\ww_r^{(k),T} \xx').
\end{equation*}
This kernel converges to a deterministic one as $d\to\infty$ by the Law of Large Numbers, however, the limit kernel stays step-dependent, since $\hat\ww^{(k)} = \hat\ww^{(0)} + \delta\hat\ww^{(k)}$, while the last term here does not vanish as $d\to\infty$.

Note that the "default" case we have considered in the main text falls into the current case.
Indeed, by default we have $\sigma \propto d^{-1/2}$ and $\eta_{a/w} \propto 1$. 
This implies $q_\sigma = -1/2$, $\tilde q_a = 1$ and $\tilde q_w = 0$; consequently, $q_a^{(1)} = 1/2$ and $q_w^{(1)} = -1/2$.
However, as we have shown above, having $q_\sigma \leq -1 - q_a^{(1)} = -3/2$ is necessary to guarantee that the limit model does not diverge.
As we observe in Figure~1 of the main text a limit model resulted from the default scaling indeed diverges.

\subsection{$q_w^{(1)} > 0$, while $q_a^{(1)} + q_w^{(1)} \leq 0$}
\label{sec:qa-_qw+}

The difference between this case and the previous one is essentially the same as between cases of SM~\ref{sec:qa-_qw0} and of SM~\ref{sec:qa0_qw-}.
For this reason we leave this case as an exercise for the reader.

\subsection{$q_a^{(1)} + q_w^{(1)} > 0$}
\label{sec:qa+_qw+}

Suppose first that $q_a^{(1)} > 0$.
In this case equations (\ref{eq:qk_1hid_appendix}) imply $q_w^{(2)} = q_a^{(1)} + q_w^{(1)} > 0$ and $q_a^{(2)} \geq q_a^{(1)} > 0$.
It is easy to see that equations \ref{eq:qk_1hid_appendix} further imply $q_a^{(2k)} = q_w^{(2k)} = k (q_a^{(2)} + q_w^{(2)})$ $\forall k \geq 1$.
This means that $q_a^{(k)}$ and $q_w^{(k)}$ grow linearly with $k$.
Hence all of $q_{f,a}^{(k)}$, $q_{f,w}^{(k)}$, $q_{f,aw}^{(k)}$ become positive for large enough $k$ irrespective of $q_\sigma$.

Obviously, the same holds if $q_w^{(1)} > 0$.
Hence in this case our analysis suggests that a limit model $f_{\infty}^{(k)}$ diverges with $d$ for large enough $k$.
However, when our analysis predicts that a limit model diverges, we cannot guarantee that $\nabla_f^{(k)} \ell$ does not vanish with $d$, hence equations \ref{eq:qk_1hid_appendix} become generally incorrect.
Indeed, if a model reaches $100\%$ train accuracy at step $k$, then $\nabla_f^{(k)} \ell$ vanishes exponentially if $f$ grows.
This means that $f$ cannot really diverge width $d$ if it reaches $100\%$ train accuracy.

\section{A discrete-time mean-field limit of a network with a single hidden layer}
\label{sec:appendix_discrete_mf}

In this section we omit "hats" for brevity, assuming all relevant quantities to be scaled appropriately.

Recall that in the MF limit $\sigma \propto d^{-1}$ and $\eta_{a/w} \propto d$.
Suppose $\sigma = \sigma^* d^{-1}$ and w.l.o.g. $\eta_{a/w} = \eta^* d$.

We closely follow the measure-theoretic formalism of \citet{sirignano2018lln}.
Consider a measure in $( a, \ww)$-space at each step $k$ for a given $d$:
$$
    \mu_d^{(k)} = \frac{1}{d} \sum_{r=1}^d \delta_{ a_r^{(k)}} \otimes \delta_{\ww_r^{(k)}}.
$$
Given this, a neural network output can be represented as follows:
$$
    f_d^{(k)}(\xx) =
    \sigma^* \int  a \phi(\ww^{T} \xx) \mu_d^{(k)}(da, \, d\ww).
$$

A gradient descent step is written as follows:
\begin{equation*}
    \Delta a_r^{(k)} = 
    - \eta^* \sigma^* \EE_{\xx,y} \nabla_f^{(k)} \ell \; \phi(\ww_r^{(k),T} \xx),
\end{equation*}
\begin{equation}
    \Delta\ww_r^{(k)} = 
    - \eta^* \sigma^* \EE_{\xx,y} \nabla_f^{(k)} \ell \;  a_r^{(k)} \phi'(\ww_r^{(k),T} \xx) \xx.
    \label{eq:1hid_dynamics_original}
\end{equation}

For technical reasons we assume weights $a_r$ and $\ww_r$ $\forall r \in [d]$ to be initialized from the distribution $\PP$ with compact support:
\begin{equation}
     a_r^{(0)} \sim \PP, \quad w_{r,j}^{(0)} \sim \PP \quad \forall r \in [d] \; \forall j \in [d_0].
    \label{eq:init_conds}
\end{equation}
One can notice that in the main body of this work we have assumed $\PP$ to be $\NN(0,1)$ that does not have a compact support.
Nevertheless, it is more common in practice to use a truncated normal distribution instead of the original normal one, which was used in the main body for the ease of explanation only.

We introduce a transition operator $\TT$ which represents a gradient descent step (\ref{eq:1hid_dynamics_original}):
\begin{equation}
    \mu_d^{(k+1)} =
    \TT(\mu_d^{(k)}; \; \eta^*, \sigma^*).
    \label{eq:mu_dynamics}
\end{equation}
This operator depends explicitly on $\sigma^*$ because $\nabla_f^{(k)} \ell$ is a gradient of $f_d^{(k)}$ and the latter depends on $\sigma^*$.
This representation clearly shows that a gradient descent defines a Markov chain for measures on the weight space with deterministic transitions.
The initial measure $\mu_d^{(0)}$ is given by initial conditions~(\ref{eq:init_conds}).
Since they are random, measure $\mu_d^{(k)}$ is a random measure for any $k \geq 0$ and for any $d \in \mathbb{N}$.
Nevertheless, for all $k \geq 0$ $\mu_d^{(k)}$ converges to a corresponding limit measure as the following theorem states:
\begin{thm}
    Suppose $\ell(y,\cdot) \in C^2(\RR)$, $\partial \ell(y,z) / \partial z$ is bounded and Lipschitz continuous and $\phi$ is Lipschitz continuous.
    Suppose also that $\xx$ has finite moments up to the fourth one.
    Finally, assume that the distribution of initial weights $\PP$ has compact support.
    Then $\forall k \geq 0$ there exists a measure $\mu_\infty^{(k)}$ such that $\mu_d^{(k)}$ converges to $\mu_\infty^{(k)}$ weakly as $d \to \infty$ wrt to the 2-Wasserstein metric and each measure $\mu_d^{(k)}$ is supported on a ball $\BB_{R_k}$ a.s. for all $d$.
    \label{thm:asymptotic_similarity_1hid_relu_gd}
\end{thm}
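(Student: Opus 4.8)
The plan is to prove the statement by induction on the step $k$, treating one gradient-descent step as a single deterministic operator $\TT$ that acts in \emph{exactly the same way} on the empirical measures $\mu_d^{(k)}$ and on the candidate limit measures; this is precisely what the mean-field scaling buys us, since $\TT$ then carries no explicit dependence on $d$. Writing $f[\mu](\xx) = \sigma^* \int a \phi(\ww^T \xx)\,\mu(da,d\ww)$ and $g_\mu(\xx,y) = \partial_z\ell(y, f[\mu](\xx))$, one step is the pushforward $\TT(\mu) = (F_\mu)_* \mu$ of the per-particle map
$$F_\mu(a,\ww) = \Big(a - \eta^*\sigma^* \EE_{\xx,y}\, g_\mu\,\phi(\ww^T\xx),\; \ww - \eta^*\sigma^* \EE_{\xx,y}\, g_\mu\, a\,\phi'(\ww^T\xx)\,\xx\Big),$$
where $g_\mu$ depends on $\mu$ but not on the individual particle $(a,\ww)$. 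I would define $\mu_\infty^{(k)} := \TT^k(\mu_\infty^{(0)})$ with $\mu_\infty^{(0)} = \PP^{\otimes(1+d_0)}$, and establish the base case $W_2(\mu_d^{(0)}, \mu_\infty^{(0)}) \to 0$ from the law of large numbers for i.i.d.\ empirical measures (Varadarajan): since $\PP$ is compactly supported, weak convergence upgrades to convergence in $W_2$.

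First I would prove the boundedness claim by induction, which also supplies the uniform constants needed later. With $R_0$ the radius of $\mathrm{supp}\,\PP$, assume every atom of $\mu_d^{(k)}$ lies in $\BB_{R_k}$. Using $\|\partial_z\ell\|_\infty < \infty$, $\|\phi'\|_\infty < \infty$ and the Lipschitz bound $|\phi(z)| \le L_\phi |z|$, the increments satisfy $|\Delta a_r| \le \eta^*\sigma^*\|\partial_z\ell\|_\infty L_\phi R_k \EE\|\xx\|$ and $\|\Delta\ww_r\| \le \eta^*\sigma^*\|\partial_z\ell\|_\infty \|\phi'\|_\infty R_k \EE\|\xx\|$, both finite by the first-moment assumption. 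Hence $R_{k+1} = R_k + \const_k$ works, and the identical bound holds for $\mu_\infty^{(k)}$; in particular all measures at step $k$ are supported on the fixed, deterministic ball $\BB_{R_k}$ independently of $d$, which is the second assertion of the theorem.

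The crux is the contraction-type estimate: for any two probability measures $\mu,\nu$ supported on $\BB_R$, $W_2(\TT\mu, \TT\nu) \le L_R\, W_2(\mu,\nu)$. I would obtain it from the pushforward-coupling inequality $W_2^2(\TT\mu,\TT\nu) \le \int |F_\mu(x) - F_\nu(y)|^2 \, d\gamma(x,y)$ for $\gamma$ an optimal coupling of $(\mu,\nu)$, splitting $|F_\mu(x)-F_\nu(y)| \le |F_\mu(x)-F_\mu(y)| + |F_\mu(y)-F_\nu(y)|$. The first term is handled by showing $F_\mu$ is Lipschitz on $\BB_R$ with a constant uniform over all $\mu$ supported in $\BB_R$; the second by showing $\|F_\mu - F_\nu\|_{\infty,\BB_R} \le \const_R\, W_2(\mu,\nu)$. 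For the latter, $f[\mu]-f[\nu]$ is controlled by $W_1 \le W_2$ because $(a,\ww) \mapsto a\phi(\ww^T\xx)$ is Lipschitz on $\BB_R$ with constant $\lesssim (1+\|\xx\|)$, and then $|g_\mu - g_\nu| \le L_\ell\, |f[\mu]-f[\nu]|$ since $\partial_z\ell$ is Lipschitz. Feeding these bounds into the $\xx$-expectations defining $F_\mu-F_\nu$ produces integrands accumulating several powers of $\|\xx\|$ (one from the factor $\phi'(\ww^T\xx)\xx$ in the $\ww$-update, further powers from the $f$-difference bound), so integrability rests on the finite-fourth-moment assumption on $\xx$. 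Once this estimate holds, the triangle inequality closes the induction:
$$W_2(\mu_d^{(k+1)}, \mu_\infty^{(k+1)}) = W_2(\TT\mu_d^{(k)}, \TT\mu_\infty^{(k)}) \le L_{R_k}\, W_2(\mu_d^{(k)}, \mu_\infty^{(k)}) \xrightarrow{d\to\infty} 0,$$
and since $R_k$ and $L_{R_k}$ are deterministic, iterating over finitely many steps causes no difficulty.

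I expect the \emph{main obstacle} to be the Lipschitz continuity of $F_\mu$ in the $\ww$-argument, rather than the $\mu$-dependence. Because $\phi$ is only leaky ReLU, its derivative $\phi'$ is piecewise constant and hence discontinuous, so $\ww \mapsto \EE_{\xx,y}\, g_\mu\, \phi'(\ww^T\xx)\,\xx$ is Lipschitz only after the expectation smooths the jump across the hyperplane $\{\ww^T\xx=0\}$; making this rigorous requires mild regularity of the law of $\xx$ together with the moment control, and is where the estimates are most delicate. The remaining ingredients — the law-of-large-numbers base case, the support-preservation induction, and the pushforward coupling — are comparatively routine.
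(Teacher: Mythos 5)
Your proposal is correct and shares the paper's overall architecture --- induction on $k$ through a $d$-independent transition operator $\TT$, a law-of-large-numbers base case at $k=0$, and inductive preservation of the supports $\BB_{R_k}$ --- but the engine of the induction step is genuinely different. The paper (SM~F) works in the framework of weak convergence of \emph{random} measures: convergence of $\mu_d^{(k)}$ is encoded as $\EE\, h[\mu_d^{(k)}] \to h[\mu_\infty^{(k)}]$ for every $h \in C_b(\MM(\RR^{1+d_0}))$, so the inductive step reduces to showing $h \circ \TT \in C_b$, i.e.\ to the purely \emph{qualitative} Lemma~\ref{lemma:transition_operator} that $\TT$ is continuous on $\MM(\BB_R^{1+d_0})$ w.r.t.\ $\WW_2$; indeed the paper's coupling estimate, whose cross term is bounded by $4R\bigl(\int \|\delta\Delta\theta_d\|_2^2\, d\mu^j_{d,\infty}\bigr)^{1/2}$, only yields a H\"older-$1/2$-type modulus near zero, which suffices for continuity but not for a contraction. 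You instead prove the stronger quantitative bound $\WW_2(\TT\mu,\TT\nu) \le L_R\,\WW_2(\mu,\nu)$ via the pushforward coupling and the splitting $|F_\mu(x)-F_\nu(y)| \le |F_\mu(x)-F_\mu(y)| + |F_\mu(y)-F_\nu(y)|$, and close the induction with a plain triangle inequality on almost-surely converging $\WW_2$ distances (Varadarajan base case). What each route buys: yours gives almost-sure $\WW_2$ convergence of $\mu_d^{(k)}$ to the deterministic $\mu_\infty^{(k)}$, strictly stronger than the paper's weak convergence (which Corollary~1 upgrades only to convergence in probability), at the price of needing $F_\mu$ Lipschitz in the particle argument uniformly over $\mu$ supported in $\BB_R$; the paper's route needs only continuity of $\TT$ but delivers a weaker mode of convergence.

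On the step you flag as the main obstacle --- Lipschitzness of the $\ww$-update through the discontinuous $\phi'$ --- you are ahead of the paper rather than behind it. The paper's proof of Lemma~\ref{lemma:transition_operator} handles the $a$-component carefully (Lipschitz $\phi$, Lipschitz and bounded $\partial\ell/\partial z$) but disposes of the $\ww$-component with ``we can apply the same logic,'' which tacitly requires $\theta \mapsto a\,\phi'(\ww^T\xx)$ to be Lipschitz and fails pointwise for the leaky ReLU. Your observation that only the $\xx$-averaged map $\ww \mapsto \EE_{\xx,y}\, g_\mu\, \phi'(\ww^T\xx)\,\xx$ needs to be Lipschitz, under mild regularity of the law of $\xx$, is the honest repair --- but note that this regularity is an assumption beyond the theorem's stated hypotheses (and fails, e.g., for an empirical training distribution), so this gap is shared by both arguments rather than introduced by yours.
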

\begin{proof}
    We prove this by induction on $k$.

    Let $k = 0$.
    Any measure $\mu$ on the weight space is uniquely determined by its action on all $g \in C(\RR^{1+d_0})$ with compact support: $\langle g, \mu \rangle = \int g(a, \ww) \mu(da, \, d\ww)$.
    If this measure is random, then the last integral is a random variable.
    Hence $\mu_d^{(0)}$ converges to $\mu_\infty^{(0)} = \PP$ weakly as $d \to \infty$, iff for all $g \in C(\RR^{1+d_0})$ with compact support $\langle g, \mu_d^{(0)} \rangle$ converges to $\langle g, \mu_\infty^{(0)} \rangle$ weakly as $d \to \infty$.

    Let $h \in C_b(\RR)$.
    Consider
    \begin{eqnarray*}
        \lim_{d \to \infty} \EE_{\aa^{(0)},  W^{(0)}} h\left(\left\langle g, \mu_d^{(0)} \right\rangle\right) =\\=
        \lim_{d \to \infty} \EE_{\aa^{(0)},  W^{(0)}} h\left(\frac{1}{d} \sum_{r=1}^d g( a^{(0)}_r, \ww^{(0)}_r)\right) =\\=
        h\left(\EE_{ a^{(0)}, \ww^{(0)}} g\left( a^{(0)}, \ww^{(0)}\right)\right) =
        h\left(\left\langle g, \mu_\infty^{(0)} \right\rangle\right),
    \end{eqnarray*}
    where the second equality comes from the Law of Large Numbers which is valid since initial weights are i.i.d.
    This proves a weak convergence of $\langle g, \mu_d^{(0)} \rangle$ to $\langle g, \mu_\infty^{(0)} \rangle$.
    As was noted above, this is equivalent to a weak convergence of measures $\mu_d^{(0)}$:
    $$
    \lim_{d \to \infty} \EE_{\aa^{(0)},  W^{(0)}} h[\mu_d^{(0)}] =
    h[\mu_\infty^{(0)}]
    $$
    for any $h \in C_b(\MM(\RR^{1+d_0}))$.

    Also, since all $a_r \sim \PP$, $w_{r,j} \sim \PP$ and $\PP$ has compact support, $\mu_d^{(0)}$ has compact support almost surely.
    Hence we can write $\mu_d^{(0)} \in \MM(\BB_{R_0}^{1+d_0})$ a.s. for some $R_0 < \infty$ $\forall d$.

    We have proven the induction base.
    By induction assumption, we have $\mu_d^{(k)} \in \MM(\BB_{R_k}^{1+d_0})$ a.s. for some $R_k < \infty$ $\forall d$.
    Let for any $h \in C_b(\MM(\RR^{1+d_0}))$
    $$
    \lim_{d \to \infty} \EE_{\aa^{(0)},  W^{(0)}} h[\mu_d^{(k)}] =
    h[\mu_\infty^{(k)}].
    $$
    By definition, this means weak convergence of measures $\mu_d^{(k)}$ to $\mu_\infty^{(k)}$.
    Then we have:
    $$
    \lim_{d \to \infty} \EE_{\aa^{(0)},  W^{(0)}} h[\mu_d^{(k+1)}] =
    \lim_{d \to \infty} \EE_{\aa^{(0)},  W^{(0)}} h[\TT(\mu_d^{(k)})].
    $$
    In order to prove that this limit exists and equals to $h[\TT(\mu_\infty^{(k)})]$ we have to show that $h \circ \TT \in C_b(\MM(\BB_{R_k}^{1+d_0}))$.

    We prove the following lemma in Section~\ref{sec:appendix_T_in_C}:
    \begin{lemma}
        \label{lemma:transition_operator}
        Given conditions of Theorem~\ref{thm:asymptotic_similarity_1hid_relu_gd} and $R < \infty$, the transition operator $T$ that performs a gradient descent step~(\ref{eq:mu_dynamics}) is continuous wrt the 2-Wasserstein metric on $\MM(\BB_R^{1+d_0})$.
    \end{lemma}    

    Hence $h \circ \TT \in C_b(\MM(\BB_{R_k}^{1+d_0}))$.
    Since then, by the induction hypothesis for all $h \in C_b(\MM(\RR^{1+d_0}))$
    \begin{eqnarray*}
        \lim_{d \to \infty} \EE_{\aa^{(0)},  W^{(0)}} h[\mu_d^{(k+1)}] 
        =\\=
        \lim_{d \to \infty} \EE_{\aa^{(0)},  W^{(0)}} h[\TT(\mu_d^{(k)})] 
        =
        h[\TT(\mu_\infty^{(k)})].
    \end{eqnarray*}
    We then define $\mu_\infty^{(k+1)} = \TT(\mu_\infty^{(k)})$.

    Also, it easy to see that since $\phi$, $\phi'$ and $\partial\ell(y,z) / \partial z$ are bounded and the distribution of $\xx$ has a bounded variation, $\mu_d^{(k)} \in \MM(\BB_{R_k}^{1+d_0})$ a.s. implies $\mu_d^{(k+1)} = \TT \mu_d^{(k)} \in \MM(\BB_{R_{k+1}}^{1+d_0})$ a.s. for some $R_{k+1} < \infty$.
    
    We have proven that for all $k \geq 0$ $\mu_d^{(k)}$ converges to $\mu_\infty^{(k)}$ weakly as $d \to \infty$ wrt the 2-Wasserstein metric and $\mu_d^{(k)}$ has compact support a.s. for any $d \in \mathbb{N}$.
\end{proof}

\begin{corollary}[Theorem~1 of Section~3, restated]
    \label{thm:discrete_mf}
    Given the same conditions as in Theorem~\ref{thm:asymptotic_similarity_1hid_relu_gd}, following statements hold:
    \begin{enumerate}
        \item $\forall k \geq 0$ $\mu_d^{(k)}$ converges to $\mu_\infty^{(k)}$ in probability as $d \to \infty$;
        \item $f_d^{(k)}(\xx)$ converges to some $f_\infty^{(k)}(\xx)$ in probability as $d \to \infty$ $\forall \xx \in \XX$.
    \end{enumerate}
\end{corollary}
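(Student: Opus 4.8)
The plan is to read off both claims from Theorem~\ref{thm:asymptotic_similarity_1hid_relu_gd} together with two soft probabilistic facts: that convergence in distribution to a deterministic limit upgrades to convergence in probability, and that continuous functionals preserve convergence in probability. The key observation is that the conclusion established in the proof of Theorem~\ref{thm:asymptotic_similarity_1hid_relu_gd}, namely $\lim_{d\to\infty} \EE_{\aa^{(0)}, W^{(0)}} h[\mu_d^{(k)}] = h[\mu_\infty^{(k)}]$ for every $h \in C_b(\MM(\RR^{1+d_0}))$, is precisely the statement that the random measure $\mu_d^{(k)}$ converges in distribution to the deterministic measure $\mu_\infty^{(k)}$ in the space $(\MM(\BB_{R_k}^{1+d_0}), W_2)$, where $W_2$ is the $2$-Wasserstein distance. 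Since $\BB_{R_k}^{1+d_0}$ is compact, this space is a compact (hence Polish) metric space and $W_2$ metrizes weak convergence on it.

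For the first claim I would fix $\epsilon > 0$ and apply the displayed limit to the bounded Lipschitz test function $h_\epsilon[\mu] = \min(1, W_2(\mu, \mu_\infty^{(k)})/\epsilon)$, which lies in $C_b(\MM(\BB_{R_k}^{1+d_0}))$. Since $h_\epsilon[\mu_\infty^{(k)}] = 0$ and $\mathbf{1}\{W_2(\mu_d^{(k)}, \mu_\infty^{(k)}) > \epsilon\} \le h_\epsilon[\mu_d^{(k)}]$, taking expectations gives $\PP(W_2(\mu_d^{(k)}, \mu_\infty^{(k)}) > \epsilon) \le \EE_{\aa^{(0)}, W^{(0)}} h_\epsilon[\mu_d^{(k)}] \to 0$ as $d \to \infty$. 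As $\epsilon > 0$ was arbitrary, $\mu_d^{(k)} \to \mu_\infty^{(k)}$ in probability, which is the first claim.

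For the second claim I would define the evaluation functional $F_\xx[\mu] = \sigma^* \int a\,\phi(\ww^T\xx)\,\mu(da, d\ww)$, so that $f_d^{(k)}(\xx) = F_\xx[\mu_d^{(k)}]$ and $f_\infty^{(k)}(\xx) = F_\xx[\mu_\infty^{(k)}]$. On the compact ball $\BB_{R_k}^{1+d_0}$ the integrand $(a,\ww) \mapsto a\,\phi(\ww^T\xx)$ is continuous (as $\phi$ is Lipschitz) and bounded, so $F_\xx$ is continuous on $\MM(\BB_{R_k}^{1+d_0})$ with respect to $W_2$. Applying the continuous mapping theorem to the convergence in probability from the first claim then yields $f_d^{(k)}(\xx) = F_\xx[\mu_d^{(k)}] \to F_\xx[\mu_\infty^{(k)}] = f_\infty^{(k)}(\xx)$ in probability for every $\xx$, which is the second claim.

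The only point requiring genuine care is the continuity of $F_\xx$: the factor $a$ in the integrand is unbounded on $\RR^{1+d_0}$, so weak convergence of $\mu_d^{(k)}$ alone would not control $F_\xx$. It is exactly the almost-sure uniform support bound $\mu_d^{(k)} \in \MM(\BB_{R_k}^{1+d_0})$ furnished by Theorem~\ref{thm:asymptotic_similarity_1hid_relu_gd} that confines all the measures to one common compact set and renders the integrand bounded. Since this ingredient is already established and the remaining steps are direct applications of the two standard facts above, I do not expect any further obstacle.
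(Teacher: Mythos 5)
Your proposal is correct, and it uses the same two ingredients as the paper's proof: the standard fact that convergence in distribution to a deterministic limit implies convergence in probability, and continuity of the evaluation functional $\mu \mapsto f[\mu;\xx]$. The difference is in the order of operations and in the level of detail. The paper first pushes weak convergence through composed test functions $g \circ f$ with $g \in C_b(\RR)$ to get weak convergence of the real-valued random variables $f_d^{(k)}(\xx)$ to the constant $f_\infty^{(k)}(\xx)$, and only then upgrades to convergence in probability, citing the standard fact without proof for both claims; you instead upgrade at the measure level first, via the explicit test function $h_\epsilon[\mu] = \min(1, \WW_2(\mu,\mu_\infty^{(k)})/\epsilon)$, and then apply the continuous mapping theorem to $F_\xx$ — effectively proving the cited standard fact rather than assuming it. Your closing remark is also a genuine improvement in rigor over the paper's write-up: the paper asserts $f[\cdot;\xx] \in C(\MM(\RR^{1+d_0}))$, which is strictly speaking false since the integrand $a\,\phi(\ww^T\xx)$ is unbounded on $\RR^{1+d_0}$; continuity only holds on $\MM(\BB_{R_k}^{1+d_0})$ with respect to $\WW_2$ (this is exactly the Lipschitz estimate $f[\cdot;\xx] \in \Lip(\MM(\BB_R^{1+d_0}); 2R\|\xx\|_2)$ established inside the proof of Lemma~\ref{lemma:transition_operator}), and you correctly identify the almost-sure uniform support bound from Theorem~\ref{thm:asymptotic_similarity_1hid_relu_gd} as the ingredient that makes this work. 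So your route buys self-containedness and patches a small imprecision, at the cost of slightly more machinery (the explicit $h_\epsilon$ construction and the continuous mapping theorem in probability), while the paper's route is shorter but leans on unproved standard facts and a loosely stated continuity claim.
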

\begin{proof}
    Since weak convergence to a constant implies convergence in probability, the first statement directly follows from Theorem~\ref{thm:asymptotic_similarity_1hid_relu_gd}.

    By definition, weak convergence of $\mu_d^{(k)}$ means for any $h \in C_b(\MM(\RR^{1+d_0}))$
    $$
    \lim_{d \to \infty} \EE_{\aa^{(0)},  W^{(0)}} h[\mu_d^{(k)}] =
    h[\mu_\infty^{(k)}].
    $$
    Hence for any $g \in C_b(\RR)$
    \begin{eqnarray*}
        \lim_{d \to \infty} \EE_{\aa^{(0)},  W^{(0)}} g(f_d^{(k)}(\xx)) 
        =\\=
        \lim_{d \to \infty} \EE_{\aa^{(0)},  W^{(0)}} g(f[\mu_d^{(k)}; \xx]) 
        =\\=
        \lim_{d \to \infty} \EE_{\aa^{(0)},  W^{(0)}} (g \circ f)[\mu_d^{(k)}; \xx] =
        (g \circ f)[\mu_\infty^{(k)}; \xx],            
    \end{eqnarray*}
    since $f[\cdot; \xx] \in C(\MM(\RR^{1+d_0}))$ for any $\xx \in \XX$.

    Hence $f_d^{(k)}(\xx) = f[\mu_d^{(k)}; \xx]$ converges weakly to $f_\infty^{(k)}(\xx) = f[\mu_\infty^{(k)}; \xx]$ as $d \to \infty$.
    By the same argument as above, this implies convergence in probability.
\end{proof}

\subsection{A gradient descent step defines a continuous operator in the space of weight-space measures}
\label{sec:appendix_T_in_C}

\begin{proof}[Proof of Lemma \ref{lemma:transition_operator}]
    Without loss of generality assume $\sigma^* = \eta^* = 1$.
    Consider a sequence of measures $\mu_d \in \MM(\BB_R^{1+d_0})$ that converges to $\mu_\infty \in \MM(\BB_R^{1+d_0})$ wrt the 2-Wasserstein metric.
    We have to prove that $\TT \mu_d$ converges to $\TT \mu_\infty$ wrt the 2-Wasserstein metric.

    Define $\theta_d = (a_d, \ww_d) \in \BB_R^{1+d_0}$ and $\delta\theta_d = \theta_\infty - \theta_d = (a_\infty - a_d, \ww_\infty - \ww_d) \in \BB_R^{1+d_0}$.
    For a given $d$ consider a sequence of measures $\mu_{d,\infty}^j \in \MM(\BB_R^{1+d_0} \otimes \BB_R^{1+d_0})$ with marginals equal to $\mu_d$ and $\mu_\infty$ respectively, as required by the definition of the Wasserstein metric.
    Choose a sequence in such a way that
    \begin{multline*}
        \lim_{j \to \infty} \int (\|\delta\theta_d\|_2^2 \, \mu^j_{d,\infty} (d\theta_d, \, d\theta_\infty) 
        =\\=
        \inf_{\mu_{d,\infty}} \int (\|\delta\theta_d\|_2^2 \, \mu_{d,\infty} (d\theta_d, \, d\theta_\infty) =
        \WW_2^2(\mu_d, \mu_\infty),
    \end{multline*}
    where infium is taken over all $\mu_{d,\infty} \in \MM(\BB_R^{1+d_0} \otimes \BB_R^{1+d_0})$ with marginals equal to $\mu_d$ and $\mu_\infty$ respectively as required by the definition of the Wasserstein metric.
    A sequence $\{\mu_{d,\infty}^j\}_{j=1}^\infty$ exists by properties of infium.
    Then we have the following:
    \begin{eqnarray*}
        \WW_2^2(\TT\mu_d, \TT\mu_\infty) 
        \leq\\\leq
        \lim_{j \to \infty} \int
            \|\delta\theta_d + \delta\Delta\theta_d\|_2^2
        \, \mu^j_{d,\infty} (d\theta_d, \, d\theta_\infty),
    \end{eqnarray*}
    where we have defined
    \begin{equation*}
        \Delta\theta_d = 
        \Bigl( -\EE \nabla_{f_d}\ell \, \phi(\ww_d^{T} \xx), -\EE \nabla_{f_d}\ell \, a_d \phi'(\ww_d^{T} \xx) \xx \Bigr),
    \end{equation*}
    \begin{equation*}
        \nabla_{f_d}\ell = \left.\frac{\partial \ell(y,z)}{\partial z} \right|_{z=f[\mu_d; \xx]}
    \end{equation*}
    and $\delta\Delta\theta_d = \Delta\theta_\infty - \Delta\theta_d$ respectively.
    From this follows:
    \begin{eqnarray*}
        \WW_2^2(\TT\mu_d, \TT\mu_\infty) 
        \leq
        \lim_{j \to \infty} \int
            \|\delta\theta_d\|_2^2
        \, \mu^j_{d,\infty} (d\theta_d, \, d\theta_\infty) 
        +\\+
        \lim_{j \to \infty} \int
            \|\delta\Delta\theta_d\|_2^2
        \, \mu^j_{d,\infty} (d\theta_d, \, d\theta_\infty) 
        +\\+
        2 \lim_{j \to \infty} \int
            \langle \delta\theta_d, \delta\Delta\theta_d \rangle
        \, \mu^j_{d,\infty} (d\theta_d, \, d\theta_\infty).
    \end{eqnarray*}
    Consequently,
    \begin{eqnarray}
        \WW_2^2(\TT\mu_d, \TT\mu_\infty) 
        \leq
        \WW_2^2(\mu_d, \mu_\infty) 
        +\\+
        \lim_{j \to \infty} \int
            \|\delta\Delta\theta_d\|_2^2
        \, \mu^j_{d,\infty} (d\theta_d, \, d\theta_\infty) 
        +\\+
        4 R \lim_{j \to \infty} \sqrt{\int
            \|\delta\Delta\theta_d\|_2^2
        \, \mu^j_{d,\infty} (d\theta_d, \, d\theta_\infty)}.
        \label{eq:W_2_tau_mu_upper_bound}
    \end{eqnarray}
    The last term comes (1) from the Cauchy-Schwartz inequality: $\langle \delta\theta_d, \delta\Delta\theta_d \rangle \leq \|\delta\theta_d\|_2 \|\delta\Delta\theta_d\|_2$, (2) from the fact that both $\mu_d$ and $\mu_\infty$ are concentrated in a ball of radius $R$: $\|\delta\theta_d\|_2 = \|\theta_d-\theta_\infty\|_2 \leq \|\theta_d\|_2 + \|\theta_\infty\|_2 \leq 2 R$, and (3) from Jensen's inequality: $\int \|\theta\|_2 \, \mu(d\theta) \leq \sqrt{\int \|\theta\|_2^2 \, \mu(d\theta)}$, for $\mu$ being a probability measure.

    The first term converges to zero by the definition of the sequence of measures $\mu_d$.
    Consider the second term:
    \begin{equation*}
        \int \|\delta\Delta\theta_d\|_2^2 \, \mu^j_{d,\infty} (d\theta_d, \, d\theta_\infty) =
    \end{equation*}
    \begin{equation}
        = \int (\delta\Delta a_d)^2 \, \mu^j_{d,\infty} (d\theta_d, \, d\theta_\infty) +
        \label{eq:deltaDeltatheta}
    \end{equation}
    \begin{equation*}
        + \int \|\delta\Delta\ww_d\|_2^2 \, \mu^j_{d,\infty} (d\theta_d, \, d\theta_\infty). 
    \end{equation*}
    Consider then the first term here:
    \begin{eqnarray*}
        \int (\delta\Delta a_d)^2 \, \mu^j_{d,\infty} (d\theta_d, \, d\theta_\infty) 
        =\\=
        \int \Bigl(
            \EE_{\xx,y} \Bigl(
                \nabla_{f_d}\ell \, \phi(\ww_d^{T} \xx)
            -\\-
            \nabla_{f_\infty}\ell \, \phi(\ww_\infty^{T} \xx)
            \Bigr) 
        \Bigr)^2 \mu^j_{d,\infty} (d\theta_d, \, d\theta_\infty)
        =\\=
        \int (
            \EE_{\xx,y} (
                g(\xx, \theta_d) h(\xx, y, \mu_d) 
                -\\- 
                g(\xx, \theta_\infty) h(\xx, y, \mu_\infty)
            )
        )^2 \, 
        \mu^j_{d,\infty} (d\theta_d, \, d\theta_\infty),
    \end{eqnarray*}
    where we have defined
    \begin{equation*}
        g(\xx, \theta) = 
        g(\xx, (a,\ww)) = 
        \phi(\ww^T \xx),
    \end{equation*}
    \begin{equation*}
        h(\xx, y, \mu) = 
        \left.\frac{\partial \ell(y,z)}{\partial z} \right|_{z=f[\mu; \xx]}.
    \end{equation*}

    W.l.o.g. assume $\phi$ has a Lipschitz constant 1: $\phi(\cdot) \in \Lip(\RR; 1)$.
    From this follows that $g(\xx, \cdot) \in \Lip(\RR^{1+d_0}; \|\xx\|_2)$.
    It is easy to see that since we consider measures supported on $\BB_R$, $f[\cdot,\xx] \in \Lip(\MM(\BB_R^{1+d_0}); 2 R \|\xx\|_2)$ wrt the 2-Wasserstein metric.
    Indeed,
    \begin{eqnarray*}
            |f[\mu_d, \xx] - f[\mu_\infty, \xx]
            =\\=
            \Bigl| \int a_d \phi(\ww_d^T \xx) \, \mu(da_d, \, d\ww_d) 
            -\\- 
            \int a_\infty \phi(\ww_\infty^T \xx) \, \mu(da_\infty, \, d\ww_\infty) \Bigr|
            =\\=
            \Bigl| \int (a_d \phi(\ww_d^T \xx) - a_\infty \phi(\ww_\infty^T \xx)) \, \mu(d\theta_d) \, \mu(d\theta_\infty) \Bigr|
            \leq\\\leq
            \int |a_d \phi(\ww_d^T \xx) - a_\infty \phi(\ww_\infty^T \xx)| \, \mu(d\theta_d) \, \mu(d\theta_\infty)
            \leq\\\leq
            \int (|a_d| |\phi(\ww_d^T \xx) - \phi(\ww_\infty^T \xx)| +\\+ |a_d - a_\infty| |\phi(\ww_\infty^T \xx)|) \, \mu(d\theta_d) \, \mu(d\theta_\infty)
            \leq\\\leq
            R \|\xx\|_2 \int (\|\delta\ww_d\|_2 + |\delta a_d|) \, \mu(d\theta_d) \, \mu(d\theta_\infty)
            \leq\\\leq
            R \|\xx\|_2 \sqrt{\int \|\ww_d - \ww_\infty\|_2^2 \, \mu(d\theta_d) \, \mu(d\theta_\infty)} +\\+
            R \|\xx\|_2 \sqrt{\int |a_d - a_\infty|^2 \, \mu(d\theta_d) \, \mu(d\theta_\infty)}
            \leq\\\leq
            2 R \|\xx\|_2 \WW_2(\mu_d, \mu_\infty),
    \end{eqnarray*}
    where we have used Jensen's inequality: $\int \|\theta\|_2 \, \mu(d\theta) \leq \sqrt{\int \|\theta\|_2^2 \, \mu(d\theta)}$ since $\mu$ is a probability measure.

    W.l.o.g. $\partial\ell/\partial z \in \Lip(\RR; 1)$ $\forall y \in \{0,1\}$. 
    Hence the latter implies $h(\xx, y, \cdot) \in \Lip(\MM(\BB_R^{1+d_0}); 2 R \|\xx\|_2)$.
    
    Taking into account that w.l.o.g. $\partial\ell/\partial z$ and $\phi'$ are bounded by 1, we have:
    \begin{eqnarray*}
        |g(\xx, \theta_d) h(\xx, y, \mu_d) - g(\xx, \theta_\infty) h(\xx, y, \mu_\infty)| 
        \leq\\\leq
        |g(\xx, \theta_d) - g(\xx, \theta_\infty)| 
        +\\+ 
        R \|\xx\|_2 |h(\xx, y, \mu_d) - h(\xx, y, \mu_\infty)| 
        \leq\\\leq
        \|\xx\|_2 \|\theta_d - \theta_\infty\|_2 + 2 R^2 \|\xx\|_2^2 \WW_2(\mu_d, \mu_\infty).
    \end{eqnarray*}
    From this follows:
    \begin{eqnarray*}
        (\EE_{\xx,y} (g(\xx, \theta_d) h(\xx, y, \mu_d) - g(\xx, \theta_\infty) h(\xx, y, \mu_\infty)))^2
        \leq\\\leq
        \EE_{\xx,y} (g(\xx, \theta_d) h(\xx, y, \mu_d) - g(\xx, \theta_\infty) h(\xx, y, \mu_\infty))^2
        \leq\\\leq
        \EE_{\xx,y} \|\xx\|_2^2 \|\theta_d - \theta_\infty\|_2^2 + 4 R^4 \EE_{\xx,y} \|\xx\|_2^4 \WW_2^2(\mu_d, \mu_\infty) +\\+ 4 R^2 \EE_{\xx,y} \|\xx\|_2^3 \|\theta_d - \theta_\infty\|_2 \WW_2(\mu_d, \mu_\infty).
    \end{eqnarray*}

    Hence
    \begin{multline*}
        \lim_{j \to \infty} \int (\EE_{\xx,y} (g(\xx, \theta_d) h(\xx, y, \mu_d) 
        -\\- 
        g(\xx, \theta_\infty) h(\xx, y, \mu_\infty)))^2 \, 
        \mu^j_{d,\infty} (d\theta_d, \, d\theta_\infty)
        \leq\\\leq
        \EE_{\xx,y} \|\xx\|_2^2 \WW_2^2(\mu_d, \mu_\infty) + 4 R^4 \EE_{\xx,y} \|\xx\|_2^4 \WW_2^2(\mu_d, \mu_\infty) +\\+ 4 R^2 \EE_{\xx,y} \|\xx\|_2^3 \WW_2^2(\mu_d, \mu_\infty)
        =\\=
        \EE_{\xx,y} (\|\xx\|_2 + 2 R^2 \|\xx\|_2^2)^2 \WW_2^2(\mu_d, \mu_\infty).
    \end{multline*}

    We can apply the same logic to the second term of (\ref{eq:deltaDeltatheta}) to get the same upper bound:
    \begin{eqnarray*}
        \int (\delta\Delta\ww_d)^2 \, \mu^j_{d,\infty} (d\theta_d, \, d\theta_\infty) 
        =\\=
        \int \Bigl(
            \EE_{\xx,y} \left(
                \nabla_{f_d}\ell \, a_d \phi'(\ww_d^{T} \xx)
            \right. -\\- \left. 
            \nabla_{f_\infty}\ell \, a_\infty \phi'(\ww_\infty^{T} \xx)
            \right) 
        \Bigr)^2 \mu^j_{d,\infty} (d\theta_d, \, d\theta_\infty)
        \leq\\\leq
        \EE_{\xx,y} (\|\xx\|_2 + 2 R^2 \|\xx\|_2^2)^2 \WW_2^2(\mu_d, \mu_\infty).
    \end{eqnarray*}

    Applying this upper bound to equation (\ref{eq:W_2_tau_mu_upper_bound}), we finally get the following:
    \begin{eqnarray*}
        \lim_{d \to \infty} \WW_2^2(\TT\mu_d, \TT\mu_\infty) 
        \leq
        \lim_{d \to \infty} \Bigl(
            \WW_2^2(\mu_d, \mu_\infty) +\\+ 2 \EE_{\xx,y} (\|\xx\|_2 + 2 R^2 \|\xx\|_2^2)^2 \WW_2^2(\mu_d, \mu_\infty) +\\+ 4 R \sqrt{2 \EE_{\xx,y} (\|\xx\|_2 + 2 R^2 \|\xx\|_2^2)^2} \WW_2(\mu_d, \mu_\infty)
        \Bigr) = 0,
    \end{eqnarray*}
    where the last equality is valid, because by assumptions $\xx$ has finite moments up to the fourth one.
    Hence $\TT \mu_d$ converges to $\TT \mu_\infty$ wrt the 2-Wasserstein metric.

    Summing up, we have proven that $\TT$ is continuous wrt the 2-Wasserstein metric.

\end{proof}

\section{The mean-field limit is trivial for the case of more than two hidden layers}
\label{sec:appendix_mf_limit_is_trivial}

Here we re-write the definition of a multi-layer net, as well as the gradient descent step on scaled quantities:
\begin{equation*}
    f(\xx; \aa, V^{1:H}, W) =
    \sum_{r_H=1}^d a_{r_H} \phi(f^H_{r_H}(\xx; V^{1:H}, W)),
\end{equation*}
where
\begin{equation*}
    f^{h+1}_{r_{h+1}}(\xx; V^{1:h+1}, W) =
    \sum_{r_h=1}^d v^{h+1}_{r_{h+1} r_h} \phi(f^h_{r_h}(\xx; V^{1:h}, W)),
\end{equation*}
\begin{equation*}
    f^0_{r_0}(\xx, W) =
    \ww_{r_0}^T \xx.
\end{equation*}
The gradient descent step:
\begin{equation*}
    \Delta\hat a_{r_H}^{(k)} =
    -\hat\eta_a \sigma^{H+1} \EE \nabla_f^{(k)} \ell \; \phi(\hat f^{H,(k)}_{r_H}(\xx)),
\end{equation*}
\begin{equation*}
    \Delta\hat v^{H,(k)}_{r_H r_{H-1}} =
    -\hat\eta_{v^H} \sigma^{H+1} \EE \nabla_f^{(k)} \ell \; \hat a_{r_H}^{(k)} \phi(\hat f^{H-1,(k)}_{r_{H-1}}(\xx)),
\end{equation*}
$$\ldots$$
\begin{multline*}
    \Delta\hat\ww_{r_0}^{(k)} =
    -\hat\eta_w \sigma^{H+1} \EE \nabla_f^{(k)} \ell \; 
    \sum_{r_H=1}^d \hat a^{(k)}_{r_H} \phi'(\hat f^{H,(k)}_{r_H}(\xx)) 
    \times\\\times
    \sum_{r_{H-1}=1}^d \hat v^{H,(k)}_{r_H r_{H-1}} \phi'(\hat f^{H-1,(k)}_{r_{H-1}}(\xx)) 
    \times\ldots\\\ldots\times
    \sum_{r_{1}=1}^d \hat v^{2,(k)}_{r_2 r_{1}} \phi'(\hat f^{1,(k)}_{r_1}(\xx)) 
    \hat v^{1,(k)}_{r_1 r_0} \phi'(\hat\ww_{r_0}^{(k),T} \xx) \xx.
\end{multline*}
\begin{equation}
    \hat a^{(0)}_{r_H} \sim \NN(0, I), \;
    \hat v^{h,(0)}_{r_h r_{h-1}} \sim \NN(0, I), \;
    \hat\ww^{(0)}_{r_0} \sim \NN(0, I),
    \label{eq:gd_dynamics_Hhid_appendix}
\end{equation}
where we have denoted $\hat f^{h,(k)}_{r_h}(\xx) = f^h_{r_h}(\xx; \hat V^{(k),1:h}, \hat W^{(k)})$.

Similarly to the case of $H=0$ (see Section~3), we consider a power-law dependence on $d$ for $\sigma$ and learning rates, as a result introducing $q_\sigma$, $\tilde q_a$, $\tilde q_{v^h}$ and $\tilde q_w$.
In Section~4 we have shown that for the mean-field limit we should have $q_\sigma = -1$, $\tilde q_{a/w} = 1$ and $\tilde q_{v^h} = 2$.

We now show that for $H \geq 2$ the mean-field limit is trivial: $\lim_{d \to \infty} f_d^{(k)}(\xx) = 0$.
Similarly to the case of $H=0$, we introduce weight increments $\delta\hat a_{r_H}^{(k)} = \hat a_{r_H}^{(k)} - \hat a_{r_H}^{(0)}$, $\delta\hat v_{r_h r_{h-1}}^{h,(k)} = \hat v_{r_h r_{h-1}}^{h,(k)} - \hat v_{r_h r_{h-1}}^{h,(0)}$ and $\delta\hat\ww_{r_0}^{(k)} = \hat\ww_{r_0}^{(k)} - \hat\ww_{r_0}^{(0)}$, and assume a power-law dependence on $d$ for them resulting in the introduction of exponents $q_a^{(k)}$, $q_{v^h}^{(k)}$ and $q_w^{(k)}$.

Analogically to a single hidden layer case, we decompose our $f$:
\begin{multline}
    f_d^{(k)}(\xx) =
    f_{d,\emptyset}^{(k)}(\xx) + f_{d,a}^{(k)}(\xx) + \sum_{h=1}^H f_{d,v^h}^{(k)}(\xx) + f_{d,w}^{(k)}(\xx) +\\+ \ldots + f_{d,av^{1:H}w}^{(k)}(\xx),
    \label{eq:f_decomposition_Hhid}
\end{multline}
where the exact definition of each term can be derived from its sub-index: e.g. $f_{d,wa}^{(k)}$ has $\delta\hat a^{(k)}$, $\delta\hat\ww^{(k)}$ and $\hat v^{h,(0)}$ $\forall h \in [H]$ terms.

Introducing an exponent $q$ for each term, we get:
\begin{equation}
    q_f^{(k)} = \max(q_{f,\emptyset}^{(k)}, q_{f,a}^{(k)}, \ldots, q_{f,av^{1:H}w}^{(k)}), \quad
    q_f^{(0)} = 2 q_\sigma + 1.
\end{equation}
We write all of the terms of the decomposition for $f$ in a unified way.
Let $\Theta_h$ be a subset of $\{a, v^{1:H}, w\}$ of size $h$.
Then:
\begin{equation}
    q_{f,\Theta_h}^{(k)} =
    H (\kappa_{\Theta_h}^{(k)} + q_\sigma) + \sum_{\theta \in \Theta_h} q_{\theta}^{(k)}, 
    \label{eq:q_fk_theta}
\end{equation}
where $\varkappa_{\Theta_h}^{(k)} \in [1/2,1]$ comes from the same logic as in the single hidden layer case.
Since $q_\sigma = -1$, if we show that all $q_\theta^{(k)} < 0$ $\forall k \geq 1$, then we conclude that all components of decomposition~(\ref{eq:f_decomposition_Hhid}) vanish.

Let us look on the gradient descent dynamics (\ref{eq:gd_dynamics_Hhid_appendix}).
It implies the following equalities for $k = 0$:
\begin{equation}
    q_{a/w}^{(1)} = 
    \tilde q_{a/w} + (H+1) q_\sigma + \frac{H}{2} =
    -\frac{H}{2},
    \label{eq:q1_Hhid_ineq}
\end{equation}
\begin{equation*}
    q_{v^h}^{(1)} = 
    \tilde q_{v^h} + (H+1) q_\sigma + \frac{H-1}{2} =
    -\frac{H-1}{2},
\end{equation*}
which come from the fact that all $\hat a^{(0)}$, $\hat v^{h,(0)}$ and $\hat\ww^{(0)}$ are independent and $\propto 1$. Indeed, gradient updates for $\delta\hat a$ and $\delta\hat\ww$ have $H$ sums each, and each sum scales as $d^{1/2}$ (this where the term $H/2$ comes from); at the same time gradient updates for $\delta\hat v^h$ have $H-1$ sums each.

Due to the symmetry of the gradient step dynamics, $q_{v^1}^{(1)} = \ldots = q_{v^H}^{(1)}$ imply $q_{v^1}^{(k)} = \ldots = q_{v^H}^{(k)}$ $\forall k \geq 1$.
We shall denote it with $q_v^{(k)}$ then.

Suppose $H \geq 2$.
We prove that $q_{a/w}^{(k)} \leq q_{a/w}^{(1)} = -H/2$ and $q_{v}^{(k)} \leq q_{v}^{(1)} = (1-H)/2$ $\forall k \geq 1$ by induction.
The induction base $k=1$ is trivial. 
For the sake of illustration, we first consider the induction step for $q_w$:
\begin{multline}
    q_{w}^{(k+1)} 
    \leq
    \max\Biggl(q_{w}^{(k)}, \tilde q_{w} + (H+1) q_\sigma +\\+ \max\Biggl(\frac{H}{2}, \frac{H+1}{2} + q_{a}^{(k)}, \frac{H+1}{2} + q_{v}^{(k)}, \\ H + q_{a}^{(k)} + q_{v}^{(k)}, H + 2 q_{v}^{(k)}\Biggr)\Biggr) 
    \leq\\\leq
    \max\Biggl(-\frac{H}{2}, -\frac{H}{2} + \max\Biggl(0, \frac{1}{2} + q_{a}^{(k)}, \frac{1}{2} + q_{v}^{(k)},\\ \frac{H}{2} + q_{a}^{(k)} + q_{v}^{(k)}, \frac{H}{2} + 2 q_{v}^{(k)}\Biggr)\Biggr) 
    \leq\\\leq
    \max\Biggl(-\frac{H}{2}, -\frac{H}{2} + \max\Biggl(0, \frac{1-H}{2}, \frac{2-H}{2}, \\ \frac{1-H}{2},  \frac{2-H}{2}\Biggr)\Biggr) 
    = -\frac{H}{2}.
    \label{eq:q_w_k1}
\end{multline}
All inequalities except the first come from the induction hypothesis.
We now demonstrate where the first inequality comes from. 
Recall that $\|\delta\hat\ww^{(k+1)}\| \propto d^{q_w^{(k+1)}}$ and
\begin{multline}
    \delta\hat\ww_{r_0}^{(k+1)} =
    \delta\hat\ww_{r_0}^{(k)} 
    -\\-
    \hat\eta_w \sigma^{H+1} \EE \nabla_f^{(k)} \ell \; 
    \sum_{r_H=1}^d (\hat a^{(0)}_{r_H} + \delta\hat a^{(k)}_{r_H}) \phi'(\hat f^{H,(k)}_{r_H}(\xx)) 
    \times\\\times
    \sum_{r_{H-1}=1}^d (\hat v^{H,(0)}_{r_H r_{H-1}} + \delta\hat v^{H,(k)}_{r_H r_{H-1}}) \phi'(\hat f^{{H-1},(k)}_{r_{H-1}}(\xx)) 
    \times\ldots\\\ldots\times
    \sum_{r_{1}=1}^d (\hat v^{2,(0)}_{r_2 r_{1}} + \delta\hat v^{2,(k)}_{r_2 r_{1}}) \phi'(\hat f^{1,(k)}_{r_1}(\xx)) 
    \times\\\times
    (\hat v^{1,(0)}_{r_1 r_0} + \delta\hat v^{1,(k)}_{r_1 r_0}) \phi'((\hat\ww_{r_0}^{(0)} + \delta\hat\ww_{r_0}^{(k)})^T \xx) \xx.
    \label{eq:gradient_step_w}
\end{multline}
Here we have a product of $H$ sums, by expanding which we obtain a sum of $2^{H+1}$ products of sums in total; for example, for $H=2$ we have:
\begin{multline*}
    \sum_{r_2=1}^d (\hat a^{(0)}_{r_2} + \delta\hat a^{(k)}_{r_2}) \phi'(\hat f^{2,(k)}_{r_2}(\xx)) 
    \times\\\times
    \sum_{r_{1}=1}^d (\hat v^{2,(0)}_{r_2 r_{1}} + \delta\hat v^{2,(k)}_{r_2 r_{1}}) \phi'(\hat f^{1,(k)}_{r_1}(\xx)) 
    \times\\\times
    (\hat v^{1,(0)}_{r_1 r_0} + \delta\hat v^{1,(k)}_{r_1 r_0}) \phi'((\hat\ww_{r_0}^{(0)} + \delta\hat\ww_{r_0}^{(k)})^T \xx) \xx
    =\\=
    \sum_{r_2=1}^d \hat a^{(0)}_{r_2} \phi'(\ldots) 
    \sum_{r_{1}=1}^d \hat v^{2,(0)}_{r_2 r_{1}} \phi'(\ldots) 
    \hat v^{1,(0)}_{r_1 r_0} \phi'(\ldots) \xx
    +\\+
    \sum_{r_2=1}^d \delta\hat a^{(k)}_{r_2} \phi'(\ldots) 
    \sum_{r_{1}=1}^d \hat v^{2,(0)}_{r_2 r_{1}} \phi'(\ldots) 
    \hat v^{1,(0)}_{r_1 r_0} \phi'(\ldots) \xx
    +\\+
    \sum_{r_2=1}^d \hat a^{(0)}_{r_2} \phi'(\ldots) 
    \sum_{r_{1}=1}^d \delta\hat v^{2,(k)}_{r_2 r_{1}} \phi'(\ldots) 
    \hat v^{1,(0)}_{r_1 r_0} \phi'(\ldots) \xx
    +\\+
    \sum_{r_2=1}^d \hat a^{(0)}_{r_2} \phi'(\ldots) 
    \sum_{r_{1}=1}^d \hat v^{2,(0)}_{r_2 r_{1}} \phi'(\ldots) 
    \delta\hat v^{1,(k)}_{r_1 r_0} \phi'(\ldots) \xx
    +\ldots\\\ldots+
    \sum_{r_2=1}^d \delta\hat a^{(k)}_{r_2} \phi'(\ldots) 
    \sum_{r_{1}=1}^d \delta\hat v^{2,(k)}_{r_2 r_{1}} \phi'(\ldots) 
    \delta\hat v^{1,(k)}_{r_1 r_0} \phi'(\ldots) \xx
    =\\=
    \Sigma_{d,\emptyset}^{(k)} + \Sigma_{d,a}^{(k)} + \Sigma_{d,v^1}^{(k)} + \Sigma_{d,v^2}^{(k)} 
    +\\+ 
    \Sigma_{d, v^1 v^2}^{(k)} + \Sigma_{d,v^2 a}^{(k)} + \Sigma_{d,a v^1}^{(k)} + \Sigma_{d, a v^1 v^2}^{(k)},
\end{multline*}
where the notation we have introduced is intuitive: for example, $\Sigma_{d,a v^1}^{(k)} = \sum_{r_2=1}^d \delta\hat a^{(0)}_{r_2} \phi'(\ldots) \sum_{r_{1}=1}^d \hat v^{2,(0)}_{r_2 r_{1}} \phi'(\ldots) \delta\hat v^{1,(k)}_{r_1 r_0} \phi'(\ldots) \xx$.

If we assume power-law dependencies for all $\Sigma$-terms, i.e. $\Sigma_{d,\emptyset}^{(k)} \propto d^{q_{\Sigma,\emptyset}^{(k)}}$, $\Sigma_{d,a}^{(k)} \propto d^{q_{\Sigma,a}^{(k)}}$ and so on, using heuristic rules mentioned in Section~3, from (\ref{eq:gradient_step_w}) we get the following:
\begin{multline*}
    q_{w}^{(k+1)} 
    =
    \max(q_{w}^{(k)}, \tilde q_{w} + (H+1) q_\sigma +\\+ \max(q_{\Sigma,\emptyset}^{(k)}, q_{\Sigma,a}^{(k)}, q_{\Sigma,v^1}^{(k)}, q_{\Sigma,v^2}^{(k)}, \ldots, q_{\Sigma,a v^1 v^2}^{(k)})).
\end{multline*}

First consider $\Sigma_{d,a v^1 v^2}^{(k)}$.
This term is a product of two sums with $d$ terms each.
Since each sum cannot grow faster than $d$, we get the following upper bound:
\begin{equation*}
    q_{\Sigma,a v^1 v^2}^{(k)}
    \leq
    q_a^{(k)} + 2 q_v^{(k)} + 2.
\end{equation*}
Similar upper bounds hold for all other $\Sigma$-terms; in particular, we have:
\begin{equation*}
    q_{\Sigma,v^1 v^2}^{(k)}
    \leq
    2 q_v^{(k)} + 2,
    \;
    \max(q_{\Sigma,v^2 a}^{(k)}, q_{\Sigma,a v^1}^{(k)})
    \leq
    q_a^{(k)} + q_v^{(k)} + 2.
\end{equation*}
For $\Sigma_{d,\emptyset}^{(k)}$ we compute the corresponding exponent exactly:
$
    q_{\Sigma,\emptyset}^{(k)} = 1.
$
In this case both sums are the sums of asymptotically independent terms with zero mean.
Indeed, we have:
\begin{eqnarray*}
    \Sigma_{d,\emptyset}^{(k)}
    =
    \sum_{r_2=1}^d \hat a^{(0)}_{r_2} \phi'(\hat f_{r_2}^{2,(k)}(\xx)) 
    \times\\\times
    \sum_{r_{1}=1}^d \hat v^{2,(0)}_{r_2 r_{1}} \phi'(\hat f_{r_1}^{1,(k)}(\xx)) 
    \hat v^{1,(0)}_{r_1 r_0} \phi'((\hat\ww_{r_0}^{(0)} + \delta\hat\ww_{r_0}^{(k)})^T \xx) \xx
    \sim\\\sim
    \sum_{r_2=1}^d \hat a^{(0)}_{r_2} \phi'(\hat f_{r_2}^{2,(0)}(\xx)) 
    \times\\\times
    \sum_{r_{1}=1}^d \hat v^{2,(0)}_{r_2 r_{1}} \phi'(\hat f_{r_1}^{1,(0)}(\xx)) 
    \hat v^{1,(0)}_{r_1 r_0} \phi'(\hat\ww_{r_0}^{(0),T} \xx) \xx,
\end{eqnarray*}
where the asymptotic equivalence takes place, because by the induction hypothesis $q_{a/w}^{(k)} \leq -H/2 < 0$ and $q_v^{(k)} \leq (1-H)/2 < 0$.

Finally, let us consider "linear" terms, i.e. $\Sigma_{d,a}^{(k)}$, $\Sigma_{d,v^1}^{(k)}$, $\Sigma_{d,v^2}^{(k)}$.
We consider $\Sigma_{d,a}^{(k)}$ for simplicity; two other terms can be analysed in a similar manner.
Here we have a similar asymptotic relation as we had for $\Sigma_{d,\emptyset}^{(k)}$:
\begin{eqnarray*}
    \Sigma_{d,a}^{(k)}
    =
    \sum_{r_2=1}^d \delta\hat a^{(k)}_{r_2} \phi'(\hat f_{r_2}^{2,(k)}(\xx)) 
    \times\\\times
    \sum_{r_{1}=1}^d \hat v^{2,(0)}_{r_2 r_{1}} \phi'(\hat f_{r_1}^{1,(k)}(\xx)) 
    \hat v^{1,(0)}_{r_1 r_0} \phi'((\hat\ww_{r_0}^{(0)} + \delta\hat\ww_{r_0}^{(k)})^T \xx) \xx
    \sim\\\sim
    \sum_{r_2=1}^d \delta\hat a^{(k)}_{r_2} \phi'(\hat f_{r_2}^{2,(0)}(\xx)) 
    \times\\\times
    \sum_{r_{1}=1}^d \hat v^{2,(0)}_{r_2 r_{1}} \phi'(\hat f_{r_1}^{1,(0)}(\xx)) 
    \hat v^{1,(0)}_{r_1 r_0} \phi'(\hat\ww_{r_0}^{(0),T} \xx) \xx.
\end{eqnarray*}
Let us now recall the gradient step for $\delta\hat a^{(k)}$:
\begin{eqnarray*}
    \delta\hat a_{r_2}^{(k)} =
    \delta\hat a_{r_2}^{(k-1)}
    -
    \hat\eta_a \sigma^{3} \EE \nabla_f^{(k-1)} \ell \;
    \times\\\times 
    \phi\left(\sum_{r_1=1}^d (\hat v_{r_2 r_1}^{2, (0)} + \delta\hat v_{r_2 r_1}^{2, (k-1)}) \phi(\hat f^{1,(k-1)}_{r_1}(\xx))\right).
\end{eqnarray*}
Since by the induction hypothesis $q_v^{(k-1)} \leq (1-H)/2 < 0$, $\delta\hat a_{r_2}^{(k)}$ depends on $\hat v_{r_2 r_1}^{2, (0)}$, even as $d \to \infty$.
This means that the sum over $r_2$ in the definition of $\Sigma_{d,a}^{(k)}$ above grows as $d$, while the sum over $r_1$ still grows as $d^{1/2}$, as was the case for $\Sigma_{d,\emptyset}^{(k)}$.
Hence
\begin{equation*}
    q_{\Sigma, a/v^1/v^2}^{(k)} = 
    q_{a/v^1/v^2}^{(k)} + 3/2.
\end{equation*}

Finally, for $H=2$ we get the following:
\begin{multline*}
    q_{w}^{(k+1)} 
    =
    \max(q_{w}^{(k)}, \tilde q_{w} + (H+1) q_\sigma 
    +\\+ 
    \max(q_{\Sigma,\emptyset}^{(k)}, q_{\Sigma,a}^{(k)}, q_{\Sigma,v^1}^{(k)}, q_{\Sigma,v^2}^{(k)}, \ldots, q_{\Sigma,a v^1 v^2}^{(k)}))
    \leq\\\leq
    \max(q_{w}^{(k)}, \tilde q_{w} + (H+1) q_\sigma + \max(1, q_a^{(k)} + 3/2, q_{v}^{(k)} + 3/2,
    \\ 
    q_a^{(k)} + q_{v}^{(k)} + 2, 2 q_{v}^{(k)} + 2, q_a^{(k)} + 2 q_{v}^{(k)} + 2))
    =\\=
    \max(q_{w}^{(k)}, \tilde q_{w} + (H+1) q_\sigma +\\+ \max(1, q_a^{(k)} + 3/2, q_{v}^{(k)} + 3/2, q_a^{(k)} + q_{v}^{(k)} + 2, 2 q_{v}^{(k)} + 2)),
\end{multline*}
where the last equality comes from the fact that $q_{a/v/w}^{(k)} < 0$ by the induction hypothesis.
Directly extending this technique to the case of $H \geq 2$ results in the first inequality of (\ref{eq:q_w_k1}).

Applying the similar technique to $q_a$ and $q_v$ we get the following:
\begin{multline*}
    q_{a}^{(k+1)} 
    \leq
    \max\Biggl(q_{a}^{(k)}, \tilde q_{a} + (H+1) q_\sigma +\\+ \max\Biggl(\frac{H}{2}, \frac{H+1}{2} + q_{w}^{(k)}, \frac{H+1}{2} + q_{v}^{(k)}, \\ H + q_{w}^{(k)} + q_{v}^{(k)}, H + 2 q_{v}^{(k)}\Biggr)\Biggr) 
    \leq\\\leq
    \max\Biggl(-\frac{H}{2}, -\frac{H}{2} + \max\Biggl(0, \frac{1}{2} + q_{w}^{(k)}, \frac{1}{2} + q_{v}^{(k)},\\ \frac{H}{2} + q_{w}^{(k)} + q_{v}^{(k)}, \frac{H}{2} + 2 q_{v}^{(k)}\Biggr)\Biggr) 
    \leq\\\leq
    \max\Biggl(-\frac{H}{2}, -\frac{H}{2} + \max\Biggl(0, \frac{1-H}{2}, \frac{2-H}{2}, \\ \frac{1-H}{2},  \frac{2-H}{2}\Biggr)\Biggr) 
    = -\frac{H}{2},
\end{multline*}
\begin{multline*}
    q_{v}^{(k+1)} 
    \leq
    \max\Biggl(q_{v}^{(k)}, \tilde q_{v} + (H+1) q_\sigma +\\+ \max\Biggl(\frac{H-1}{2}, \frac{H}{2} + q_{a}^{(k)}, \frac{H}{2} + q_{w}^{(k)}, \frac{H}{2} + q_{v}^{(k)}, \\ H-1 + q_{a}^{(k)} + q_{w}^{(k)}, H-1 + q_{w}^{(k)} + q_{v}^{(k)}, H-1 + q_{a}^{(k)} + q_{v}^{(k)}\Biggr)\Biggr) 
    \leq\\\leq
    \max\Biggl(\frac{1-H}{2}, \frac{1-H}{2} + \max\Biggl(0, \frac{1}{2} + q_{a}^{(k)}, \frac{1}{2} + q_{w}^{(k)}, \\ \frac{1}{2} + q_{v}^{(k)}, \frac{H-1}{2} + q_{a}^{(k)} + q_{w}^{(k)}, \frac{H-1}{2} + q_{w}^{(k)} + q_{v}^{(k)}, \\ \frac{H-1}{2} + q_{a}^{(k)} + q_{v}^{(k)}\Biggr)\Biggr) 
    \leq\\\leq
    \max\Biggl(\frac{1-H}{2}, \frac{1-H}{2} + \max\Biggl(0, \frac{1-H}{2}, \frac{1-H}{2}, \\ \frac{2-H}{2}, \frac{-1-H}{2}, -\frac{H}{2}, -\frac{H}{2}\Biggr)\Biggr) 
    = \frac{1-H}{2},
\end{multline*}
for all $h \in [H]$.
The difference between $q_{a/w}$ and $q_v$ comes from the fact that the gradient step for $\delta \hat v^h$ has $H-1$ sums instead of $H$.

Summing up, we have proven by induction that $\forall k \geq 1$ $q_{a/w}^{(k)} \leq q_{a/w}^{(1)} = -H/2 < 0$ and $q_{v}^{(k)} \leq q_{v}^{(1)} = (1-H)/2 < 0$.
Hence due to (\ref{eq:q_fk_theta}), $q_{f,\Theta_h}^{(k)} < 0$, hence all components of decomposition (\ref{eq:f_decomposition_Hhid}) vanish and $\lim_{d \to \infty} f_d^{(k)} = 0$.

\section{Comparing scalings for small learning rates}
\label{sec:appendix_small_lr_comparison}

As we have noted in Section~3, the MF limit provides the most accurate approximation for a finite-width reference network.
However as we demonstrate here the NTK limit becomes the most accurate approximation for a finite-width reference network if learning rates are sufficiently small and the number of training steps is fixed.

Figure \ref{fig:mf_ntk_1hid_cifar2_sgd_test_loss_and_var_f_small_lr} shows results for two different setups: training a one hidden layer net with gradient descent for 50 epochs with reference learning rates $\eta_a^* = \eta_w^* = 0.02$ (the same setup as in Figure \ref{fig:1hid_cifar2_sgd_da_dw_q} of Section 3 of the main text) and the same setup but with $\eta_a^* = \eta_w^* = 0.0002$.
As one can see, MF and intermediate limits do not preserve the variance of the CE loss but the NTK limit does.

In Section~3 we have argued that the MF limit provides a better approximation for a finite-width reference net, because all terms of decomposition (\ref{eq:f_decomposition_1hid_appendix}) are preserved, however, as we have previously observed in SM~\ref{sec:appendix_sanity_checks}, the term $f_{d,\emptyset}^{(k)}$ is not strictly preserved but approaches a non-zero constant for large $d$.
As we observe in the right plot of the bottom row, the width $2^{16} = 65536$ is not yet enough for $f_{d,\emptyset}^{(k)}$ to reach its asymptotics for the MF limit if learning rates are small: see blue solid curve.
Nevertheless, for large learning rates (right plot of the top row) this term does reach its asymptotics.

However one of the decomposition term vanishes for the NTK limit but for the MF limit it does not: $f_{d,aw}^{(k)}$.
Let us rewrite the definition of this term here:
\begin{equation*}
    f_{d,aw}^{(k)}(\xx) =
    \sigma \sum_{r=1}^d \delta\hat a_r^{(k)} \phi'(\ldots) \delta\hat\ww_r^{(k),T} \xx.
\end{equation*}
This term depends quadratically on weight increments and each weight increment is proportional to a corresponding learning rate.
Hence this term grows quadratically with learning rates.
By the same logic, terms $f_{d,a}^{(k)}$ and $f_{d,w}^{(k)}$ grow linearly with learning rates and $f_{d,\emptyset}^{(k)}$ has no polynomial dependence on learning rates.
This reasoning implies that the term $f_{d,aw}^{(k)}$ vanishes faster than others as learning rates go to zero.
Hence the effect of non-preserving this term becomes negligible if learning rates are small.
Because of this, the advantage of the MF limit over the NTK limit disappears for sufficiently small learning rates.
This effect is clearly shown in the right column of Figure \ref{fig:mf_ntk_1hid_cifar2_sgd_test_loss_and_var_f_small_lr}.
For large learning rates (top row) the term $f_{d,aw}^{(k)}$ is the second-largest term in decomposition (\ref{eq:f_decomposition_1hid_appendix}) of the reference network: see a dash-dot curve, however it becomes negligible for one hundred times smaller learning rates (bottom row).

\begin{figure*}[t]
    \centering
    \includegraphics[width=0.32\textwidth]{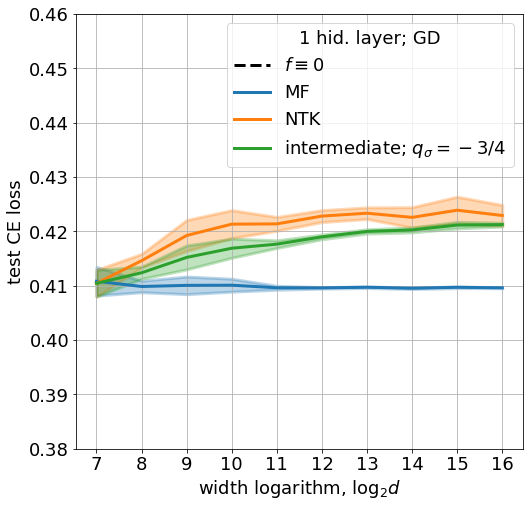}
    \includegraphics[width=0.32\textwidth]{images/cifar2_1hid_lr=002_lrelu_sgd_test_loss_evolution.png}
    \includegraphics[width=0.32\textwidth]{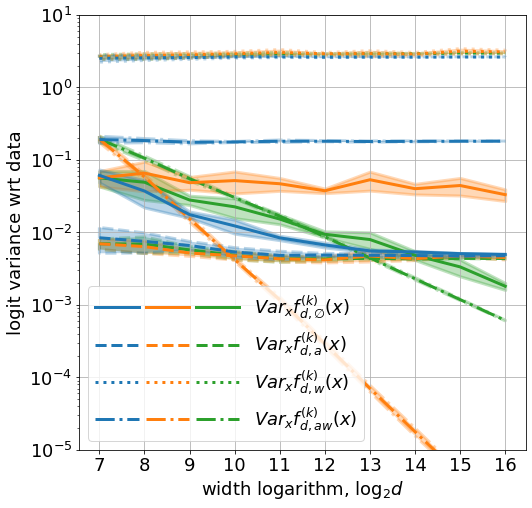}
    \\
    \includegraphics[width=0.32\textwidth]{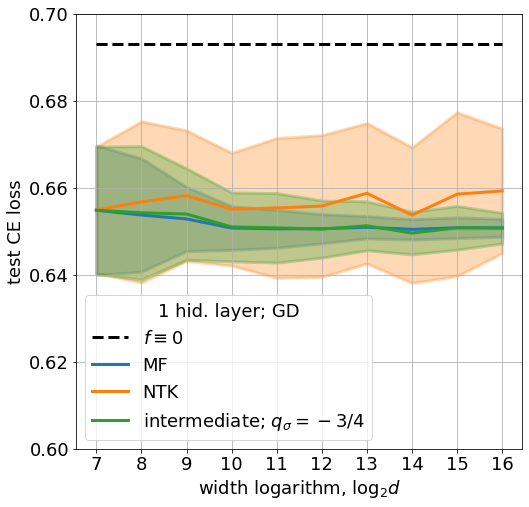}
    \includegraphics[width=0.32\textwidth]{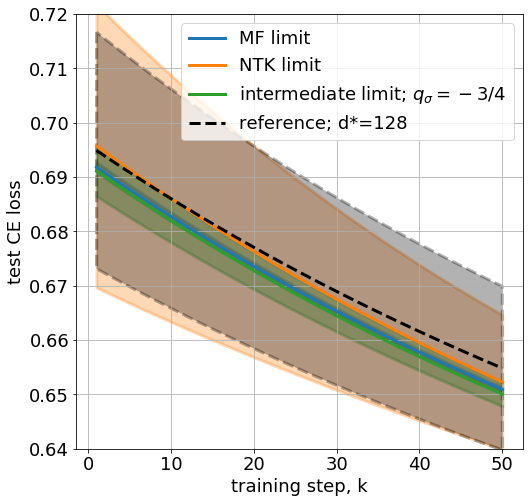}
    \includegraphics[width=0.32\textwidth]{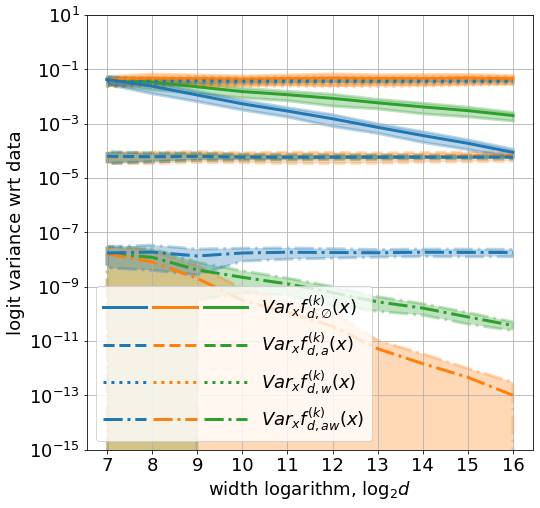}
    \caption{\textbf{For small learning rates, the NTK limit approximates the reference finite-width network better than the MF limit.} \textit{Top row:} scaling a reference network trained with gradient descent with (unscaled) learning rates $\eta_a^* = \eta_w^* = 0.02$. \textit{Bottom row:} same with unscaled learning rates $\eta_a^* = \eta_w^* = 0.0002$. \textit{Left:} a final test cross-entropy (CE) loss as a function of width $d$. \textit{Center:} test CE loss as a function of training step $k$ for a reference net and its limits. As one can see, MF and intermediate limits preserve mean CE loss but not its variance with respect to the initialization. In contrast, the NTK limit does preserve the variance. \textit{Right:} variance with respect to the data distribution for terms of model decomposition (\ref{eq:f_decomposition_1hid_appendix}) as a function of width $d$. When learning rates are small, $f_{d,\emptyset}^{(k)}$, which contributes to the variance, becomes the largest term in decomposition (\ref{eq:f_decomposition_1hid_appendix}) and $f_{d,aw}^{(k)}$, which vanishes in NTK and intermediate limits, becomes the smallest. As we have noticed in Figure \ref{fig:1hid_cifar2_sgd_var_f_decomp} for the MF limit $f_{d,\emptyset}^{(k)}$ is not exactly constant but decays approaching a constant for large $d$. This is the reason for the MF limit not to preserve the variance of CE loss. \textit{Setup:} We train a 1-hidden layer net on a subset of CIFAR2 (a dataset of first two classes of CIFAR10) of size 1000 with gradient descent. We take a reference net of width $d^* = 2^7 = 128$ and scale its hyperparameters according to MF (blue curves), NTK (orange curves) and intermediate scalings with $q_\sigma=-3/4$ (green curves, see text). See SM~\ref{sec:appendix_experiments} for further details.}
    \label{fig:mf_ntk_1hid_cifar2_sgd_test_loss_and_var_f_small_lr}
\end{figure*}

\section{Experiments for other setups}
\label{sec:appendix_other_setups}

As was noted in Section~\ref{sec:appendix_experiments}, in the present work we typically train a network using a full-batch gradient descent (or RMSProp) for 50 epochs (or equivalently, training steps) on a subset of CIFAR2 of size 1000.
The reason for this is that our theory is developed for binary classification, it assumes exact gradient computations, and because training up to convergence is not necessary for our framework.

In this section we experiment with modifications of our usual setup: see Figure \ref{fig:cifar2_1hid_sgd_test_loss_and_var_f} for the case of a one hidden layer net trained with the (stochastic) gradient descent.
The top row represents the usual case of the full-batch gradient descent training for 50 epochs with unscaled reference learning rates $\eta_a^* = \eta_w^* = 0.02$ applied to a subset of CIFAR2.
For the next row we set the batch size to 100, while keeping the number of gradient updates.
As we observe, applying a stochastic gradient descent instead of the full-batch one does not introduce any qualitative changes.
For the third row we take a full CIFAR2 (with training size being 10000 instead of 1000), while keeping the batch size to be 1000.
It is hard to spot any qualitative changes in this setup as well.
For the bottom row we increase the number of epochs (training steps) by the factor of 10, while keeping the rest of the options.
In this case all plots change, which is expected since 50 epochs of the original setup is not enough for convergence of training procedure.
As we observe in the center column, in this case some of the terms of decomposition (\ref{eq:f_decomposition_1hid_appendix}) do not obey power-laws but converge to power-laws for large $d$.

We also consider a multi-class classification instead of a binary one: see Figure \ref{fig:mnist_1hid_sgd_test_loss_and_var_f}.
The top row corresponds to the usual scenario of a binary classification on a subset of CIFAR2 of size 1000; it is given for the reference.
The middle row corresponds to the same scenario but on a subset of MNIST of size 1000; MNIST has 10 classes instead of two.
Comparing these two scenarios does not reveal any qualitative changes.

The bottom row corresponds to the most realistic scenario among the ones we have considered.
Here we train a one hidden layer network on a full MNIST dataset for 6000 gradient steps using a mini-batch gradient descent with batches of size 100.
With this number of epochs the optimization process nearly converges.
As we see, for this scenario the maximum width $d = 2^{16} = 65536$ we were able to afford was not enough to reach the asymptotic regime fully (center column).
This is the reason for discrepancies between numerical estimates of exponents of decomposition (\ref{eq:f_decomposition_1hid_appendix}) terms and their theoretical values (right column).

\begin{figure*}[t]
    \centering
    \includegraphics[width=0.29\textwidth]{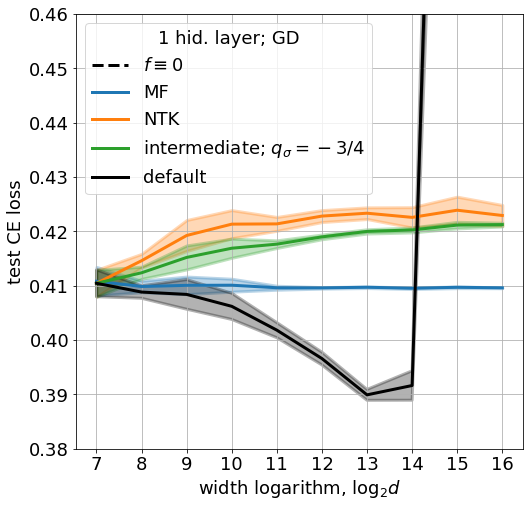}
    \includegraphics[width=0.29\textwidth]{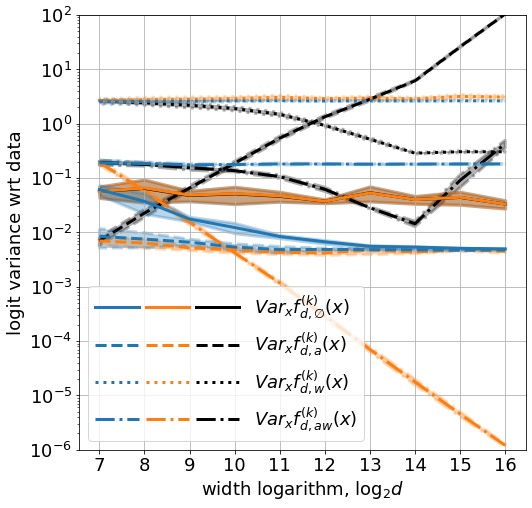}
    \includegraphics[width=0.29\textwidth]{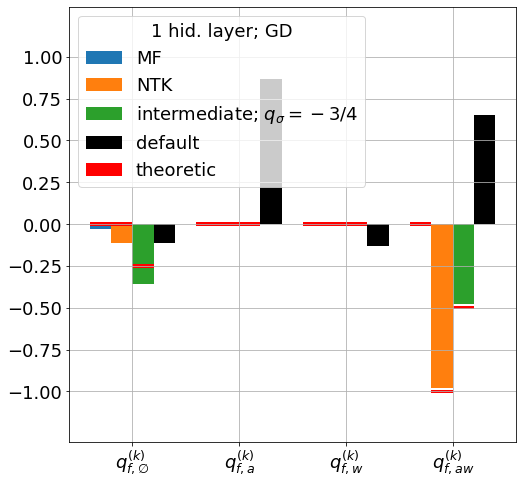}
    \\
    \includegraphics[width=0.29\textwidth]{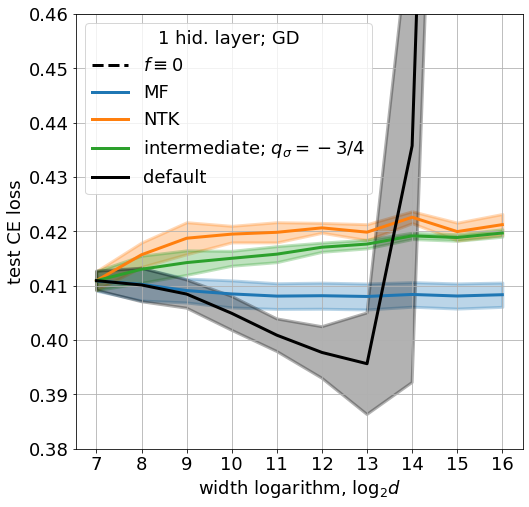}
    \includegraphics[width=0.29\textwidth]{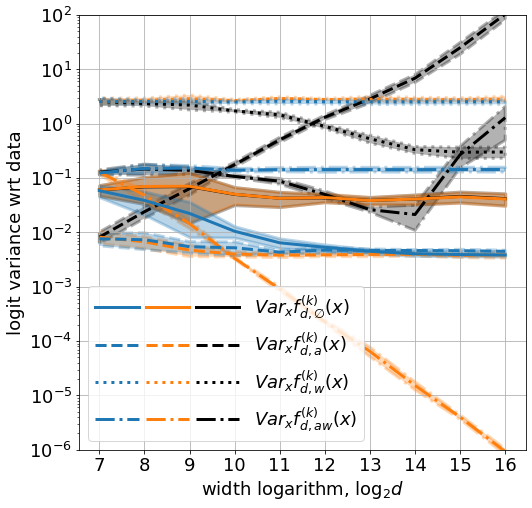}
    \includegraphics[width=0.29\textwidth]{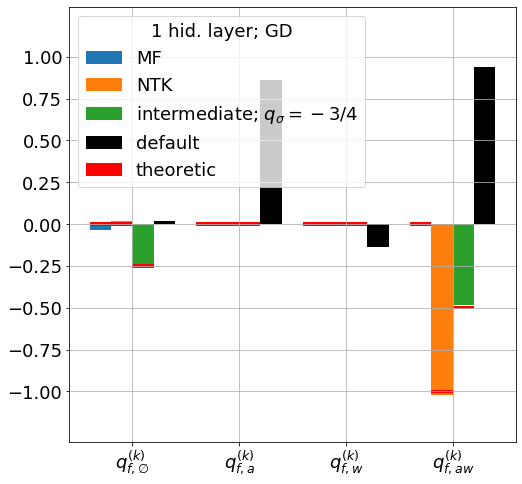}
    \\
    \includegraphics[width=0.29\textwidth]{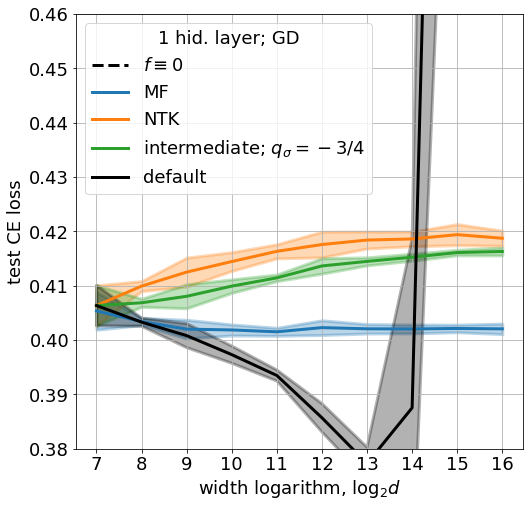}
    \includegraphics[width=0.29\textwidth]{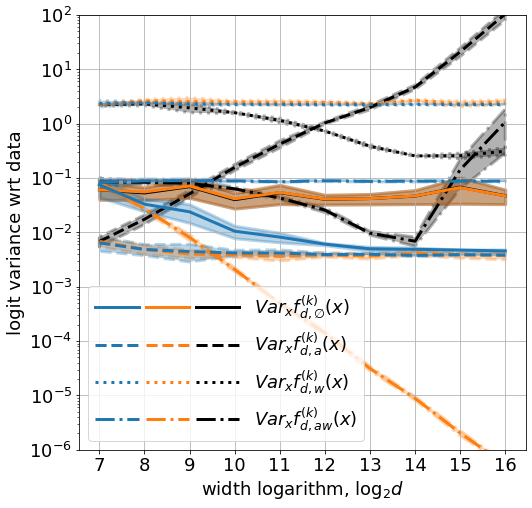}
    \includegraphics[width=0.29\textwidth]{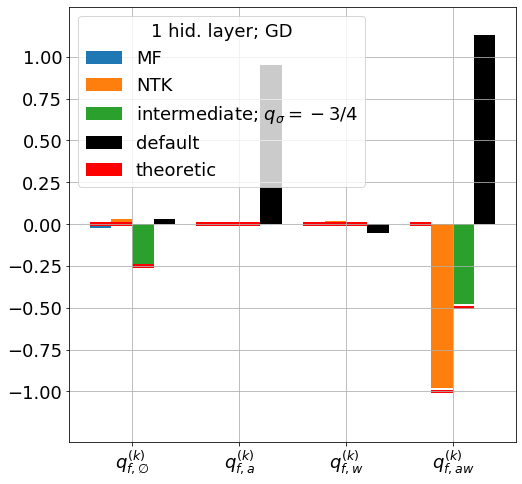}
    \\
    \includegraphics[width=0.29\textwidth]{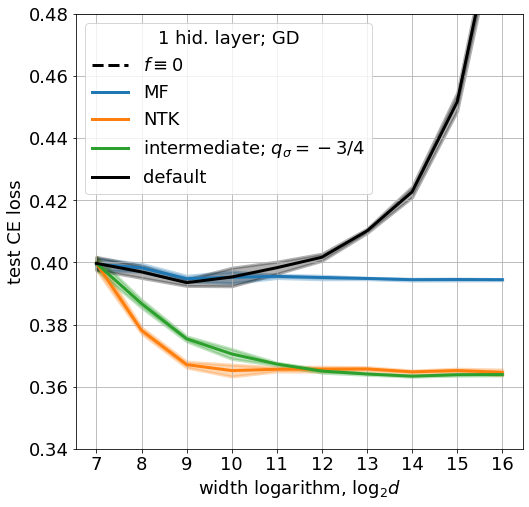}
    \includegraphics[width=0.29\textwidth]{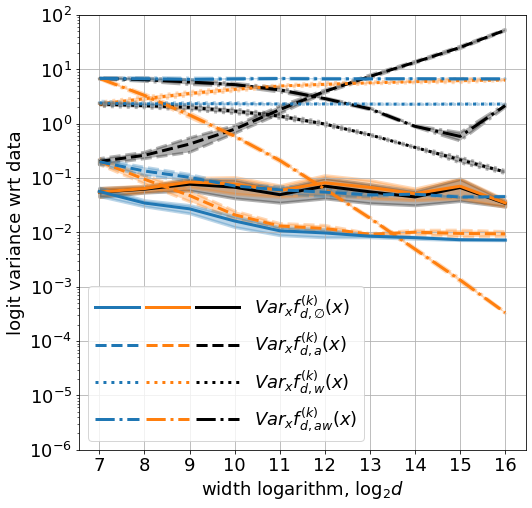}
    \includegraphics[width=0.29\textwidth]{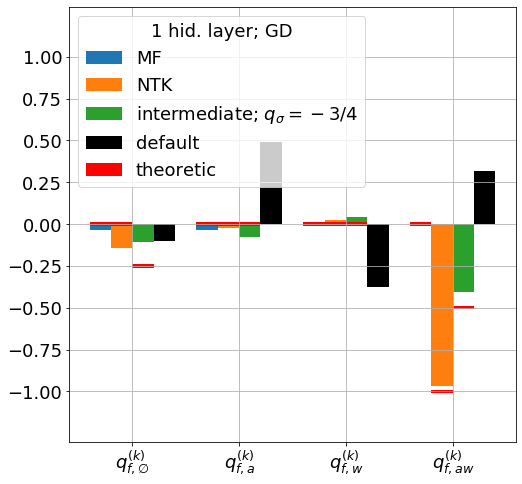}
    \caption{\textbf{Applying a mini-batch instead of a full batch gradient descent does not introduce any qualitative changes. The same holds for training on a larger dataset.} \textit{Top row:} scaling a reference network trained with a full-batch GD with (unscaled) learning rates $\eta_a^* = \eta_w^* = 0.02$ for 50 gradient steps on a subset of CIFAR2 (a dataset of first two classes of CIFAR10) of size 1000. \textit{Second row:} same with a mini-batch GD with batches of size 100. \textit{Third row:} same as the top row but on a full CIFAR2 (10000 training samples) with the mini-batch GD with batches of size 1000. \textit{Bottom row:} same as the top row but with 500 gradient steps. \textit{Left:} a final test cross-entropy (CE) loss as a function of width $d$. \textit{Center:} variance with respect to the data distribution for terms of model decomposition (\ref{eq:f_decomposition_1hid_appendix}) as a function of width $d$. \textit{Right:} numerical estimates for exponents of decomposition (\ref{eq:f_decomposition_1hid_appendix}) terms, as well as their theoretical values (denoted by red ticks). See SM~\ref{sec:appendix_experiments} for further details.}
    \label{fig:cifar2_1hid_sgd_test_loss_and_var_f}
\end{figure*}

\begin{figure*}[t]
    \centering
    \includegraphics[width=0.29\textwidth]{images/cifar2_1hid_lr=002_lrelu_sgd_batch_size=1000_test_loss.png}
    \includegraphics[width=0.29\textwidth]{images/cifar2_1hid_lr=002_lrelu_sgd_batch_size=1000_var_f_decomp_wo_intermediate_d_ref=128.png}
    \includegraphics[width=0.29\textwidth]{images/cifar2_1hid_lr=002_lrelu_sgd_batch_size=1000_q_f_wo_intermediate.png}
    \\
    \includegraphics[width=0.29\textwidth]{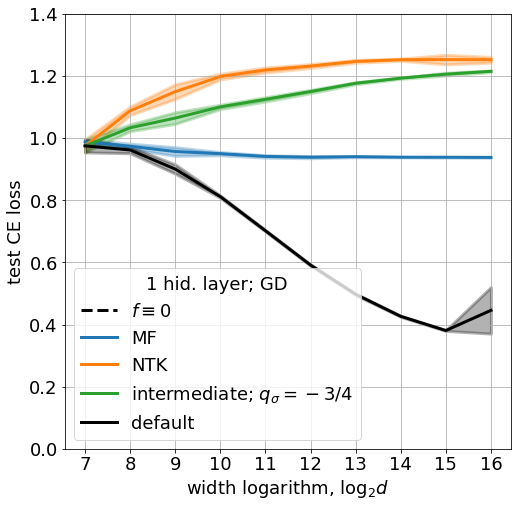}
    \includegraphics[width=0.29\textwidth]{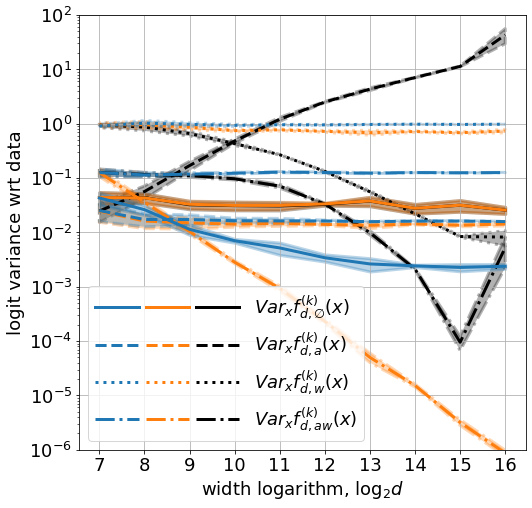}
    \includegraphics[width=0.29\textwidth]{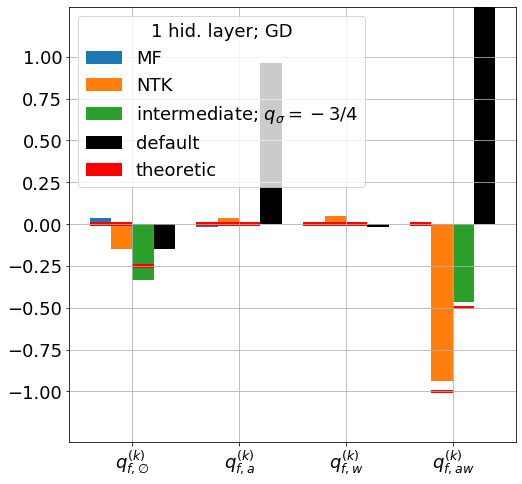}
    \\
    \includegraphics[width=0.29\textwidth]{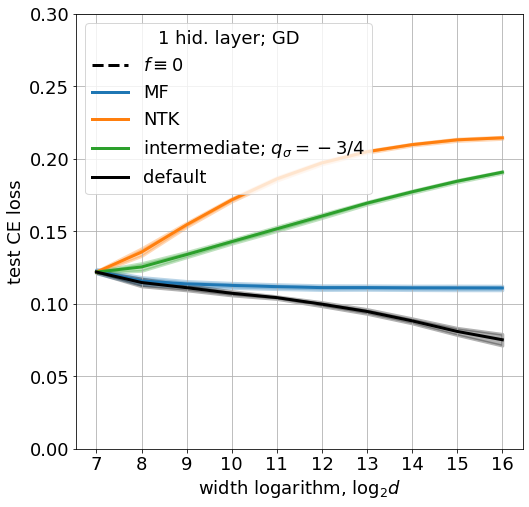}
    \includegraphics[width=0.29\textwidth]{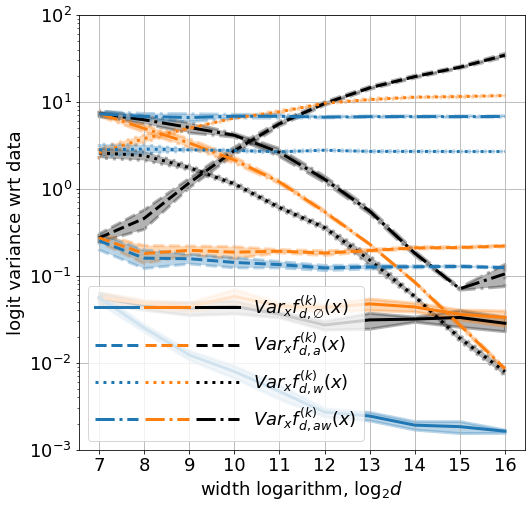}
    \includegraphics[width=0.29\textwidth]{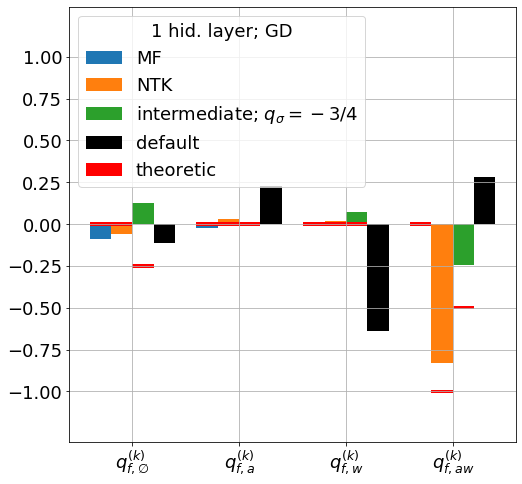}
    \caption{\textbf{Considering a multi-class classification instead of a binary one does not introduce any qualitative changes.} \textit{Top row:} scaling a reference network trained with a full-batch GD with (unscaled) learning rates $\eta_a^* = \eta_w^* = 0.02$ for 50 gradient steps on a subset of CIFAR2 (a dataset of first two classes of CIFAR10) of size 1000. \textit{Middle row:} same for a subset of MNIST of size 1000. \textit{Bottom row:} scaling a reference network trained with SGD using batches of size 100 with (unscaled) learning rates $\eta_a^* = \eta_w^* = 0.02$ for 6000 gradient steps on MNIST dataset. \textit{Left:} a final test cross-entropy (CE) loss as a function of width $d$. \textit{Center:} variance with respect to the data distribution for terms of model decomposition (\ref{eq:f_decomposition_1hid_appendix}) as a function of width $d$. \textit{Right:} numerical estimates for exponents of decomposition (\ref{eq:f_decomposition_1hid_appendix}) terms, as well as their theoretical values (denoted by red ticks). See SM~\ref{sec:appendix_experiments} for further details.}
    \label{fig:mnist_1hid_sgd_test_loss_and_var_f}
\end{figure*}

\end{document}